\documentclass{article}

    \PassOptionsToPackage{numbers, compress}{natbib}



\usepackage[final]{neurips_2025}


\usepackage[utf8]{inputenc} 
\usepackage[T1]{fontenc}    
\usepackage[pagebackref=false]{hyperref}
\usepackage{url}            
\usepackage{booktabs}       
\usepackage{amsfonts}       
\usepackage{nicefrac}       
\usepackage{microtype}      
\usepackage{xcolor}         

\usepackage{algorithm}
\usepackage[noend]{algpseudocode}
\usepackage{amsmath, amssymb, amsfonts, amsthm}
\usepackage{bbm}
\usepackage{mathtools}
\usepackage{graphicx,wrapfig,lipsum}
\usepackage{bm}
\usepackage{bbm}
\usepackage{tikz}
\usetikzlibrary{arrows.meta, positioning, quotes}
\usepackage{wrapfig}
\usetikzlibrary{backgrounds}
\usepackage{annotate-equations}
\usepackage{subcaption}
\usepackage{xspace}
\usepackage{xfrac}
\usepackage{multirow}
\usepackage{pifont}
\newcommand{\cmark}{\ding{51}}%
\newcommand{\xmark}{\ding{55}}%

\usepackage[toc,page,header]{appendix}
\usepackage{minitoc}
\mtcsetrules{parttoc}{off}

\usepackage[acronym]{glossaries}
\newacronym{ood}{OOD}{out-of-distribution}

\theoremstyle{plain}
\newtheorem{theorem}{Theorem}[section]
\newtheorem*{theorem*}{Theorem}

\newtheorem*{corollary*}{Corollary}
\theoremstyle{definition}
\newtheorem{example}[theorem]{Example}
\newtheorem{definition}[theorem]{Definition}

\theoremstyle{remark}

\algnewcommand\algorithmicforeach{\textbf{for each}}
\algdef{S}[FOR]{ForEach}[1]{\algorithmicforeach\ #1\ \algorithmicdo}
\newcommand{\evalname}{Orthotopic\xspace}
\newlength{\figheight}
\setlength{\figheight}{1cm}

\usepackage{xcolor}

\definecolor{ForestGreen}{RGB}{34,139,34}

\captionsetup[table]{skip=8pt}

\newlength{\nodesep}
\setlength{\nodesep}{0cm} 
\newlength{\innersep}
\setlength{\innersep}{2pt}  
\newlength{\sep}
\setlength{\sep}{1.3cm}    

\newcommand{\ysep}{1.3cm}

\tikzset{
    square1/.style={draw, fill=blue!30!white, minimum width=1.cm, minimum height=.7cm},
    square2/.style={draw, fill=orange!30!white, minimum width=1.cm, minimum height=.7cm},
    square3/.style={draw, fill=gray!50, minimum width=1.cm, minimum height=.7cm},
    true/.style = {circle, draw=none, thick, fill=green!50, minimum width=.5cm, minimum height=.5cm, inner sep=0pt},
    false/.style = {circle, draw=none, thick, fill=red!50, minimum width=.5cm, minimum height=.5cm, inner sep=0pt},
    plain/.style = {circle, draw, thick, fill=none, minimum width=.5cm, minimum height=.5cm, inner sep=0pt},
    arrowstyle/.style={->, thick, rounded corners},
    arrowstyle1/.style={->, thick, rounded corners, blue!90!black!90},
    arrowstyle2/.style={->, thick, rounded corners, orange!90!black!90},
    arrowstyledash/.style={->, thick, rounded corners, dash pattern=on 2pt off .6pt, gray},
    arrowstyledashbig/.style={->, line width=3pt, rounded corners, dash pattern=on 2pt off 3.pt, gray},
    arrowstyledash1/.style={->, thick, rounded corners, dash pattern=on 2pt off .6pt, blue!90!black!90},
    arrowstyledash2/.style={->, thick, rounded corners, dash pattern=on 2pt off .6pt, orange!90!black!90},
    adjweight/.style={->, dashed, line width=.1mm, color=blue!50},
    adjweightbig/.style={->, line width=.6mm, color=blue!50},
    operation/.style={draw, thick, fill=blue!20, minimum size=0.6cm,  rounded corners=5pt, inner sep=1pt, outer sep=0pt, align=center},
    observed/.style={circle, draw, thick, fill=gray!50, minimum size=.7cm},
    unobserved/.style={circle, draw, thick, minimum size=.7cm},
    blank/.style={circle, draw, thick, dotted, minimum size=.7cm, opacity=0.5},
}

\newcommand{\attributeInvariance}{
\begin{tikzpicture}[
    node distance = .4cm,
]

\node[inner sep=0pt] (apples) {  
\Huge
$f_{\text{shape}}\left(\raisebox{-0.3\height}{\includegraphics[height=2em]{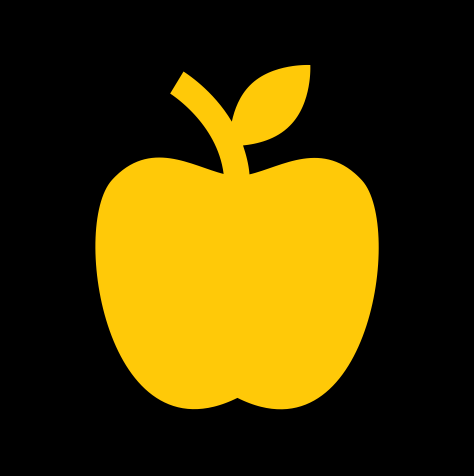}}\right) = f_{\text{shape}}\left(\raisebox{-0.3\height}{\includegraphics[height=2em]{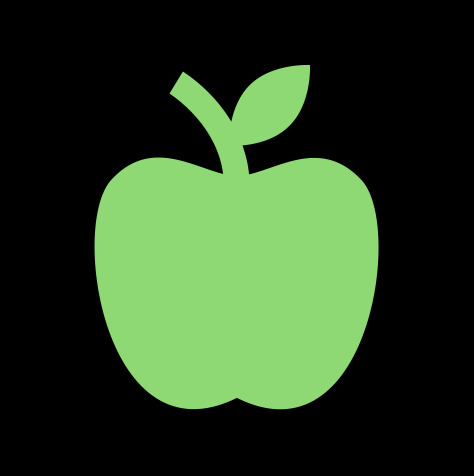}}\right)$
};

\node[inner sep=0pt] (colors) [below=1cm of apples] {  
\Huge
$f_{\text{shape}}\left(\raisebox{-0.3\height}{\includegraphics[height=2em]{figs/icons/apple_y.png}}\right) \neq f_{\text{shape}}\left(\raisebox{-0.3\height}{\includegraphics[height=2em]{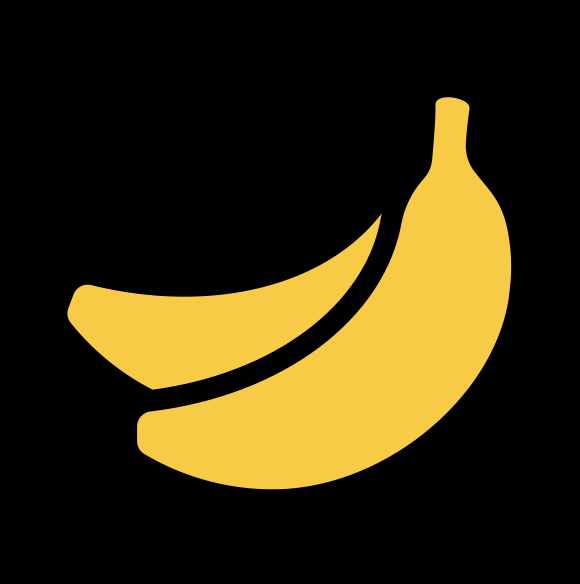}}\right)$
};

\node (apple1) at (-2.1,1.5) {\huge $\mathbf{x}$};
\node (apple2) at (4.3,1.5) {\huge $\mathfrak{g}.\mathbf{x}$};
\draw[arrowstyledashbig] (apple1) to[bend left=40] node[midway, below, yshift=-10pt] {\Huge $\mathfrak{g} \in \mathfrak{G}_{\text{color}}$} (apple2);

\node (apple3) at (-2.1,-4.5) {\huge $\mathbf{x}$};
\node (apple4) at (4.3,-4.5) {\huge $\mathfrak{g}.\mathbf{x}$};
\draw[arrowstyledashbig] (apple3) to[bend right=40] node[midway, above, yshift=10pt] {\Huge $\mathfrak{g} \in \mathfrak{G}_{\text{shape}}$} (apple4);

\end{tikzpicture}
}

\newcommand{\ain}{
\begin{tikzpicture}[
    node distance = 0cm,
]

\node[inner sep=0pt] (appley) {\includegraphics[width=1cm]{figs/icons/apple_y.png}};
\node[above=0cm of appley.north, align=center] (label) {
    \textbf{Input}
};

\node[square1] (h1) [right=0cm of appley, yshift=\ysep] {$h_{\text{shape}}$};
\node[square2] (h2) [right=0cm of appley, yshift=-\ysep] {$h_{\text{color}}$};
\node[square3] (m) [right=1.5cm of appley] {$m$};
\node[square1] (g1) [right=2.9cm of appley, yshift=\ysep] {$g_{\text{shape}}$};
\node[square2] (g2) [right=2.9cm of appley, yshift=-\ysep] {$g_{\text{color}}$};
\node[square1] (l1) [right=.7cm of g1] {$\mathcal{L}_{\text{shape}}$};
\node[square2] (l2) [right=.7cm of g2] {$\mathcal{L}_{\text{color}}$};

\draw[arrowstyle1] (appley) -- (h1);
\draw[arrowstyle2] (appley) -- (h2);
\draw[arrowstyle1] (h1) -- (m);
\draw[arrowstyle2] (h2) -- (m);
\draw[arrowstyle1] (m) -- (g1);
\draw[arrowstyle2] (m) -- (g2);
\draw[arrowstyle1] (g1) -- (l1);
\draw[arrowstyle2] (g2) -- (l2);

\draw[arrowstyledash1] (l1) to[bend left=20] node[midway, below] {$\frac{\partial \mathcal{L}_{\text{shape}}}{\partial g_{\text{shape}}}$} (g1);
\draw[arrowstyledash1] (g1) to[bend left=20] node[near end, below right] {$\frac{\partial g_{\text{shape}}}{\partial m_{\text{shape}}}$} (m);
\draw[arrowstyledash1] (m) to[bend right=20] node[near end, above right, xshift=-5pt] {$\frac{\partial m_{\text{shape}}}{\partial h_{\text{shape}}}$} (h1);
\draw[arrowstyledash2] (m) to[bend left=20] node[near end, below right, xshift=-5pt] {$\frac{\partial m_{\text{shape}}}{\partial h_{\text{color}}} = 0$} (h2);


\end{tikzpicture}
}

\newcommand{\inlineimg}[1]{%
  \smash{\raisebox{-.2\height}{\includegraphics[height=1.2em]{figs/icons/#1.png}}}%
}

\title{Scalable Evaluation and Neural Models \\ for Compositional Generalization}

\author{
Giacomo Camposampiero\\
IBM Research -- Zurich, ETH Zurich\\
{\tt\small giacomo.camposampiero1@ibm.com}
\And
Pietro Barbiero\\
IBM Research -- Zurich\\
{\tt\small pietro.barbiero@ibm.com}
\And
Michael Hersche\\
IBM Research -- Zurich\\
{\tt\small michael.hersche@ibm.com}
\And 
Roger Wattenhofer\\
ETH Zurich\\
{\tt\small wattenhofer@ethz.ch}
\And 
Abbas Rahimi\\
IBM Research -- Zurich\\
{\tt\small abr@zurich.ibm.com}
}

\begin{document}

\maketitle

\begin{abstract}
    Compositional generalization---a key open challenge in modern machine learning---requires models to predict unknown combinations of known concepts. 
    However, assessing compositional generalization remains a fundamental challenge due to the lack of standardized evaluation protocols and the limitations of current benchmarks, which often favor efficiency over rigor. 
    At the same time, general-purpose vision architectures lack the necessary inductive biases, and existing approaches to endow them compromise scalability.
    As a remedy, this paper introduces:
   1) a rigorous evaluation framework that unifies and extends previous approaches while reducing computational requirements from combinatorial to constant; 
    2) an extensive and modern evaluation on the status of compositional generalization in supervised vision backbones, training more than 5000 models;
    3) Attribute Invariant Networks, a class of models establishing a new Pareto frontier in compositional generalization, achieving a 23.43\% accuracy improvement over baselines while reducing parameter overhead from 600\% to 16\% compared to fully disentangled counterparts.
    Our code is available at \href{https://github.com/IBM/scalable-compositional-generalization}{github.com/IBM/scalable-compositional-generalization}.
\end{abstract}

\addtocontents{toc}{\protect\setcounter{tocdepth}{-1}}

\section{Introduction}
Robustness to distributional shifts is a long-standing research topic in machine learning, spanning areas such as domain adaptation~\cite{wang_deep_2018,redko_advances_2019}, transfer learning~\cite{zhuang_comprehensive_2021}, and \gls*{ood} generalization~\cite{liu_towards_2021}.
In this work, we focus on a specific type of \gls*{ood} generalization, that is, compositional generalization.
Informally, this type of generalization requires the model to generalize to \textit{unknown} combinations of \textit{known} concepts. 
For instance, given a dataset of pairs (input, \{concepts labels\}), a model trained on the samples \{(\inlineimg{appley}, \{apple, yellow\}), (\inlineimg{bananag}, \{banana, green\})\} should be able to correctly predict \{apple, green\} for the test input image \inlineimg{appleg}.
Achieving and even assessing this type of generalization remains, however, challenging.
To date, a standardized evaluation methodology for assessing compositional generalization has not been established. 
This renders the evaluation of existing results hard and prevents a consistent quantification of progress in the field.
Furthermore, current state-of-the-art (SOTA) evaluation frameworks often trade off efficiency with shallowness, usually testing compositional generalization under weaker constraints to avoid a combinatorial complexity explosion in the number of target attributes~\cite{schott_visual_2022,elberg_comunication_2025}.
On the other hand, general-purpose SOTA vision architectures typically lack inductive biases for compositionality and disentanglement, resulting in limited generalization in both supervised and unsupervised settings~\cite{schott_visual_2022,montero_role_2021,montero_lost_2022}. 
Existing methods for incorporating such biases often rely on factorizing the prediction task across multiple specialized models~\cite{madan_when_2022}, which significantly reduces scalability and limits practicality even in simple domains.

To tackle these issues, this work makes the following main contributions:
\begin{enumerate}
    \item In Section \ref{sec:ortho}, we present a universal evaluation framework for compositional generalization in supervised learning. This evaluation strategy unifies prior strategies, improves efficiency by reducing complexity from combinatorial to constant, and introduces a controllable degree of freedom in the evaluation that significantly affects model performance and enables a principled hierarchy of evaluation difficulty.
    \item We extensively validate the proposed benchmarking method against alternative methodologies and previous works. As part of the validation, we train more than 5000 SOTA vision models, representing the most extensive and up-to-date evaluation on compositional generalization for supervised models.
    \item Motivated by the results in Section \ref{sec:ortho}, we introduce in Section \ref{sec:ain} a new class of neural architectures, Attribute Invariant Networks (AINs), that favor compositional generalization by construction.
    We show that AINs achieve a new Pareto frontier in the scalability-generalization tradeoff, significantly improving compositional generalization of standard architectures with only a minimal parameter overhead.
\end{enumerate}

\section{Background}
\begin{wrapfigure}{r}{5.5cm}
\includegraphics[width=5.5cm]{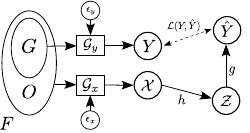}
\caption{Theoretical setup.}\label{fig:setup}
\end{wrapfigure} 
\paragraph{Formalizing compositional generalization.}
As \citet{ren_improving_2023}, we focus on the setting presented in Figure \ref{fig:setup}, considering an arbitrary set of $N$ observations $\mathbf{X}\in \mathbb{R}^{N\times D}$ and their corresponding downstream task labels $\mathbf{Y}\in \mathbb{R}^{N\times E}$.
We assume that the observations are generated from a finite set of discrete factors $\mathbf{F}$, in turn divided into $I$ task-relevant factors $\mathbf{G} \subseteq \mathbf{F}$ (for which there exists a causal relationship with the downstream task) and $J$ task-irrelevant factors $\mathbf{O} = \mathbf{F} \setminus \mathbf{G}$.
Each observation $\mathbf{x} \in \mathbf{X}$ is produced by a deterministic function $\mathcal{G}_x : (\mathbf{F}, \epsilon_x) \rightarrow  \mathbb{R}^D$, mapping $\mathbf{F}$ and $\epsilon_x$ ($\perp$ noise accounting for minor input variations, e.g., pixel-wise variations) into the data manifold.
Labels $\mathbf{y}$ are generated by a deterministic function $\mathcal{G}_y: (\mathbf{G},\epsilon_y)\rightarrow \mathbb{R}^E$, encoding the semantics of the task and parametrized by $\mathbf{G}$ and $\mathbf{\epsilon}_y$ ($\perp$ noise accounting for label issues).
%

\begin{definition}[Compositional generalization]
\label{def:compgen}
Compositional generalization emerges when the supports of the training and testing distributions over compositions of factors, $P_{train}(\mathbf{G})$ and $P_{test}(\mathbf{G})$, are disjoint.
More formally, $\text{supp}[P_{train}(\mathbf{G})] \cap \text{supp}[P_{test}(\mathbf{G})] = \emptyset$.
The production rules $\mathcal{G}_x$ and $\mathcal{G}_y$ are assumed to be consistent across training and testing.
\end{definition}

\begin{example}
\label{example:main}
Consider the Shapes3D sample contained in the yellow corner of Figure \ref{fig:intuition}.
The generative factors of the dataset are $\mathbf{F}=(\text{size, shape, object hue, floor hue, wall hue})$.
Here, $\mathbf{x}$ is the $64\times 64$ RGB image, while $\mathbf{y}=(\text{small, cube, blue object)}$.
Hence, $\mathbf{G}=(\text{size, shape, object hue})$ and $\mathbf{O}=\mathbf{F} \setminus \mathbf{G}=(\text{floor hue, wall hue})$.
$\mathbf{x}$ represents a compositional generalization test example iff $P_{train}(\text{small, cube, blue object})=0$.
\end{example}

\paragraph{Representation learning.}
\label{sec:reprlearn}
We consider the classical representation learning framework (Fig.~\ref{fig:setup}).
A backbone model $h: \mathbb{R}^D \rightarrow \mathbb{R}^S$ is used to extract a compressed representation of the input  ($\mathbf{z}$), and a classification head $g:\mathbb{R}^S\rightarrow \mathbb{R}^E$ is used on top of the extracted representations to solve the downstream task.
The output is predicted by the composition of these two functions, $\hat{y}=(g\circ h)(\mathbf{x})=f(\mathbf{x})$.
This model is trained by minimizing the empirical risk $\hat{R}= \sfrac{1}{N} \sum_{i=1}\nolimits^N \mathcal{L} \left(\mathbf{Y}_i, f(\mathbf{X}_i)\right)$.

\section{A novel framework for evaluating compositional generalization}
\label{sec:ortho}

\begin{figure}[b!]
     \centering
     \captionsetup{justification=centering}
     \begin{subfigure}[t]{.38\textwidth}
         \centering
         \includegraphics[width=\textwidth]{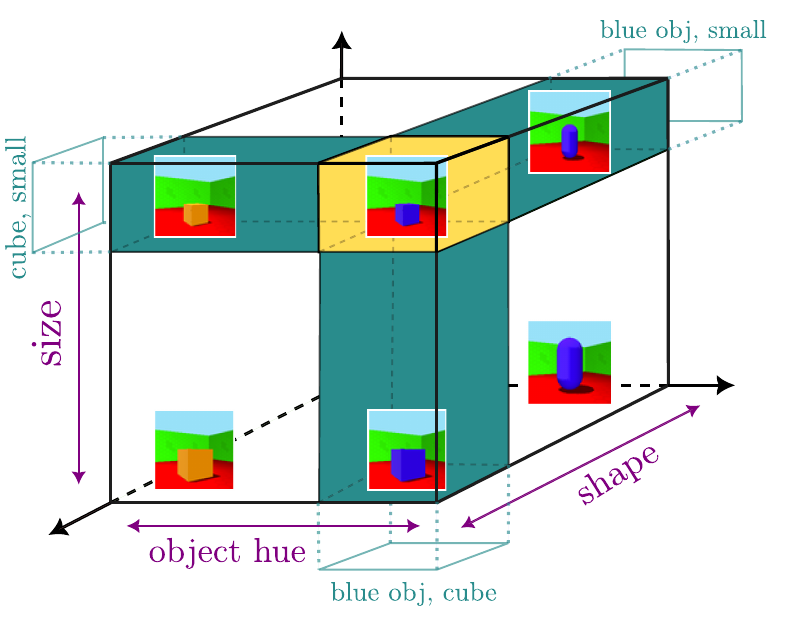}
         \caption{Orthotopic evaluation intuition.}
         \label{fig:intuition}
     \end{subfigure}
     \hfill
      \raisebox{12pt}{
      \begin{subfigure}[t]{.28\textwidth}
         \centering
         \includegraphics[width=\textwidth]{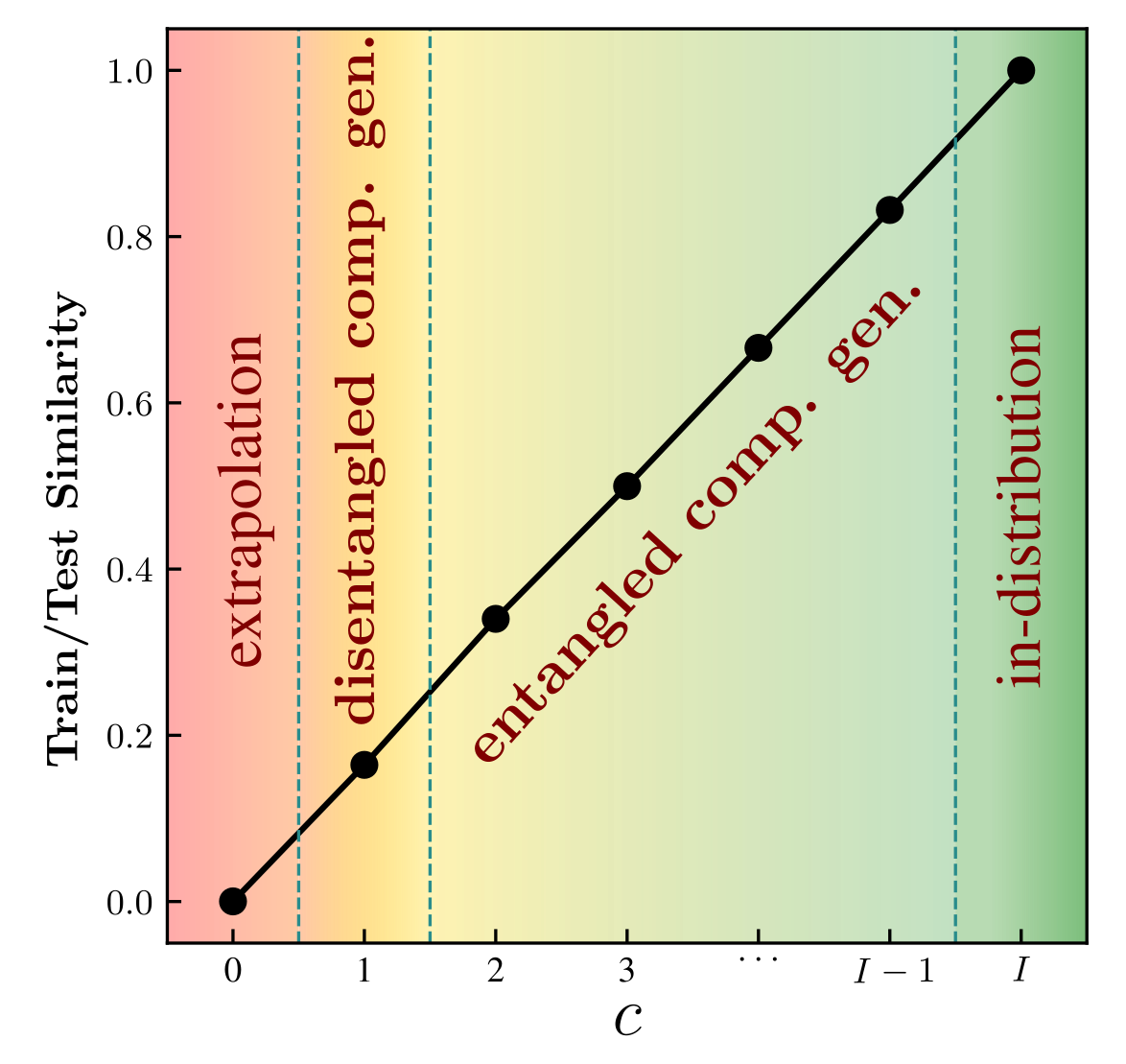}
         \caption{Ladder of compositional evaluation difficulty.}
         \label{fig:ladder}
     \end{subfigure}}
     \hfill
      \raisebox{12pt}{
     \begin{subfigure}[t]{.28\textwidth}
         \centering
         \includegraphics[width=\textwidth]{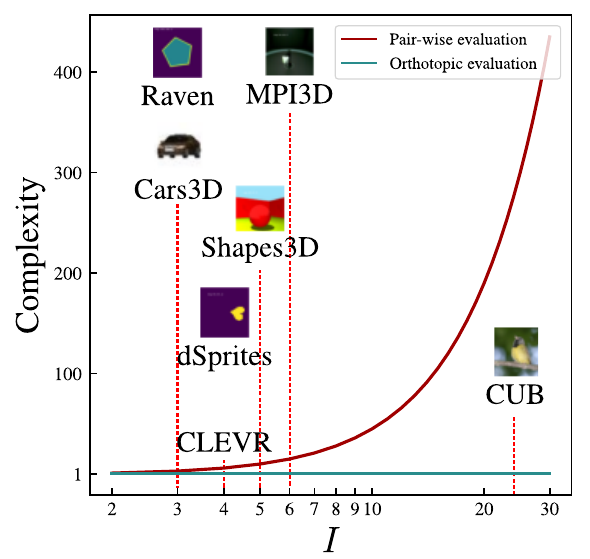}
         \caption{Orthotopic evaluation efficiency.}
         \label{fig:efficiency}
     \end{subfigure}}
     \captionsetup{justification=justified}
        \caption{\textbf{\evalname evaluation for compositional generalization.}
        Intuitively, orthotopic OOD split generation works by iteratively projecting and pruning the data in every $c$-dimensional attribute's subspace. We exemplify this in \textbf{(a)} for the dataset Shapes3D, where we consider only $I=3$ attributes for simplicity. We highlight the disentangled compositional split ($c=1$, green+yellow) and the entangled compositional split ($c=2$, yellow).
        \textbf{(b)} pictures the proposed ladder of compositional evaluation difficulty, showing the dependence between the $c$ parameter and the similarity between train and test generative factors.
        This delineates a ladder of different difficulties of compositional evaluation, spanning from extrapolation to in-distribution regimes. 
        Finally, \textbf{(c)} shows the computational advantage of the proposed evaluation technique compared to the naïve pair-wise evaluation strategy.
        }
        \label{fig:orthotopic}
\end{figure}
We propose a general and scalable framework to evaluate compositional generalization and disentanglement in fully supervised settings (where the generative factors $\mathbf{G}$ are known and labeled) with variable degrees of difficulty (dictated by the similarity between train and test observations).
Additionally, we define a dataset-agnostic, efficient procedure (dubbed orthotopic evaluation) for decomposing $\mathbf{G}$ in two splits ($train$ and $test$), such that $P_{train}$ and $P_{test}$ satisfy Definition \ref{def:compgen}.

\subsection{Method}
\paragraph{An extended definition of compositional generalization.}
For any $\mathbf{y}_{1} \in P_{train}, \mathbf{y}_{2} \in P_{test}$, Definition \ref{def:compgen} constrains the number of task-relevant factors shared between the two observations, $\kappa =  \max_{\mathbf{y}_{1},\mathbf{y}_{2}} \sum_{i=1}^I \mathbb{I}_{\mathbf{y}_{1,i}=\mathbf{y}_{2,i}}$, to be $\kappa \leq I-1$.
However, the definition does not define any lower bound for $\kappa$, which can take any value in $[0, I-1]$.
This ambiguity makes it unsuitable for benchmarking purposes, as it could potentially result in inconsistent evaluation protocols and hinder direct comparability across works.
Previous works, in fact, already fell victim to this: some works \cite{montero_role_2021,montero_lost_2022} considered the scenario where $\kappa=1$, whereas others \cite{schott_visual_2022,elberg_comunication_2025} focused on $\kappa=I-1$.
To solve this, we propose a more complete definition of compositional generalization.
\begin{definition}[Compositional Generalization (complete)]
\label{def:compgen_new}
Consider two data distributions, $P_{train}$ and $P_{test}$.
For any $\mathbf{y}_{2} \in P_{test}$ and its closest sample in the training data $\mathbf{y}_{1} \in P_{train}$, compositional generalization emerges when $c\leq\kappa\leq I-1$.
In particular, $c \in [0, I]$, dubbed here as \textit{compositional similarity index}, is a hyper-parameter of the evaluation setup.
\end{definition}
Intuitively, $c$ influences the degree of similarity between known and unknown compositions of concepts and, indirectly, the volume of the concept space excluded from the training data.
Consider, for instance, Figure \ref{fig:intuition}.
When $c=I-1=2$, a corner of the hypercube (yellow orthotope) is excluded from the training data and used for testing; all the samples in this volume share at most two task-relevant factors with the closest samples observed during training.
If $c=1$, on the other hand, both the yellow and green orthotopes are excluded from training and used for testing; hence, some test samples share two attributes with the closest training examples (green orthotopes), and some share only one (yellow corner).
The characterization of $c$ is, to some extent, equivalent to the concept distance metric introduced by \citet{okawa_compositional_2023}.
Other than $c$, we also identify the existence of additional degrees of freedom in the evaluation, such as the size of the excluded volumes or their position in the space (see Appendix \ref{app:degrees}), but we consider them fixed in the context of this work.

\paragraph{A new ladder for compositional generalization difficulty.}
\label{sec:ladder}
We hypothesize that the newly introduced compositional similarity index directly influences the difficulty of the evaluation task.
Intuitively, larger $c$ values increase the similarity between known and unknown combinations of concepts, rendering test samples ``almost in-distribution''.
In addition, larger $c$ values increase the overall combinations of concepts accessible during training, in turn enhancing data variety (known to be vital for training robust and generalizing neural networks~\cite{bengio_representation_2013,zhang_understanding_2021}).
Based on these observations, we postulate the existence of a ladder of compositional evaluation difficulty, presented in Figure \ref{fig:ladder}.
In particular, we outline a discrete spectrum of evaluation complexity levels corresponding to different $c$ values, spanning from extrapolation (generalization to unseen attribute values, $c=0$) to compositional generalization ($1\leq c\leq I-1$) and, eventually, in-distribution generalization ($c=I$).

We further distinguish between two different types of generalization within the compositional case: \emph{disentangled compositional generalization} ($c=1$) and \emph{entangled compositional generalization} ($2\leq c\leq I
-1$).
The difference between the two is subtle.
In entangled compositional generalization, models can still generalize despite entangling the information of a subset of $n$ task-relevant generative factors in their latent space, as long as the evaluation is performed for $c\geq n$.
Consider the illustrative example in Figure \ref{fig:intuition}. 
If the model learns entangled representations of the \{shape, size\} concepts, it may still succeed on a test instance with $c = 2$ (i.e., the sample in the yellow volume is used for testing) because the attributes entangled by the model are observed during training (e.g., in the small orange cube, top-left). 
However, when $c < n$, the specific combination of attributes is, by construction, absent from the training set, rendering generalization impossible for models relying on entangled representations.
Disentangled compositional generalization ($c=1$), on the other hand, strictly operates in the latter condition.
This allows us to strictly evaluate both compositionality (that is, the ability to combine known concepts) and disentanglement (that is, the ability to separate distinct factors of variation independently).
For more details, see Appendix \ref{app:ladderext}.

\paragraph{Scalable compositional evaluation procedure.}
How can we translate the principles previously introduced into an algorithm that generates compositional evaluation splits?
A naïve procedure, already explored to some extent in previous works~\cite{montero_role_2021,montero_lost_2022} and referred to here as \textit{pair-wise evaluation}, allows for evaluating compositional generalization by considering only $c$ factors at a time.
While effective, this protocol is not scalable: it requires training and evaluating a different model for each $s \in \mathcal{S}$, where $\mathcal{S}$ is the set of all $c$-dimensional subspaces of the original $I$-dimensional task-relevant concept manifold.
That is, the evaluation complexity is $\Theta(I^c)$.
In this work, we propose an alternative procedure, dubbed \textit{orthotopic evaluation}, whose complexity is $\Theta(1)$.
The pseudo-code is included in Algorithm \ref{alg:split-generation} and encompasses different stages.
Firstly, the set of possible combinations of $c$  factors in $\mathbf{G}$ (representing all the possible $c$-dimensional subspaces of the attributes' hyperspace) is computed (Line 2).
%
Then, the dataset is iteratively projected and pruned in each one of these $c$-dimensional subspaces (Lines 3-5), as intuitively visualized in Figure \ref{fig:intuition}.
The pruning operation (Line 5) is based on thresholds dynamically computed based on the percentage of the excluded orthotope's support along each dimension, summarized in the pseudo-code by the primitive \texttt{exclusion} (see Appendix \ref{sec:ortho_example} for an example).
Compared to pair-wise, orthotopic evaluation allows for flexibly setting the operating value of $c$ (in the range $[1, I-1]$) while requiring only a single training run per model.
Figure \ref{fig:efficiency} shows an efficiency comparison between the two evaluation strategies, comparing the number of training runs required for different numbers of task-relevant generative factors.
%

\algnewcommand{\LineComment}[1]{\State \(\triangleright\) #1}
\begin{algorithm}[h!]
\caption{Orthotopic split generation.}\label{alg:split-generation}
    \begin{algorithmic}[1]
    \Procedure{SplitDataset}{$\mathbf{X}$, $\mathbf{G}$, $c$} 
    \State $\mathcal{S} = \binom{\mathbf{G}}{c+1} = \left\{ \mathbf{S} \subseteq \mathbf{G} : |\mathbf{S}|=c+1 \right\}$
    \Comment{Get all combinations of $c$ factors in $\mathbf{G}$.}
    \ForEach {$s \in \mathcal S $}
    \Comment{Iterative orthotope pruning.}
    \State  $\mathbf{X}_{proj}= \textbf{proj}_s \mathbf{X}_{train}$
    \Comment{Project $\mathbf{X}$ onto the $c$-dimensional subspace $s$.}
     \State $ \mathbf{X}_{train} = \mathbf{X}_{train} \setminus (\mathbf{X}_{proj} \cap \text{exclusion}(s) )$
    \Comment{Pruning operation in the subspace $s$.}
    \EndFor
    \State \Return $\mathbf{X}_{train}, \mathbf{X}_{train}^\complement$
    \EndProcedure
    \end{algorithmic}
\end{algorithm}

\subsection{Experimental setup}
\label{sec:expsetup}

\paragraph{Datasets and OOD splits.}
We consider different vision datasets that encompass a broad spectrum of complexity and visual diversity, ranging from purely synthetic to real images.
Importantly, we only use datasets for which the annotations of the generative factors 
$\mathbf{F}$ are available.
In particular, we experiment with dSprites~\cite{matthey_dsprites_2017}, I-RAVEN~\cite{hu_stratified_2021}, Shapes3D~\cite{kim_disentangling_2018}, CLEVR~\cite{johnson_clevr_2017}, Cars3D~\cite{reed_deep_2015}, and MPI3D~\cite{gondal_transfer_2019}.
More details on the datasets can be found in Appendix \ref{app:additional_datasets}.
To create the compositional OOD splits, we fix all the degrees of freedom of the proposed orthotopic framework and only study compositional generalization for different values of the $c$ parameter.
In particular, we define attribute-wise thresholds for the values of all task-relevant generative factors in each dataset such that the percentage of excluded attribute-pairs is constant ($\sim60\%$).
This is tightly mirrored in the size of the training and testing splits, as the data samples are equally distributed among different combinations.

\paragraph{Models.}
%
%
Inspired by \citet{schott_visual_2022}, we experiment with a wide range of models that encompass different types of architectural and data inductive biases previously proposed to facilitate generalization.
%
%
Besides MLPs~\cite{schott_visual_2022}, we evaluate convolutional neural networks (CNNs) such as Residual Networks (ResNets)~\cite{he_deep_2016}, Densely Connected Convolutional Networks (DenseNets)~\cite{huang_densely_2017}, Wide Residual Networks~\cite{zagoruyko_wide_2017}, and the most recent iteration of CNN architectures, ConvNeXt~\cite{liu_convnet_2022}.
We also consider vision transformers (that incorporate the attention mechanism, enabling global dependencies modeling and contextual understanding), such as the original Vision Transformer (ViT)~\cite{dosovitskiy_image_2021} and the most recent Swin Transformer~\cite{liu_swin_2021}.
Since SOTA approaches in deep learning often rely on pre-training on large corpora prior to fine-tuning on specific target tasks, we also evaluate this different form of (data) bias.
To this end, we fine-tune pre-trained versions of some of the previously mentioned models: PyTorch's RN-101, RN-152, and DN-121 pre-trained on ImageNet-1k.
More details on the models used in our experiments can be found in Appendix \ref{app:models}.
On an orthogonal direction compared to the previous \textit{monolithic} models, we additionally investigate \textit{explicitly disentangled} (ED) architectures.
The core idea in these models is to factorize the forward and backward passes by instantiating independent sub-networks for each factor in $\mathbf{G}$.
This corresponds to the strongest inductive bias towards disentanglement that can be implemented in fully-supervised models from an architectural perspective. 
We introduce an improved version of the Shared Architecture~\cite{madan_when_2022}, which attains on-par or superior performance on the evaluated dataset compared to the original model (as shown in Appendix \ref{app:fpe}).
Specifically, our implementation constructs concept representations endowed with a similarity-preserving structure, induced via a kernel-based initialization scheme~\cite{plate_holographic_2003}.

\paragraph{Evaluation.}
We use the exact-match score (multi-label accuracy) as the main evaluation metric.
All the models are trained using a standard cross-entropy loss on all attribute labels.
The model selection is based on a $10\%$ held-out split of the training data (testing in-distribution generalization).
We also investigated alternative selection metrics specifically designed to improve the performance on compositional generalization splits; however, they did not perform better compared to the standard in-distribution validation, as shown in Appendix \ref{app:modelselection}.
All experiments were conducted on compute nodes with an AMD EPYC 7763 64-Core CPU, 2TB RAM, and an NVIDIA A100-SXM4 GPU (80GB), running on Red Hat Enterprise Linux 9.4, CUDA 12.4, and PyTorch 2.3.0+cu121.

\subsection{Results and discussion}
\label{sec:orthores}
In this section, we aim to address the following research questions on the orthotopic evaluation framework introduced in Section \ref{sec:ortho}.
\begin{itemize}
    \item [\textbf{(R1)}] To what extent does the similarity between training and testing observations, characterized by the $c$ parameter, influence compositional generalization? Furthermore, do the empirical results align with the postulated ladder of compositional evaluation difficulty?
    \item [\textbf{(R2)}] How does orthotopic evaluation compare to less efficient methodologies (e.g., pair-wise attribute evaluation) and previous works more generally?
\end{itemize}

\paragraph{$c$ significantly influences the evaluation of compositional generalization (R1).}
Figure \ref{fig:mainresults}a reports the results of the evaluation of the full range of $c$ values ($0\leq c\leq I$) on the different datasets and models considered in this work.
A general pattern appears from the results: $c$ distinctly impacts the measured compositional generalization across all the studied model families and datasets.
Notably, this behavior is entirely attributable to the similarity between training and testing concepts, as the training data size (number of observations) and variety (number of concept combinations) are fixed.
This observation entails several important implications for the empirical research on compositional generalization.
First, the specific value of $c$ used in prior studies must be retrospectively taken into account to properly contextualize and interpret their findings.
Second, in light of its significant impact on the measured generalization performance, $c$ should be systematically considered as part of future empirical investigations of compositional generalization.

\paragraph{Empirical results support the proposed ladder of evaluation difficulty (R1).}
Figure \ref{fig:mainresults}a also provides an empirical validation of the ladder of compositional evaluation difficulty proposed in Section \ref{sec:ladder}.
Starting from the two extremes of the considered intervals, we observe that on all datasets not a single model could properly generalize in the extrapolation domain ($c=0$), while (almost) every model achieved perfect accuracy in-distribution ($c=I$).
In between, compositional generalization and $c$ are in the majority of cases positively correlated.
This suggests that neural networks can leverage the increasing level of similarity between training and testing observations.
However, as the pressure towards learning disentangled representations increases (smaller $c$), the performances of these models drop sensibly.
For some datasets (e.g., dSprites and Shapes3D), it is possible to observe a clear distinction between the disentangled ($c=1$) and entangled compositional generalization ($1<c<I$) regimes; the majority of the models, in fact, perfectly generalize on the latter ($\sim 100\%$ test accuracy), but struggle on the former.
However, this difference is not as pronounced in other datasets, a sign that sometimes models could even struggle recombining information of entangled attributes.
Overall, the consistent correlation between test accuracy and compositional complexity provides empirical support for the proposed ladder of compositional evaluation difficulty.
We also provide some evidence that, realistically, the same observation would hold in more noisy, real-world datasets in Appendix \ref{app:noisy_mpi}.

\begin{figure}[t!]
    \centering
    \includegraphics[width=\linewidth]{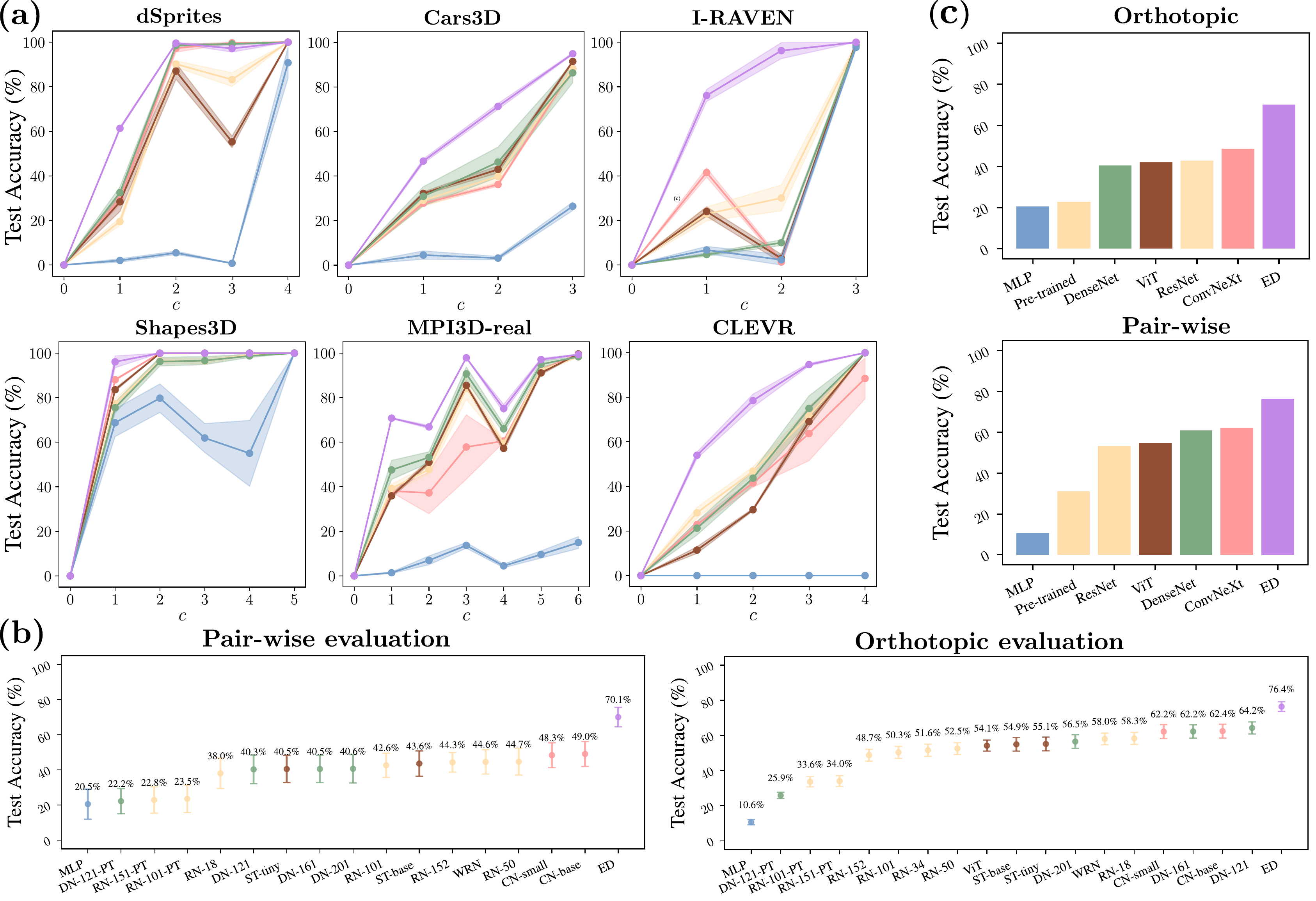}
    \caption{\textbf{Orthotopic evaluation results.} In \textbf{(a)}, we report the test accuracy (on the compositional generalization split) for different values of the compositional similarity index $c$. 
    The results are collected for six well-known representation learning datasets and grouped into the six major families of models considered in this work.
    The uncertainty (SEM) is reported over different models in the family (various model sizes, pre-training, etc.) and random seeds (3).
    \textbf{(b)} on the other hand compares the results obtained with orthotopic evaluation (represented now at the granularity of single models, averaged across different datasets and seeds, for $c=1$) with the results obtained with a more precise but inefficient evaluation technique, pair-wise evaluation.
    In order to extract a more general picture, we group the same results by model family in \textbf{(c)}.
    The results reported in this figure are obtained from a large-scale analysis encompassing more than 5000 training runs. 
    }
    \label{fig:mainresults}
\end{figure}

\paragraph{Orthotopic and pair-wise evaluation share the same overall trends (R2).}
Figure \ref{fig:mainresults}b reports a comparison between the orthotopic evaluation and pair-wise evaluation frameworks.
Pair-wise represents the naïve baseline where only a pair of attributes is considered in the creation of the test split. 
From the results, we can observe that the test accuracy in orthotopic evaluation is consistently lower (11.6\% on average) than that measured in the pair-wise evaluation (while the in-domain test accuracy is comparable, as shown in Supplementary Figure \ref{fig:orthopair_comparison_val}).
This is, to some extent, expected since in the orthotopic framework we exclude more observations from the training data (40\% vs. 10\%) and the number of different task-relevant factors can be much larger ($I-1$ vs. $2$).
Except for this, we can observe in Figure \ref{fig:mainresults}c that the main trends are consistent across the two frameworks.
Models with no architectural inductive bias are the worst (MLP), pre-trained models generally perform worse than models trained from scratch, some architectures are consistently better than others (e.g., ConvNeXt), and ED architectures are significantly better than any other monolithic model.
We observe similar results when the number of optimization steps is substantially increased (up to $240\times$) to investigate the emergence of grokking phenomena~\cite{power_grokking_2022,liu_omnigrok_2023} (see Appendix~\ref{app:grokking}).
Modifying the models by replacing the original activation functions with learnable ones, intended to promote mappings closer to isomorphic bijections, also does not improve the outcomes (see Appendix~\ref{app:prelu}).
The full results for the two frameworks are included in Appendices \ref{app:extended_orthotopic} and \ref{app:pairwise_full}, respectively.

\paragraph{Orthotopic evaluation results are mostly consistent with the results from previous works (R2).}
Comparing the overall results presented in Figure \ref{fig:mainresults} with the closest previous work from \citet{schott_visual_2022}, we observe similar results on dSprites and Shapes3D.
On the other hand, we measure significantly better performances on MPI3D-real.
Pointing out the exact origin of this discrepancy is hard, since this work differs from theirs in many aspects: different problem formulation (classification vs. regression), data pre-processing, and investigated models.
The remaining datasets cannot be directly compared to any previous work, since this is the first work that uses them in the context of compositional generalization.
As \citet{montero_lost_2022}, we also observe that it is consistently harder for most of the models to generalize well for specific attribute combinations (e.g., shape and size) when the results are broken down to this granularity (as shown in Appendix \ref{app:attrlevelopairwise}).

\section{Improving architectural scalability with Attribute Invariant Networks}
\label{sec:ain}
The results presented in Section \ref{sec:ortho} provide a modern and comprehensive perspective on disentanglement and compositionality in vision architectures.
In this section, we leverage some of the insights gained from this analysis to devise a new architectural paradigm (dubbed \textit{Attribute Invariant Networks}) which achieves a new Pareto-optimality in the scalability-performance tradeoff for compositional generalization tasks.
In particular, we observe that explicitly disentangled (ED) architectures consistently outperform monolithic models, achieving an average improvement of 21.38\% over the strongest baseline (ConvNeXt) and making it the \textit{de facto} strongest baseline by a margin for what concerns compositional generalization.
However, ED architectures suffer from a significant limitation: requiring a separate set of weights for each additional attribute, they are hardly scalable to any real-world scenario.
Hence, we aim to answer the following question: Is it possible to achieve levels of compositional generalization comparable to ED without incurring its significant parameter overhead (i.e., retaining a model size which is closer to monolithic architectures)? 

\subsection{Method}
\begin{figure}[b!]
    \centering
    \begin{subfigure}{0.25\textwidth}
    \centering
    \makebox[\linewidth]{\resizebox{!}{2.5cm}{\attributeInvariance}}
    \caption{}
    \label{fig:attribute-invariance}
    \end{subfigure}
    \
    \begin{subfigure}{0.35\textwidth}
        \centering
        \makebox[\linewidth]{\resizebox{!}{2.5cm}{\ain}}
        \caption{}
        \label{fig:ain}
    \end{subfigure}
    \
    \begin{subfigure}{0.35\textwidth}
        \centering
        \makebox[\linewidth]{\includegraphics[height=2.8cm, keepaspectratio, trim=0 20 0 0, clip]{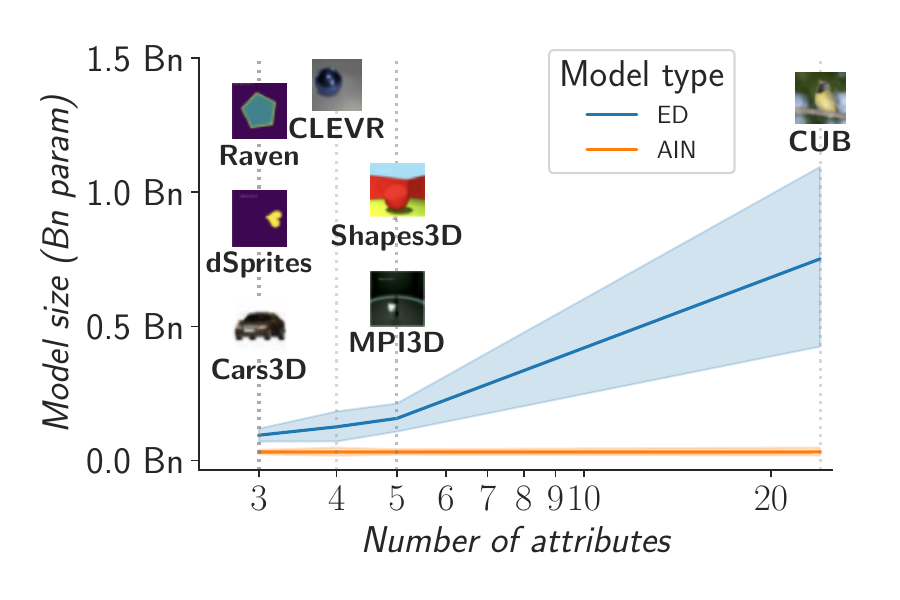}}
        \caption{}
        \label{fig:scalability}
    \end{subfigure}
    \caption{
    (a) The function $f_{\text{shape}}$ is \emph{attribute invariant} as the prediction is only affected by shape transformations $\mathfrak{g} \in \mathfrak{G}_{\text{shape}}$, while it remains unaffected under any other attribute (e.g., color) transformation $\mathfrak{g} \in \mathfrak{G}_{j}$ such that $j \neq \text{"shape"}$.
    (b) In Attribute Invariant Networks, only the shape encoder can receive a nonzero gradient for the shape attribute; all other attribute encoders receive a zero gradient.
    (c) Attribute Invariant Networks' size is nearly constant for different numbers of attributes, while ED models do not scale. Model size provided as mean and $95\%$ confidence interval over different residual networks (i.e., \texttt{ResNet\{18,34,50,101,152\}}). 
    }
    \label{fig:ains}
\end{figure}

The success of deep learning in vision tasks was initially driven by the insight that shift-invariance is a fundamental symmetry in the image domain. 
Following a similar principle, we argue that achieving compositional generalization requires identifying the key symmetries involved and design principles that guarantee these invariances by construction. 

\paragraph{Attribute invariance.}
Ideally, to favor compositional generalization, the prediction of an attribute should be invariant to group actions associated with other attributes~\cite{bengio_representation_2013}.  
In other words, the output of an attribute should be invariant under transformations that affect any other attribute. 
For instance, the prediction of the attribute shape should remain unaffected when the color of the object is altered. 
%
We refer to this property as \emph{attribute invariance} (see Figure~\ref{fig:attribute-invariance}).

\begin{definition}[Attribute invariance]
Let $\mathbf{x}$ be a sample, and let $f_i(\mathbf{x})$ be the logit corresponding to attribute $i$. 
For each attribute $j$, let $\mathfrak{G}_j$ be a group of transformations acting on the input space. 
Then, $f_i$ is said to be \emph{attribute invariant} if for every group action $\mathfrak{g}\in\mathfrak{G}_j$ with $j\neq i$, $f_i(\mathfrak{g}.\mathbf{x}) = f_i(\mathbf{x})$.
\end{definition}



\paragraph{Blueprint of Attribute Invariant Networks.}
In order to guarantee attribute invariance, we propose \emph{Attribute-Invariant Nets (AINs)}, a class of neural architectures designed to support efficient compositional generalization by construction (see Figure~\ref{fig:ain}). 
In general, for each attribute $i \in \{1, \dots, P\}$ an AIN is a composition of three functions $f_i = g_i \circ m \circ h_i$, where
\begin{itemize}
    \item $h_i(\theta_i): \mathbb{R}^D \to \mathbb{R}^M$ is an \textit{encoder} that extracts an attribute-specific representation $\mathbf{q}_i$. 
    %
    \item $m(\psi): \mathbb{R}^M \to \mathbb{R}^S$ is a shared \textit{meta-model} independently transforming any attribute-specific representation into a compressed space, thus generating a set of attribute-specific embeddings $\{\mathbf{z}_i\}_{i=1}^P = \{ m (h_i(\mathbf{x})) \}_{i=1}^P$.
    \item $g_i(\phi_i): \mathbb{R}^S \to \mathbb{R}$ is a \textit{classification head} mapping compressed representations to the corresponding attribute-specific logit $\hat{y}_i = g_i(\mathbf{z}_i)$.
\end{itemize}

The structure of AINs allows the optimization of each encoder $h_i$ to be sensitive to group actions related to attribute $i$, while being invariant to group actions of any other attribute, as illustrated in the following theorem (proof in Appendix ~\ref{app:proof}).
\begin{theorem}[Attribute invariances in gradient updates]
    Let $(\mathbf{x}, \mathbf{y})$ be a sample, and let $f_j(\mathbf{x})$ be an AIN's logit corresponding to attribute $j$. Then, for every group action $\mathfrak{g}\in\mathfrak{G}_j$, 
    if $j \neq i$, then $\nabla_{h_i} \mathcal{L}(y_j, f_j(\mathbf{x})) = \nabla_{h_i} \mathcal{L}(y_j, f_j(\mathfrak{g}.\mathbf{x})) = 0$. On the other hand, if $j = i$, then $\nabla_{h_i} \mathcal{L}(y_i, f_i(\mathbf{x})) \neq \nabla_{h_i} \mathcal{L}(y_i, f_i(\mathfrak{g}.\mathbf{x}))$.
\end{theorem}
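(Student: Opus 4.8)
The plan is to prove the two regimes separately, both by the chain rule applied to the computational graph $f_j = g_j\circ m\circ h_j$, reading $\nabla_{h_i}$ as the gradient with respect to the parameters $\theta_i$ of the encoder $h_i$. For $j\neq i$ the argument is a pure functional-independence fact and is immediate; for $j=i$ it reduces to a non-degeneracy observation about how $\mathfrak{G}_i$ acts on the representation, and that is where the only real work (and the obstacle) lies.

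\textbf{Case $j\neq i$.} By construction of an AIN, the only encoder that appears on the path computing $f_j$ is $h_j$: the encoders are parallel branches that are never mixed, so the parameters $\theta_i$ do not occur anywhere in the expression $f_j(\mathbf{z}) = g_j\big(m(h_j(\mathbf{z}))\big)$. Hence $\partial f_j/\partial\theta_i \equiv 0$ identically as a function of the input, and the chain rule gives
\begin{equation*}
\nabla_{\theta_i}\mathcal{L}\big(y_j, f_j(\mathbf{z})\big) \;=\; \partial_2\mathcal{L}\big(y_j, f_j(\mathbf{z})\big)\,\frac{\partial f_j}{\partial\theta_i}(\mathbf{z}) \;=\; 0
\end{equation*}
for \emph{any} input $\mathbf{z}$. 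Evaluating at $\mathbf{z}=\mathbf{x}$ and at $\mathbf{z}=\mathfrak{g}.\mathbf{x}$ yields $\nabla_{h_i}\mathcal{L}(y_j,f_j(\mathbf{x})) = \nabla_{h_i}\mathcal{L}(y_j,f_j(\mathfrak{g}.\mathbf{x})) = 0$, which is exactly the claim and recovers the zero-gradient arrow of Figure~\ref{fig:ain}. (Summing over $j$ then shows that $h_i$ collects gradient signal only from the attribute-$i$ loss term, which is the architectural point.) I expect no obstacle here.

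\textbf{Case $j=i$.} Now $\theta_i$ does lie on the path computing $f_i$, and the chain rule gives
\begin{equation*}
\nabla_{\theta_i}\mathcal{L}\big(y_i, f_i(\mathbf{x})\big) \;=\; J_{h_i}(\mathbf{x})^{\top}\, J_m(\mathbf{q}_i)^{\top}\, J_{g_i}(\mathbf{z}_i)^{\top}\, \partial_2\mathcal{L}\big(y_i, f_i(\mathbf{x})\big),
\end{equation*}
with $\mathbf{q}_i = h_i(\mathbf{x})$ and $\mathbf{z}_i = m(\mathbf{q}_i)$. The design requirement on $h_i$ is that it is sensitive to the content acted on by $\mathfrak{G}_i$, so a non-trivial $\mathfrak{g}\in\mathfrak{G}_i$ gives $h_i(\mathfrak{g}.\mathbf{x})\neq h_i(\mathbf{x})$, and every factor in the product above is then evaluated at a genuinely different point, so that generically the two gradients do not coincide.

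\textbf{Main obstacle.} The strict inequality cannot hold unconditionally — it fails for the identity element of $\mathfrak{G}_i$, and in principle the Jacobian product could accidentally return to the same value along a non-trivial orbit. The honest way to handle this is to state the non-degeneracy hypothesis explicitly: I would require that $\mathfrak{g}$ act effectively on $\mathbf{x}$ (i.e. $\mathfrak{g}.\mathbf{x}\neq\mathbf{x}$) and that $h_i$, $m$, $g_i$, $\mathcal{L}$ be not locally constant along the orbit — equivalently, that the gradient map $\mathbf{z}\mapsto\nabla_{\theta_i}\mathcal{L}(y_i,f_i(\mathbf{z}))$ is not constant on $\mathfrak{G}_i.\mathbf{x}$, which holds for generic network weights. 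Under this assumption the displayed product differs between $\mathbf{x}$ and $\mathfrak{g}.\mathbf{x}$, completing the case $j=i$. I would phrase the theorem (or this step of the proof) with that qualification rather than asserting the inequality for literally every $\mathfrak{g}$.
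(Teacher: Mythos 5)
Your Case $j\neq i$ matches the paper's Part 1 exactly: the chain rule collapses because the path from $\theta_i$ to $f_j$ is empty (the paper phrases this as $\mathbf{q}_j\perp h_i$ so $\partial m(\mathbf{q}_j)/\partial h_i(\mathbf{x})=0$; you phrase it as $\theta_i$ not occurring in the expression for $f_j$). No difference in substance.

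For Case $j=i$, the paper takes a slightly different route from yours. It argues semantically: because $\mathfrak{g}\in\mathfrak{G}_i$ alters attribute $i$, the ground-truth label of $\mathfrak{g}.\mathbf{x}$ satisfies $y_i'\neq y_i$ by definition, and from this it concludes the two gradients differ. You instead argue mechanically, through the Jacobian chain: $h_i$ is sensitive to $\mathfrak{G}_i$, so $h_i(\mathfrak{g}.\mathbf{x})\neq h_i(\mathbf{x})$, and the product of Jacobians is evaluated at a different point. Both arguments are informal in the same way, and in fact yours is arguably closer to what is actually being computed: the gradient $\nabla_{h_i}\mathcal{L}(y_i,f_i(\mathfrak{g}.\mathbf{x}))$ is taken against the \emph{original} label $y_i$ in both terms, so the fact that $y_i'\neq y_i$ does not enter the gradient formula directly — what matters is whether $f_i(\mathfrak{g}.\mathbf{x})\neq f_i(\mathbf{x})$ at the current weights, which is exactly the sensitivity condition you make explicit. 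Your ``obstacle'' is therefore a fair criticism of the theorem as stated: quantifying over \emph{every} $\mathfrak{g}\in\mathfrak{G}_i$ includes the identity, for which the inequality is false, and the paper's own Part 2 silently assumes $\mathfrak{g}$ is non-trivial (its premise $y_i'\neq y_i$ fails at the identity). The qualification you propose — $\mathfrak{g}.\mathbf{x}\neq\mathbf{x}$ and a generic non-degeneracy of the Jacobian product along the orbit — is what the theorem actually needs to hold rigorously, and stating it explicitly would tighten the claim rather than weaken it.
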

This is not the case for existing architectures where the encoder parameters are sensitive to gradients from all attributes, i.e., $\nabla_{h} \mathcal{L}(y_j, f_j(\mathbf{x})) \neq 0,\ \forall j \in \{1, \dots, P\}$.
The only exceptions are ED models, which can be seen as a special case of AINs where the meta-model is also attribute-specific.
However, this significantly increases the number of parameters of the model. 
Indeed, notice that 1-layer encoders and classification heads are sufficient for attribute invariances in gradient updates. 
This means that in practice $|\theta_i|, |\phi| \ll |\psi|$, thus making their contributions negligible w.r.t. the size of the model $m$. 
As a result, ED parameters complexity can be reduced from linear in the number of attributes $\Theta(P \times \psi)$ to almost constant $\Theta(\psi)$ (see Figure~\ref{fig:scalability}).

\subsection{Results and discussion}
\begin{wrapfigure}{r}{4cm}
\includegraphics[width=4cm]{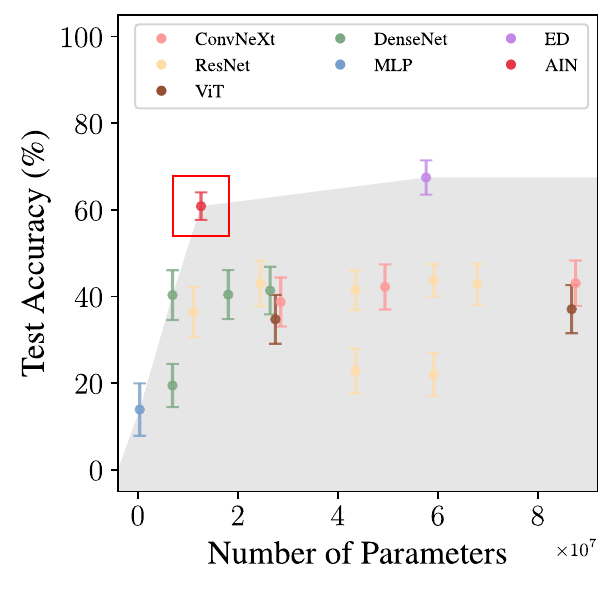}
\caption{Pareto optimality.}
\label{fig:pareto}
\end{wrapfigure}
In this section, we aim to address the following research question: \textbf{(R3)} How do AINs balance scalability and generalization compared to existing models?
To answer it, we compare three different architectural patterns (monolithic models, AIN, and ED) leveraging the same experimental setup introduced in Section \ref{sec:expsetup}.
We select a ResNet-18~\cite{he_deep_2016} as the backbone for all the models.
We only disentangle the first convolutional layer in AINs, as in preliminary experiments, we observe that this is the minimum amount of additional weights sufficient to improve attribute invariance.

\paragraph{AINs establish a new Pareto optimal in the compositional generalization task (R3).}
Table \ref{tab:ain_results} shows the results of the comparison between the accuracies achieved on the compositional generalization split by the evaluated models (monolithic, AIN, and ED).
Similar to ED architectures, AINs consistently and significantly outperform their monolithic counterparts on compositional generalization tasks ($+23.43\%$ average test accuracy).
This empirically supports the efficacy of enforcing attribute invariance in gradient updates, highlighting the crucial role that it can play in enhancing compositional generalization. 
Contrary to ED architectures, on the other hand, AINs incur a substantially lower parameter overhead.
While both approaches exhibit parameter growth that scales linearly with the number of task-relevant generative factors (ranging from 3 to 6 in our experiments), the scaling coefficient for AINs is markedly smaller.
Consequently, AINs introduce an overhead between 6.4\% and 16\% relative to the base model size, significantly lower than the 600\% overhead associated with ED models.
Overall, AINs achieve a new Pareto-optimal operational point in the scalability-generalization tradeoff, as shown in Figure \ref{fig:pareto}.

\begin{table}[t!]
\centering
\resizebox{\linewidth}{!}{
\begin{tabular}{lccccccc}
\toprule
\multirow{2}{*}{\textbf{Model}} & \multirow{2}{*}{\textbf{Overhead}} & \multicolumn{6}{c}{\textbf{Test accuracy (\%)}} \\
  &  & \textbf{Shapes3D} & \textbf{MPI3D} & \textbf{dSprites} & \textbf{I-RAVEN} & \textbf{Cars3D} & \textbf{CLEVR} \\
\cmidrule(r){1-1}\cmidrule(r){2-2}\cmidrule(r){3-8}
RN-18 & 0\% & 85.47$^{\pm 0.71}$ & 41.59$^{\pm 2.16}$ & 21.43$^{\pm 0.63}$ & 11.30$^{\pm 2.80}$ & 33.90$^{\pm 1.84}$ & 24.68$^{\pm1.86}$\\
AIN & 6.4\%-16\% & 85.26$^{\pm 0.63}$ & 54.02$^{\pm 1.04}$ & 61.43$^{\pm 0.33}$ & 66.23$^{\pm 2.26}$ & 43.90$^{\pm 0.79}$ & 54.30$^{\pm2.15}$ \\
ED     &  300\%-600\%  & 96.09$^{\pm 2.65}$ & 70.76$^{\pm 0.24}$ & 61.31$^{\pm 0.61}$ & 76.12$^{\pm 2.70}$ & 46.70$^{\pm 0.98}$ & 53.98$^{\pm1.46}$ \\
\bottomrule
\end{tabular}
}
\caption{Compositional generalization accuracy (\%) of three architectures (monolithic, AIN, and ED) across 5 datasets. ED and AIN are based on a standard ResNet-18. We report the SEM over three seeds. For each family, we include the parameter overhead compared to the base monolithic model.}
\label{tab:ain_results}
\end{table}

\section{Related work}

\paragraph{Compositional Generalization.}
Endowing neural networks with compositional generalization capabilities is a long-standing problem in the literature~\cite{goldman_connectionism_1993,battaglia_relational_2018,lake_generalization_2018,lepori_break_2023}.
A wide range of works was produced in the past decades on the topic ~\cite{schott_visual_2022,elberg_comunication_2025,montero_role_2021,montero_lost_2022,madan_when_2022,ren_improving_2023,xu_compositional_2022,montero_successes_2024,lippl_when_2025,liang_compositional_2025,wiedemer_provable_2024,wiedemer_compositional_2023,brady_provably_2023,brady_interaction_2025,zhao_toward_2022,ren_understanding_2024,fu_general_2024,thomm_limits_2024}.
Montero et al. \cite{montero_role_2021,montero_lost_2022,montero_successes_2024} showed that a higher degree of disentanglement does not necessarily translate to better compositional generalization, but limited their investigation to unsupervised and slot-based architectures.
\citet{schott_visual_2022} focused on supervised learning, showing that many of the inductive biases usually integrated into neural networks are not sufficient to generalize compositionally.
More recent works~\cite{lippl_when_2025,liang_compositional_2025} showed that compositional representations are insufficient for generalization due to memorization and shortcut phenomena.
On a parallel line of research, others explored compositional generalization in object-centric models~\cite{wiedemer_provable_2024,wiedemer_compositional_2023,brady_provably_2023,brady_interaction_2025,zhao_toward_2022}, focusing, however, on \textit{objects} rather than \textit{attributes} as we do.
Other works~\cite{singh_neural_2023,wu_neural_2024} explored attributes disentanglement in object-centric representation learning with monolithic-style models, but did not specifically investigate compositional generalization of these models.
While alternative training strategies, such as iterated learning~\cite{elberg_comunication_2025,ren_improving_2023}, were also proposed as a potential enabler of compositional generalization, this work rather focused on architectural approaches, e.g., fully-disentangled models~\cite{madan_when_2022}.

\paragraph{Generating compositional splits.}
Different strategies to create compositional evaluation splits were proposed. 
\citet{montero_role_2021,montero_lost_2022} evaluated a rigorous definition of compositional generalization (\textit{Recombination-to-Range}).
Despite hinting that a larger subset of attributes could also be used, they only investigate the exclusion of two attributes at a time (for a few selected pairs of attributes) as in the (inefficient) pair-wise evaluation strategy.
On the other hand, other works~\cite{schott_visual_2022,elberg_comunication_2025,montero_role_2021,montero_successes_2024} considered a different setting where only a corner of the generative factor hyperspace was excluded.
This allowed to decouple the number of training runs from the number of attributes of the dataset but, at the same time, resulted in a more shallow definition of compositionality, where test samples could potentially share up to $I-1$ task-relevant factors with training samples.
While \citet{okawa_compositional_2023} introduced a concept distance metric akin to our compositional similarity index, our works significantly develop the idea further, proposing a scalable evaluation framework, broader empirical analysis, and a formalization that unifies various generalization regimes under a compositional lens.

\paragraph{Disentangled representation learning.}
Following the initial proposal of \citet{bengio_representation_2013}, many works explored supervised~\cite{madan_when_2022,kim_disentangling_2018,xiao_dna-gan_2018,zarlenga_concept_2022}, semi-supervised~\cite{locatello_disentangling_2020}, and unsupervised~\cite{chen_infogan_2016,higgins_beta-vae_2017,eastwood_framework_2018,chen_isolating_2018,locatello_challenging_2019} architectures that aimed at separating distinct, independent, and informative generative factors of variation in the data~\cite{wang_disentangled_2024}.
In practical terms, this ensures linear independence among factors within the model's latent space.
In this work, however, we are primarily interested in compositionality, which is concerned with \emph{combinations} of generative factors.
The relationship between disentanglement and compositionality remains an open topic of debate to this day.
While some works showed that the two are not necessarily correlated~\cite{montero_role_2021,xu_compositional_2022}, others observed that disentangled representations can improve compositionality and generalization in specific settings~\cite{madan_when_2022,esmaeili_structured_2019,van_steenkiste_are_2019}.

\section{Conclusions, limitations, and future work}

\paragraph{Limitations}
The methods explored in this work -- orthotopic evaluation and AINs -- rely on the assumption that (at least a subset of) the generative factors of variation are accessible. 
This is analogous to how evaluating classification accuracy or training supervised models requires access to ground-truth labels. 
Additionally, while AINs, like ED, scale linearly with the number of attributes, they do so with a significantly lower constant factor, making them substantially more efficient in practice. 
Finally, we emphasize that neither the attribute-invariant gradient updates used by AINs nor the explicit disentanglement enforced by ED guarantee perfect disentanglement or complete independence among attributes. 
Rather, they serve as effective mechanisms to promote these properties.

\paragraph{Conclusions and future works}
In this paper, we investigated compositional generalization in computer vision architectures.
In particular, we focused on the problem of supervised learning of compositional and disentangled representations for (known) generative factors underlying visual scenes.
Firstly, we proposed a dataset-agnostic and efficient evaluation framework to benchmark compositionality and disentanglement in a principled fashion, unifying previously proposed approaches and increasing the evaluation efficiency.
We extensively validated this proposed approach against alternative methodologies and previous works.
As part of this validation, we trained more than 5000 SOTA vision models and observed that they still substantially fail to solve the compositional generalization problem.
While it is well known that unsupervised learning of disentangled representations is fundamentally impossible without inductive biases~\cite{locatello_challenging_2019}, we additionally showed that learning disentangled representations is empirically hard even in fully supervised settings (experiments on $c=1$), even when strong inductive biases on the models' architecture (e.g., ED) are provided. 
The results also highlight a critical limitation: common vision architectures lack compositionality and disentanglement, which could in turn compromise sample efficiency and increase the vulnerability to failures in real-world environments, where data distributions often exhibit heavy-tailed behavior.
Finally, we identify attribute invariance as a necessary precondition for compositional generalization.
Inspired by this observation, we propose a new class of neural architectures, Attribute Invariant Networks (AIN), that endows attribute invariance in the gradient computation.
Empirically, we show that AINs achieve a new Pareto optimality in the scalability-generalization tradeoff, significantly improving compositional generalization of monolithic models with only a minimal parameter overhead.
Possible future works could explore the extension of this study to real-world datasets (e.g., COCO or ImageNet), where the generative factors are noisy and possibly unknown.

\section*{Acknowledgments}
We thank Gian Hess for his contributions to the experimental framework.
We are grateful to Aleksandar Terzic and Nicolas Menet for the helpful comments and discussions during the course of the project.
We thank the reviewers for their valuable feedback, which helped us refine our manuscript and strengthen our contributions.
We thank Abu Sebastian for providing managerial support.
This work is supported by the Swiss National Science Foundation (SNF), grants 10002666 and 224226.

\newpage
\section*{NeurIPS Paper Checklist}

\begin{enumerate}

\item {\bf Claims}
    \item[] Question: Do the main claims made in the abstract and introduction accurately reflect the paper's contributions and scope?
    \item[] Answer: \answerYes{} 
    \item[] Justification: The claims in the abstract and introduction are supported by theoretical and experimental results in the following sections and supplementary material.
    \item[] Guidelines:
    \begin{itemize}
        \item The answer NA means that the abstract and introduction do not include the claims made in the paper.
        \item The abstract and/or introduction should clearly state the claims made, including the contributions made in the paper and important assumptions and limitations. A No or NA answer to this question will not be perceived well by the reviewers. 
        \item The claims made should match theoretical and experimental results, and reflect how much the results can be expected to generalize to other settings. 
        \item It is fine to include aspirational goals as motivation as long as it is clear that these goals are not attained by the paper. 
    \end{itemize}

\item {\bf Limitations}
    \item[] Question: Does the paper discuss the limitations of the work performed by the authors?
    \item[] Answer: \answerYes{} 
    \item[] Justification: Some limitations are discussed in the paper, e.g., in Section \ref{sec:orthores} reports how the proposed framework trades off evaluation complexity with the number of excluded observations. However, an independent section on the limitations is not currently included because of space limitations.
    \item[] Guidelines:
    \begin{itemize}
        \item The answer NA means that the paper has no limitation while the answer No means that the paper has limitations, but those are not discussed in the paper. 
        \item The authors are encouraged to create a separate "Limitations" section in their paper.
        \item The paper should point out any strong assumptions and how robust the results are to violations of these assumptions (e.g., independence assumptions, noiseless settings, model well-specification, asymptotic approximations only holding locally). The authors should reflect on how these assumptions might be violated in practice and what the implications would be.
        \item The authors should reflect on the scope of the claims made, e.g., if the approach was only tested on a few datasets or with a few runs. In general, empirical results often depend on implicit assumptions, which should be articulated.
        \item The authors should reflect on the factors that influence the performance of the approach. For example, a facial recognition algorithm may perform poorly when image resolution is low or images are taken in low lighting. Or a speech-to-text system might not be used reliably to provide closed captions for online lectures because it fails to handle technical jargon.
        \item The authors should discuss the computational efficiency of the proposed algorithms and how they scale with dataset size.
        \item If applicable, the authors should discuss possible limitations of their approach to address problems of privacy and fairness.
        \item While the authors might fear that complete honesty about limitations might be used by reviewers as grounds for rejection, a worse outcome might be that reviewers discover limitations that aren't acknowledged in the paper. The authors should use their best judgment and recognize that individual actions in favor of transparency play an important role in developing norms that preserve the integrity of the community. Reviewers will be specifically instructed to not penalize honesty concerning limitations.
    \end{itemize}

\item {\bf Theory assumptions and proofs}
    \item[] Question: For each theoretical result, does the paper provide the full set of assumptions and a complete (and correct) proof?
    \item[] Answer: \answerYes{} 
    \item[] Justification: Proofs for the theoretical parts are included in Appendix \ref{app:proof}.
    \item[] Guidelines:
    \begin{itemize}
        \item The answer NA means that the paper does not include theoretical results. 
        \item All the theorems, formulas, and proofs in the paper should be numbered and cross-referenced.
        \item All assumptions should be clearly stated or referenced in the statement of any theorems.
        \item The proofs can either appear in the main paper or the supplemental material, but if they appear in the supplemental material, the authors are encouraged to provide a short proof sketch to provide intuition. 
        \item Inversely, any informal proof provided in the core of the paper should be complemented by formal proofs provided in appendix or supplemental material.
        \item Theorems and Lemmas that the proof relies upon should be properly referenced. 
    \end{itemize}

    \item {\bf Experimental result reproducibility}
    \item[] Question: Does the paper fully disclose all the information needed to reproduce the main experimental results of the paper to the extent that it affects the main claims and/or conclusions of the paper (regardless of whether the code and data are provided or not)?
    \item[] Answer: \answerYes{} 
    \item[] Justification: As much information as possible was included in Appendices \ref{app:additional_exp_ortho} and \ref{app:pairwise} to guarantee replicability. The evaluation framework code will also be released upon acceptance.
    \item[] Guidelines:
    \begin{itemize}
        \item The answer NA means that the paper does not include experiments.
        \item If the paper includes experiments, a No answer to this question will not be perceived well by the reviewers: Making the paper reproducible is important, regardless of whether the code and data are provided or not.
        \item If the contribution is a dataset and/or model, the authors should describe the steps taken to make their results reproducible or verifiable. 
        \item Depending on the contribution, reproducibility can be accomplished in various ways. For example, if the contribution is a novel architecture, describing the architecture fully might suffice, or if the contribution is a specific model and empirical evaluation, it may be necessary to either make it possible for others to replicate the model with the same dataset, or provide access to the model. In general. releasing code and data is often one good way to accomplish this, but reproducibility can also be provided via detailed instructions for how to replicate the results, access to a hosted model (e.g., in the case of a large language model), releasing of a model checkpoint, or other means that are appropriate to the research performed.
        \item While NeurIPS does not require releasing code, the conference does require all submissions to provide some reasonable avenue for reproducibility, which may depend on the nature of the contribution. For example
        \begin{enumerate}
            \item If the contribution is primarily a new algorithm, the paper should make it clear how to reproduce that algorithm.
            \item If the contribution is primarily a new model architecture, the paper should describe the architecture clearly and fully.
            \item If the contribution is a new model (e.g., a large language model), then there should either be a way to access this model for reproducing the results or a way to reproduce the model (e.g., with an open-source dataset or instructions for how to construct the dataset).
            \item We recognize that reproducibility may be tricky in some cases, in which case authors are welcome to describe the particular way they provide for reproducibility. In the case of closed-source models, it may be that access to the model is limited in some way (e.g., to registered users), but it should be possible for other researchers to have some path to reproducing or verifying the results.
        \end{enumerate}
    \end{itemize}

\item {\bf Open access to data and code}
    \item[] Question: Does the paper provide open access to the data and code, with sufficient instructions to faithfully reproduce the main experimental results, as described in supplemental material?
    \item[] Answer: \answerYes{} 
    \item[] Justification: Code will be released upon acceptance. The code is provided in the supplementary material.
    \item[] Guidelines:
    \begin{itemize}
        \item The answer NA means that paper does not include experiments requiring code.
        \item Please see the NeurIPS code and data submission guidelines (\url{https://nips.cc/public/guides/CodeSubmissionPolicy}) for more details.
        \item While we encourage the release of code and data, we understand that this might not be possible, so “No” is an acceptable answer. Papers cannot be rejected simply for not including code, unless this is central to the contribution (e.g., for a new open-source benchmark).
        \item The instructions should contain the exact command and environment needed to run to reproduce the results. See the NeurIPS code and data submission guidelines (\url{https://nips.cc/public/guides/CodeSubmissionPolicy}) for more details.
        \item The authors should provide instructions on data access and preparation, including how to access the raw data, preprocessed data, intermediate data, and generated data, etc.
        \item The authors should provide scripts to reproduce all experimental results for the new proposed method and baselines. If only a subset of experiments are reproducible, they should state which ones are omitted from the script and why.
        \item At submission time, to preserve anonymity, the authors should release anonymized versions (if applicable).
        \item Providing as much information as possible in supplemental material (appended to the paper) is recommended, but including URLs to data and code is permitted.
    \end{itemize}

\item {\bf Experimental setting/details}
    \item[] Question: Does the paper specify all the training and test details (e.g., data splits, hyperparameters, how they were chosen, type of optimizer, etc.) necessary to understand the results?
    \item[] Answer: \answerYes{} 
    \item[] Justification: Extensive details on the experimental settings and detailed results are provided in Appendices \ref{app:additional_datasets}, \ref{app:additional_exp_ortho}, \ref{app:pairwise}, and \ref{app:extended_orthotopic}.
    \item[] Guidelines:
    \begin{itemize}
        \item The answer NA means that the paper does not include experiments.
        \item The experimental setting should be presented in the core of the paper to a level of detail that is necessary to appreciate the results and make sense of them.
        \item The full details can be provided either with the code, in appendix, or as supplemental material.
    \end{itemize}

\item {\bf Experiment statistical significance}
    \item[] Question: Does the paper report error bars suitably and correctly defined or other appropriate information about the statistical significance of the experiments?
    \item[] Answer: \answerYes{} 
    \item[] Justification: All experiments were run with multiple seeds, and the results are always accompanied by confidence intervals on the standard error of the mean (SEM) or standard deviation.
    \item[] Guidelines:
    \begin{itemize}
        \item The answer NA means that the paper does not include experiments.
        \item The authors should answer "Yes" if the results are accompanied by error bars, confidence intervals, or statistical significance tests, at least for the experiments that support the main claims of the paper.
        \item The factors of variability that the error bars are capturing should be clearly stated (for example, train/test split, initialization, random drawing of some parameter, or overall run with given experimental conditions).
        \item The method for calculating the error bars should be explained (closed form formula, call to a library function, bootstrap, etc.)
        \item The assumptions made should be given (e.g., Normally distributed errors).
        \item It should be clear whether the error bar is the standard deviation or the standard error of the mean.
        \item It is OK to report 1-sigma error bars, but one should state it. The authors should preferably report a 2-sigma error bar than state that they have a 96\% CI, if the hypothesis of Normality of errors is not verified.
        \item For asymmetric distributions, the authors should be careful not to show in tables or figures symmetric error bars that would yield results that are out of range (e.g. negative error rates).
        \item If error bars are reported in tables or plots, The authors should explain in the text how they were calculated and reference the corresponding figures or tables in the text.
    \end{itemize}

\item {\bf Experiments compute resources}
    \item[] Question: For each experiment, does the paper provide sufficient information on the computer resources (type of compute workers, memory, time of execution) needed to reproduce the experiments?
    \item[] Answer: \answerYes{} 
    \item[] Justification: Details on the computer resources used in the experiments are included in Section \ref{sec:expsetup}.
    \item[] Guidelines:
    \begin{itemize}
        \item The answer NA means that the paper does not include experiments.
        \item The paper should indicate the type of compute workers CPU or GPU, internal cluster, or cloud provider, including relevant memory and storage.
        \item The paper should provide the amount of compute required for each of the individual experimental runs as well as estimate the total compute. 
        \item The paper should disclose whether the full research project required more compute than the experiments reported in the paper (e.g., preliminary or failed experiments that didn't make it into the paper). 
    \end{itemize}
    
\item {\bf Code of ethics}
    \item[] Question: Does the research conducted in the paper conform, in every respect, with the NeurIPS Code of Ethics \url{https://neurips.cc/public/EthicsGuidelines}?
    \item[] Answer: \answerYes{} 
    \item[] Justification: The paper conforms, in every respect, with the NeurIPS Code of Ethics.
    \item[] Guidelines:
    \begin{itemize}
        \item The answer NA means that the authors have not reviewed the NeurIPS Code of Ethics.
        \item If the authors answer No, they should explain the special circumstances that require a deviation from the Code of Ethics.
        \item The authors should make sure to preserve anonymity (e.g., if there is a special consideration due to laws or regulations in their jurisdiction).
    \end{itemize}

\item {\bf Broader impacts}
    \item[] Question: Does the paper discuss both potential positive societal impacts and negative societal impacts of the work performed?
    \item[] Answer: \answerNA{} 
    \item[] Justification: This work does not have any direct societal impact.
    \item[] Guidelines:
    \begin{itemize}
        \item The answer NA means that there is no societal impact of the work performed.
        \item If the authors answer NA or No, they should explain why their work has no societal impact or why the paper does not address societal impact.
        \item Examples of negative societal impacts include potential malicious or unintended uses (e.g., disinformation, generating fake profiles, surveillance), fairness considerations (e.g., deployment of technologies that could make decisions that unfairly impact specific groups), privacy considerations, and security considerations.
        \item The conference expects that many papers will be foundational research and not tied to particular applications, let alone deployments. However, if there is a direct path to any negative applications, the authors should point it out. For example, it is legitimate to point out that an improvement in the quality of generative models could be used to generate deepfakes for disinformation. On the other hand, it is not needed to point out that a generic algorithm for optimizing neural networks could enable people to train models that generate Deepfakes faster.
        \item The authors should consider possible harms that could arise when the technology is being used as intended and functioning correctly, harms that could arise when the technology is being used as intended but gives incorrect results, and harms following from (intentional or unintentional) misuse of the technology.
        \item If there are negative societal impacts, the authors could also discuss possible mitigation strategies (e.g., gated release of models, providing defenses in addition to attacks, mechanisms for monitoring misuse, mechanisms to monitor how a system learns from feedback over time, improving the efficiency and accessibility of ML).
    \end{itemize}
    
\item {\bf Safeguards}
    \item[] Question: Does the paper describe safeguards that have been put in place for responsible release of data or models that have a high risk for misuse (e.g., pretrained language models, image generators, or scraped datasets)?
    \item[] Answer: \answerNA{} 
    \item[] Justification: This paper poses no such risk.
    \item[] Guidelines:
    \begin{itemize}
        \item The answer NA means that the paper poses no such risks.
        \item Released models that have a high risk for misuse or dual-use should be released with necessary safeguards to allow for controlled use of the model, for example by requiring that users adhere to usage guidelines or restrictions to access the model or implementing safety filters. 
        \item Datasets that have been scraped from the Internet could pose safety risks. The authors should describe how they avoided releasing unsafe images.
        \item We recognize that providing effective safeguards is challenging, and many papers do not require this, but we encourage authors to take this into account and make a best faith effort.
    \end{itemize}

\item {\bf Licenses for existing assets}
    \item[] Question: Are the creators or original owners of assets (e.g., code, data, models), used in the paper, properly credited and are the license and terms of use explicitly mentioned and properly respected?
    \item[] Answer: \answerYes{} 
    \item[] Justification: Creators and original owners of assets are properly credited in the paper.
    \item[] Guidelines:
    \begin{itemize}
        \item The answer NA means that the paper does not use existing assets.
        \item The authors should cite the original paper that produced the code package or dataset.
        \item The authors should state which version of the asset is used and, if possible, include a URL.
        \item The name of the license (e.g., CC-BY 4.0) should be included for each asset.
        \item For scraped data from a particular source (e.g., website), the copyright and terms of service of that source should be provided.
        \item If assets are released, the license, copyright information, and terms of use in the package should be provided. For popular datasets, \url{paperswithcode.com/datasets} has curated licenses for some datasets. Their licensing guide can help determine the license of a dataset.
        \item For existing datasets that are re-packaged, both the original license and the license of the derived asset (if it has changed) should be provided.
        \item If this information is not available online, the authors are encouraged to reach out to the asset's creators.
    \end{itemize}

\item {\bf New assets}
    \item[] Question: Are new assets introduced in the paper well documented and is the documentation provided alongside the assets?
    \item[] Answer: \answerYes{} 
    \item[] Justification: The newly proposed evaluation framework will be released  with sufficient documentation.
    \item[] Guidelines:
    \begin{itemize}
        \item The answer NA means that the paper does not release new assets.
        \item Researchers should communicate the details of the dataset/code/model as part of their submissions via structured templates. This includes details about training, license, limitations, etc. 
        \item The paper should discuss whether and how consent was obtained from people whose asset is used.
        \item At submission time, remember to anonymize your assets (if applicable). You can either create an anonymized URL or include an anonymized zip file.
    \end{itemize}

\item {\bf Crowdsourcing and research with human subjects}
    \item[] Question: For crowdsourcing experiments and research with human subjects, does the paper include the full text of instructions given to participants and screenshots, if applicable, as well as details about compensation (if any)? 
    \item[] Answer: \answerNA{} 
    \item[] Justification: This paper does not involve crowdsourcing nor research with human subjects.
    \item[] Guidelines:
    \begin{itemize}
        \item The answer NA means that the paper does not involve crowdsourcing nor research with human subjects.
        \item Including this information in the supplemental material is fine, but if the main contribution of the paper involves human subjects, then as much detail as possible should be included in the main paper. 
        \item According to the NeurIPS Code of Ethics, workers involved in data collection, curation, or other labor should be paid at least the minimum wage in the country of the data collector. 
    \end{itemize}

\item {\bf Institutional review board (IRB) approvals or equivalent for research with human subjects}
    \item[] Question: Does the paper describe potential risks incurred by study participants, whether such risks were disclosed to the subjects, and whether Institutional Review Board (IRB) approvals (or an equivalent approval/review based on the requirements of your country or institution) were obtained?
    \item[] Answer: \answerNA{} 
    \item[] Justification: The paper does not involve crowdsourcing nor research with human subjects.
    \item[] Guidelines:
    \begin{itemize}
        \item The answer NA means that the paper does not involve crowdsourcing nor research with human subjects.
        \item Depending on the country in which research is conducted, IRB approval (or equivalent) may be required for any human subjects research. If you obtained IRB approval, you should clearly state this in the paper. 
        \item We recognize that the procedures for this may vary significantly between institutions and locations, and we expect authors to adhere to the NeurIPS Code of Ethics and the guidelines for their institution. 
        \item For initial submissions, do not include any information that would break anonymity (if applicable), such as the institution conducting the review.
    \end{itemize}

\item {\bf Declaration of LLM usage}
    \item[] Question: Does the paper describe the usage of LLMs if it is an important, original, or non-standard component of the core methods in this research? Note that if the LLM is used only for writing, editing, or formatting purposes and does not impact the core methodology, scientific rigorousness, or originality of the research, declaration is not required.
    \item[] Answer: \answerNA{} 
    \item[] Justification: The core method development in this research does not involve LLMs as any important, original, or non-standard components.
    \item[] Guidelines:
    \begin{itemize}
        \item The answer NA means that the core method development in this research does not involve LLMs as any important, original, or non-standard components.
        \item Please refer to our LLM policy (\url{https://neurips.cc/Conferences/2025/LLM}) for what should or should not be described.
    \end{itemize}

\end{enumerate}

\newpage
\appendix
\renewcommand{\arraystretch}{1.4}
{\huge \textbf{Supplementary Material}}

\addtocontents{toc}{\protect\setcounter{tocdepth}{2}}
\renewcommand{\contentsname}{\large Table of Contents}
\tableofcontents

\clearpage
\section{Extended orthotopic evaluation details}
\subsection{Additional degrees of freedom in the creation of compositional test splits}
\label{app:degrees}
In the main text, we presented a simplified setting of the framework to pinpoint the specific impact of the $c$ parameter on compositional generalization.
However, the generality of the proposed setting can be further increased by relaxing some of the constraints on the evaluation orthotopes.
In particular, we identify three additional degrees of freedom that can be specified on top of the minimal setup, namely:
\begin{itemize}
 \item {size of the orthotopes},
 \item {number of orthotopes},
 \item {position of the orthotopes}.
\end{itemize}
The rest of this supplementary note will provide an intuition about what each of these additional parameters influences in the setup in practice and how they can be integrated in the original splitting procedure presented in Algorithm \ref{alg:split-generation}.

\begin{figure}[h!]
 \centering
 \begin{subfigure}[b]{0.3\textwidth}
 \centering
 \includegraphics[width=\textwidth]{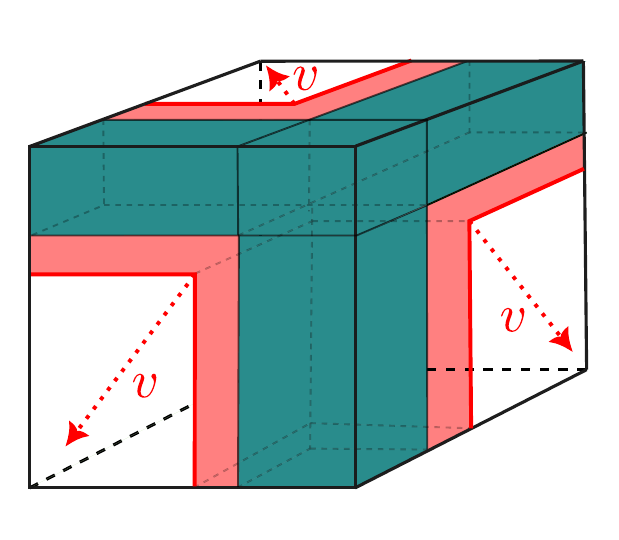}
 \caption{Variable orthotopes' size.}
 \label{fig:ext1}
 \end{subfigure}
 \hfill
 \begin{subfigure}[b]{0.3\textwidth}
 \centering
 \includegraphics[width=\textwidth]{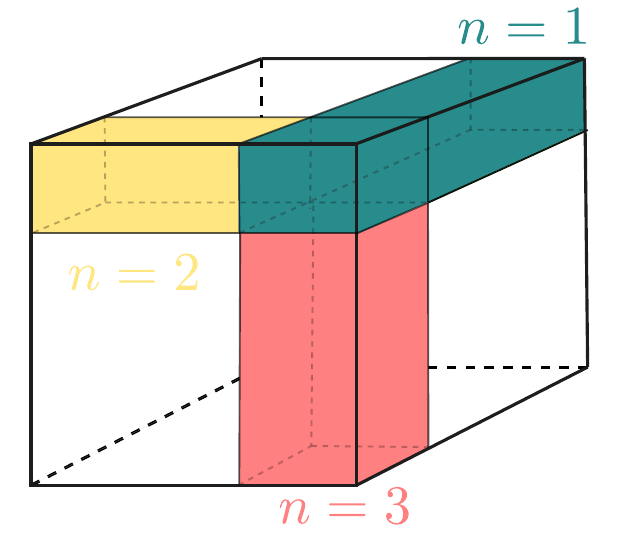}
 \caption{Variable orthotopes' number.}
 \label{fig:ext2}
 \end{subfigure}
 \hfill
 \begin{subfigure}[b]{0.3\textwidth}
 \centering
 \includegraphics[width=\textwidth]{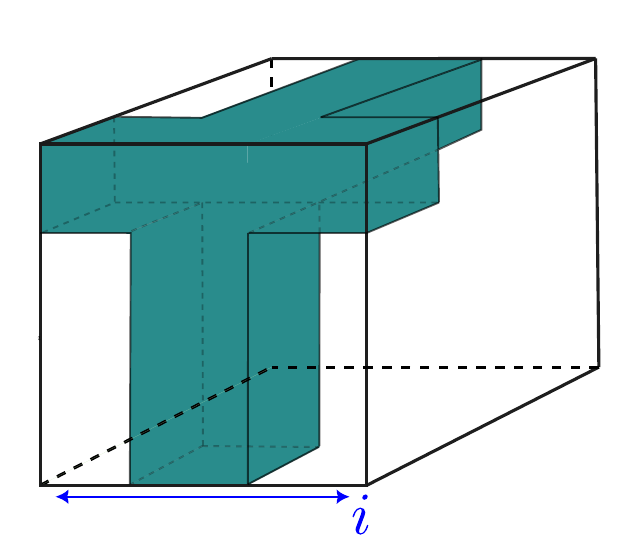}
  \caption{Variable orthotopes' position.}
 \label{fig:ext3}
 \end{subfigure}
 \caption{\textbf{Additional degrees of freedom in the split generation procedure}. In this figure, we give a pictorial intuition (in the simplified setting of a dataset composed of only 3 generative factors) on how the procedure presented in Algorithm \ref{alg:split-generation} could be extended and generalized further. In \textbf{(a)} we parametrize the size of the excluded cylinder with $v$ (indicating the percentage of additional volume excluded with respect to the available, not excluded volume). In \textbf{(b)}, we parametrize the number of excluded orthotopes $n$. The colors in this figure show the incrementally increasing volume removed from the training data (e.g., for $n=2$ the yellow volume is added to the original green volume of $n=1$). Finally, in \textbf{(c)} we parametrize the position of the orthotopes in the hypercube, allowing them to occupy regions which are not exclusively limited to the corners of the cube itself. }
 \label{fig:extensions}
\end{figure}

\subsubsection{Size of the hypercubes.} 
In the main set of results, we studied the behavior of $c$ excluding a fixed, minimal number of attribute values for each generative factors.
This is, of course, an arbitrary choice, motivated by the observation that \textit{the majority of models were already failing in this simple setting}.
In principle, it is possible to make the task arbitrarily harder, excluding, during training, more and more elements from the set of possible values that each attribute can assume.
In other words, we can increase the difficulty of the task by enlarging the support of each test orthotope in every $c$-dimensional subspace.
An intuition for this generalization is given in Figure \ref{fig:ext1}.
The fact that increasing the volume of the excluded orthotopes decreases compositional generalization is expected for different reasons.
Firstly, since the size of the dataset is constant, the number of observations available during training decreases, requiring the models to increase their sample efficiency to retain the same performance.
Secondly, increasing the percentage of excluded observations directly reduces data variety, as fewer examples will be available for each specific combination of task relevant generative factors.

In practice, there exist different strategies to implement this, the simplest being adding a parameter $v\in [0,1]$ that controls the percentage of values excluded for each attribute.
For $v=0$ we recover the main paper setting, while for $v=1$ all attribute values except one are excluded, since we still need to guarantee the observability of every attribute value during training.
All the intermediate steps can be studied for $0<v<1$.
This parameter is then used in the \texttt{exclusion} subroutine in order to generate exclusion slices of variable size.
The pseudo-code for the extended procedure is reported in Algorithm \ref{alg:extension1}.

\begin{algorithm}[h!]
\caption{Extended compositional split generation}\label{alg:extension1}
 \begin{algorithmic}[1]
 \Procedure{SplitDataset}{${X}$, ${G}$, $c$, $\textcolor{ForestGreen}{v}$} 
 \State $\mathcal{S} = \binom{G}{c+1} = \left\{ S \subseteq G : |S|=c+1 \right\}$
 \Comment{Get all combinations of $c$ generative factors $G$.}
 \ForEach {$s \in \mathcal S $}
 \Comment{Iterative orthotope pruning.}
 \State  $X_{proj}= \textbf{proj}_s X$
 \Comment{Project $X$ onto the $c$-dimensional subspace $s$.}
 \State $  E_s = \text{exclusion}(s\textcolor{ForestGreen}{, v})$
 \Comment{Infer excluded slice of the subspace $s$.}
 \State $ X = X \setminus (X_{proj} \cap E_s )$
 \Comment{Pruning operation in the subspace.}
 \EndFor
 \EndProcedure
 \end{algorithmic}
\end{algorithm}

\subsubsection{Number of hypercubes}
Orthotopic evaluation works by excluding all the orthotopes in $\mathcal{S}= \binom{G}{c+1}$ simultaneously.
This guarantees a strict compositional evaluation on \textit{all the attributes at once}.
However, this procedure could end up excluding a significant part of the available data (especially when $c$ is small and the number of task-relevant generative factors is large), due to the large number of excluded orthotopes.

On the other side of the spectrum, we find what we referred to as \textit{pair-wise evaluation}, which consists of training a separate model for each pair of generative factors.
The advantage of this technique lies in the relatively small amount of percentage of data excluded during training, since we limit $|\mathbf{G}|=2$ and we (ideally) train models invariant to every other generative factor $\mathbf{O}$.
In other words, in pair-wise evaluation, there only exists a single exclusion orthotope in the space.
The downsides of this approach, however, are many (corresponding exactly to the advantages of the orthotopic evaluation framework).
Firstly, it requires training a different model for each combination of two different generative factors, which results in $\Theta(|\mathbf{F}|^2)$ evaluation complexity, which scales quadratically in the number of generative factors.
Secondly, we only train models to predict pairs of attributes whose usefulness is rather limited in practical scenarios.

Once again, however, orthotopic and pair-wise evaluations do not exist in isolation, but are rather the extremes of a discrete spectrum that can be obtained by selecting only a subset of cardinality $n$ of the orthotopes $\hat{\mathcal{S}} \subset \mathcal{S}$.
An intuition for this generalization is given in Figure \ref{fig:ext2}.
Naturally, in this case, compositionality can be strictly evaluated only on the tuples of attributes corresponding to the included orthotopes. 
Nonetheless, these intermediate scenarios might still be interesting to investigate in at least two scenarios: 1) when we only care about evaluating compositional generalization on a subset of attributes, or 2) in situations characterized by an extreme data scarcity or a limited number of values for some attributes, that would be easier to detach from the main dataset and evaluate separately (in order not to influence excessively the results of the entire evaluation).
The pseudo-code for this extended procedure is reported in Algorithm \ref{alg:extension2}.

\begin{algorithm}[h!]
\caption{Extended compositional split generation}\label{alg:extension2}
 \begin{algorithmic}[1]
 \Procedure{SplitDataset}{${X}$, ${G}$, $c$, $\textcolor{ForestGreen}{n}$} 
 \State $\mathcal{S} = \binom{G}{c+1} = \left\{ S \subseteq G : |S|=c+1\textcolor{ForestGreen}{, |\mathcal{S}|=n} \right\}$
 \Comment{Get all combinations of factors.}
 \ForEach {$s \in \mathcal S $}
 \Comment{Iterative orthotope pruning.}
 \State  $X_{proj}= \textbf{proj}_s X$
 \Comment{Project $X$ onto the $c$-dimensional subspace $s$.}
 \State $  E_s = \text{exclusion}(s)$
 \Comment{Infer excluded slice of the subspace $s$.}
 \State $ X = X \setminus (X_{proj} \cap E_s )$
 \Comment{Pruning operation in the subspace.}
 \EndFor
 \EndProcedure
 \end{algorithmic}
\end{algorithm}

\subsubsection{Position of the hypercubes.}
In the vanilla setting, we always assumed the excluded regions to be in some ``corner'' of the hyperspace.
While this simplifies the implementation, it is not necessary, and the excluded orthotopes could be interchangeably placed in any region of the hyperspace. 

The relaxation of the excluded slice position could have a measurable impact in settings where a natural \textit{order} for the values of the attributes exists (e.g., size or position).
For instance, it would simplify the task for models that can, to some extent, perform interpolation but not extrapolation (which is another possible way to generalize other than learning compositional representations in the current framework).
On the other hand, it does not make a difference for attributes that do not incorporate any notion of ordering in their values (the space would be invariant to shuffling operations).
The pseudo-code for this extended procedure is reported in Algorithm \ref{alg:extension3}.

\begin{algorithm}[h!]
\caption{Extended compositional split generation}\label{alg:extension3}
 \begin{algorithmic}[1]
 \Procedure{SplitDataset}{${X}$, ${G}$, $c$, $\textcolor{ForestGreen}{i}$} 
 \State $\mathcal{S} = \binom{G}{c+1} = \left\{ S \subseteq G : |S|=c+1 \right\}$
 \Comment{Get all combinations of factors.}
 \ForEach {$s \in \mathcal S $}
 \Comment{Iterative orthotope pruning.}
 \State  $X_{proj}= \textbf{proj}_s X$
 \Comment{Project $X$ onto the $c$-dimensional subspace $s$.}
 \State $  E_s = \text{exclusion}(s, \textcolor{ForestGreen}{i})$
 \Comment{Infer excluded slice of the subspace $s$.}
 \State $ X = X \setminus (X_{proj} \cap E_s )$
 \Comment{Pruning operation in the subspace.}
 \EndFor
 \EndProcedure
 \end{algorithmic}
\end{algorithm}

\subsection{Extension to the new ladder of compositional evaluation difficulty}
\label{app:ladderext}
In the main text, we aimed to provide an intuitive introduction to the new concept of a ladder of compositional evaluation difficulty.
In this supplementary material, we further develop some of the aspects introduced in the main text and provide more background to some of the plots reported in this section.

\textbf{Broader discussion.}
We expand here on the brief introduction to the ladder of compositional generalization difficulty presented in the main text.
The ladder is sketched to provide different steps of difficulty related to different ranges of values of the compositional similarity index $c$.
The steps are, from the hardest to the simplest:
\begin{itemize}
 \item \textbf{Extrapolation generalization} ($c=0$); in this setting, part of the examples of the evaluation split share exactly 0 values between training and testing. In other words, they represent unseen values of the generative factors for the model. In the context of the attributes' hypercube, this translates to having full slices (over some dimensions) of the hypercube excluded during training.
 \item \textbf{Compositional generalization} ($1\leq c\leq I-1$); in this setting, all the generative factors' values are observed, but a subset of their combination is excluded from the training data. In the context of the attributes' hypercube, this corresponds to excluding partial slices of the hypercube excluded during training.
 \item \textbf{In-distribution generalization} ($c=I$); in this setting, all the generative factors' values, as well as all their possible combinations, are observed. In the hypercube's language, the entire hypercube is effectively observed.
\end{itemize}
We create a further distinction within the compositional generalization case, between disentangled composition generalization ($c=1$) and entangled compositional generalization ($1<c\leq I$).
We include an example that should provide an intuition regarding the difference between the entangled ($1<c\leq I-1$) and disentangled ($c=1$) compositional generalization settings.
Consider Example \ref{example:main}.
A model that learns holistic representations for the factors (shape, size) can smoothly generalize to the test example $(\text{small}, \text{cube}, \text{blue obj.})$ when evaluated with $c=2$.
This is possible because the pair $(\text{small}, \text{cube})$ is observable during training, because training and evaluation samples can share up to $2$ attributes (including the pair shape-size).
However, the same model fails when evaluated with $c=1$ since the two concepts can only be observed independently in the training data.

In Figure \ref{fig:similaritydup}, we show an additional experiment to help build up intuition on the different levels of the proposed ladder of compositional evaluation difficulty.
In this experiment, we measure the mean cosine similarity between 100 randomly sampled training and testing embeddings for different values of $c$.
We consider $c=6$ and several total attributes $P=6$, where each attribute can assume 8 different values.
Firstly, we randomly sample two vectors $v_{train}$ and $v_{test}$ of size 6, where each element is in the range $[0,7]$.
Then, for each $c\in[0,6]$, we set the first $c$ elements of the testing vector to be equal to the training vector.
This allows us to evaluate the similarity between training and testing observations for the most difficult samples of the testing split, that is, when they only agree on the minimum number of attributes shared between training and testing ($c$).
At this point, we encode the attribute values into distributed vector representations, extracted from a pre-defined codebook of representations of size $(6^8,1024)$ (one for each possible combination of attributes) for holistic representations and $(8,2048)$ for compositional representations.
We use the same encodings for the values of different attributes; however, this does not influence the resulting cosine similarity, since it is computed position-wise (hence no interferences can happen between attributes).
For compositional representation, the final encoding consists of a simple concatenation of each attribute's value encoding (concatenation compositionality).
We also experimented with a superposition of the individual attributes' value encodings (additive compositionality) and obtained similar results, thanks to the quasi-orthogonality guaranteed by the high dimensionality of the representations.
In $n$-holistic representations, on the other hand, we get from the codebook a representation for the combination of the first $n$ common attributes, with $n\in[2, I-1]$, and then concatenate it with compositional representations for the remaining attributes.
In practice, this simulates the representations that would be extracted by a model that entangles the first $n$ attributes, while learning compositional representations of the remaining $c-n$ attributes.
In this scenario, it is expected that properly disentangled representations, where each attribute is encoded separately, have a non-zero ($\sfrac{1}{I}$) similarity between training and testing similarity also when $c=1$.
At the same time, it is also natural that $n$-holistic representations have 0 similarity with the corresponding training observation despite having $c-1$ attributes in common, since the representation for those $c-1$ attributes is not disentangled and depends also on the $c$-th attribute.
\begin{figure}[h!]
 \centering
 \includegraphics[width=.5\textwidth]{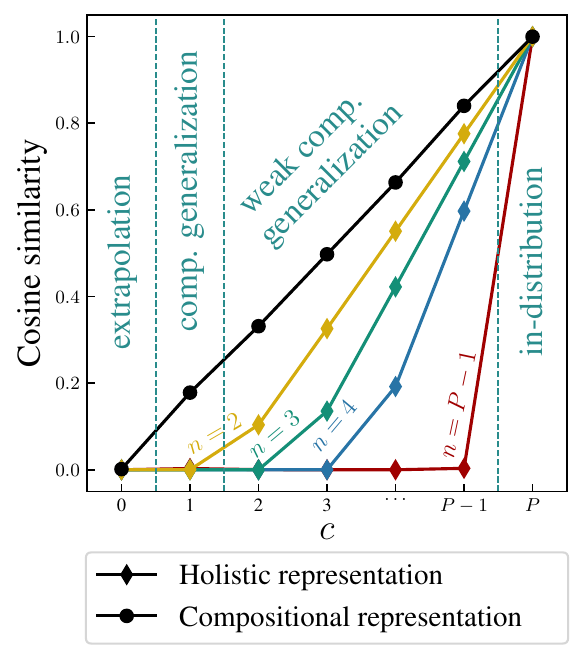}
 \caption{Ladder of compositional evaluation.}
 \label{fig:similaritydup}
\end{figure}

\subsection{Orthotopic evaluation framework - an example}
\label{sec:ortho_example}
Consider the simple dataset with three generative factors, $(g_1,g_2,g_3)$, defined as 
\begin{align*}
    X = \{(0, 3, 2),(2, 4, 0),(3, 4, 2),(1, 1, 1),(1, 3, 2)\}.    
\end{align*}
Assume that the projection subspace is $(g_1,g_2)$, and the thresholds in this space are $t_1=2,t_2=1$.
In this case, \texttt{exclude} would return all possible evaluation samples (given by the cartesian product of the values $g_i\geq t_i$).
Then, $X_{proj} \cap \text{exclusion}(s) = \{ (2, 4, 0), (3, 4, 2)\}$, which would be removed from the training data (and automatically become part of its complement, the evaluation data).

\clearpage
\section{Additional experimental setup details}
\label{app:additional_exp_ortho}
\subsection{Datasets}
\label{app:additional_datasets}
We consider a range of vision datasets widely used in representation learning literature.
Firstly, dSprites~\cite{matthey_dsprites_2017} is a synthetic dataset of low-resolution binary images of elementary shapes.
Similarly, I-RAVEN~\cite{hu_stratified_2021} is a visual analogical reasoning dataset containing synthetic images of B/W simple sprites.   
Shapes3D~\cite{kim_disentangling_2018} is a well-known dataset in the representation learning domain containing synthetic low-resolution images of 3-dimensional solids in a simple environment.
On a similar note, CLEVR~\cite{johnson_clevr_2017} is another dataset initially proposed in the context of elementary visual reasoning containing higher-quality renderings of solid shapes with more sophisticated generative factors compared to Shapes3D (e.g., material).
Cars3D~\cite{reed_deep_2015} is a dataset containing low-resolution synthetic renderings of simple cars' CAD models (increased subject complexity).
We also include a real-world dataset, MPI3D~\cite{gondal_transfer_2019}, containing pictures of physical solids attached to a robotic finger.

\paragraph{I-RAVEN}
I-RAVEN \cite{hu_stratified_2021} is a modified version of the RAVEN \cite{zhang_raven_2019} dataset, which removes biases in the answer sets. RAVEN is derived from the Raven Progressive Matrices (RPM) \cite{raven_ravens_1938}, a widely accepted test of human abstract visual reasoning capabilities. The RAVEN dataset serves to assess the visual reasoning capabilities of machines. The perception of the objects present in a panel and their attributes is the first step in solving an abstract visual reasoning task. For a system to generalize in visual abstract reasoning, it must also demonstrate strong generalization in perception.
While I-RAVEN consists of multiple panels per sample, we treat each individual panel as an independent sample. 
Examples from the dataset are illustrated in Figure \ref{fig:iraven_samples}. The dataset features the following generative factors:
\begin{itemize}
 \item shape (5 classes)
 \item size (6 classes)
 \item color (10 classes)
 \item angle (8 classes)
\end{itemize}
While the shape attribute usually has no natural order, shapes in I-RAVEN can be ordered by their number of vertices.
Out of these generative factors, we exclude angle because of its ambiguity, hence considering $\mathbf{G}=\{\text{shape}, \text{size}, \text{color}\}$ and $\mathbf{O}=\{\text{angle}\}$.

\begin{figure}[h!]
 \centering
 \includegraphics[width=8cm]{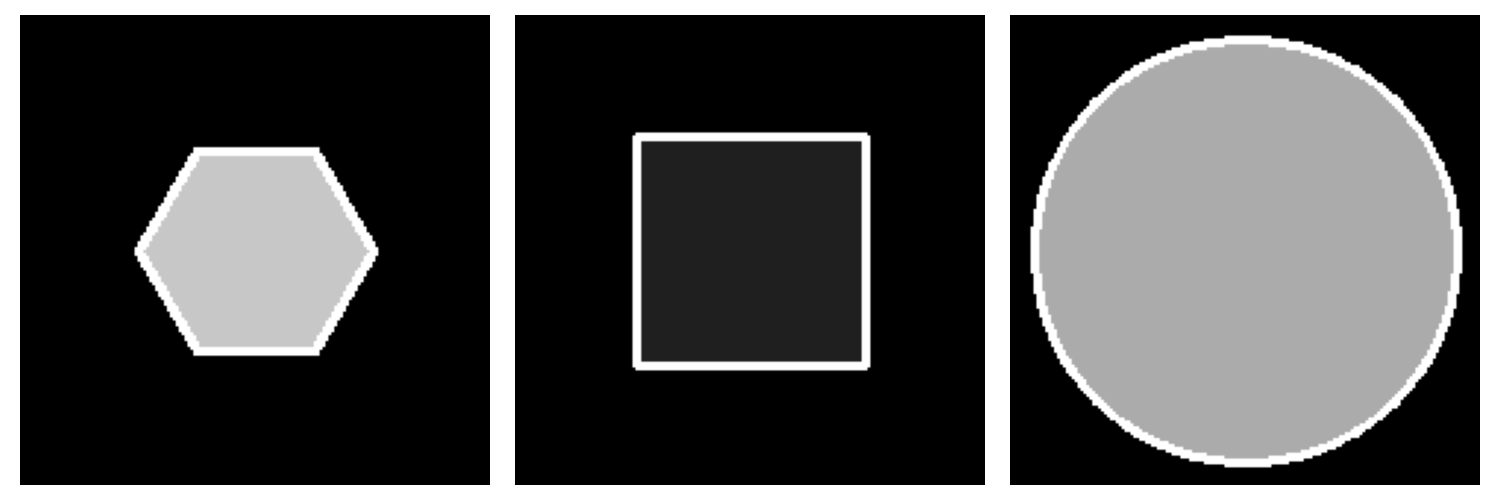}
 \caption{Three images from the I-RAVEN dataset.}
 \label{fig:iraven_samples}
\end{figure}

\paragraph{dSprites}
dSprites \cite{matthey_dsprites_2017} is a procedurally generated dataset of 2D shapes. The dataset was created to evaluate the disentanglement properties of unsupervised learning methods and is widely used as a benchmark. It features very simple objects with four independent factors of variation. Samples from the dataset are shown in Figure \ref{fig:dsprites_samples}. Objects in the dataset have the following generative factors:
\begin{itemize}
 \item shape (3 classes)
 \item scale (6 classes)
 \item orientation (40 classes)
 \item $x$-position (32 classes)
 \item $y$-position (32 classes)
\end{itemize}
Out of these generative factors, we exclude orientation because of its ambiguity, and effectively consider $\mathbf{G}=\{\text{shape}, \text{scale}, x,y\}$ and $\mathbf{O}=\{\text{orientation}\}$.
\begin{figure}[h!]
 \centering
 \includegraphics[width=8cm]{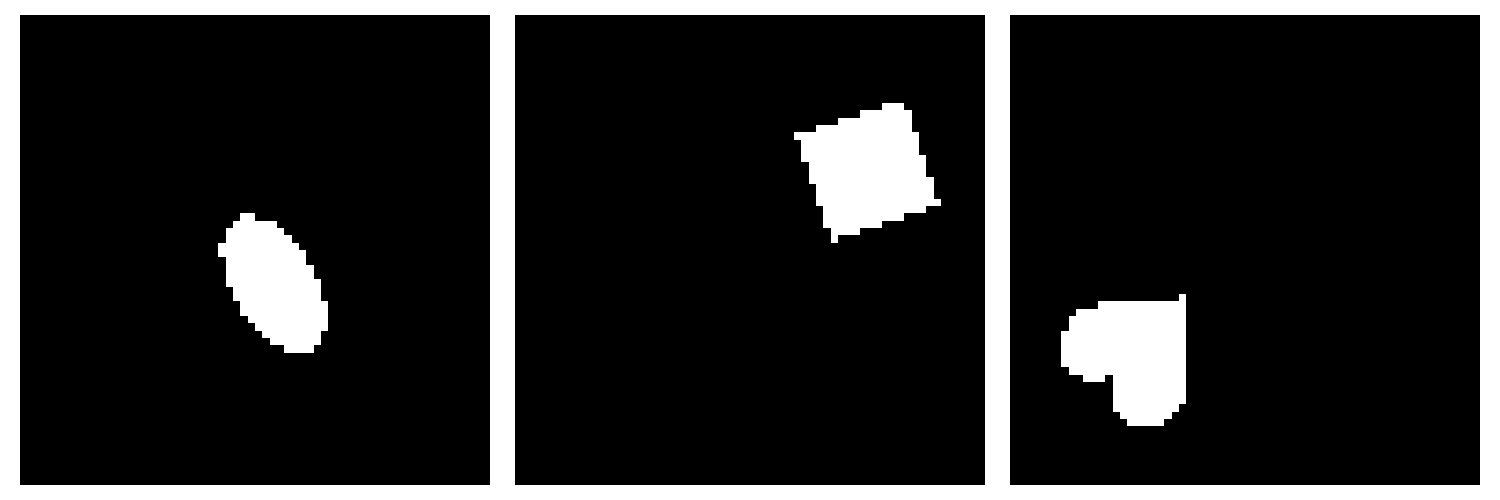}
 \caption{Three images from the dSprites dataset.}
 \label{fig:dsprites_samples}
\end{figure}

\paragraph{Shapes3D}
3D Shapes \cite{kim_disentangling_2018} is a procedurally generated dataset of 3D shapes. Like dSprites, it was created to assess the ability of models to reveal latent generative factors. Unlike dSprites, the objects are three-dimensional, and the objects are in an environment that also has attributes, such as the color of the wall behind an object.
Samples from the dataset are shown in Figure \ref{fig:shapes3d_samples}.
Images in 3D Shapes have the following generative factors:
\begin{itemize}
 \item floor color (10 classes)
 \item wall color (10 classes)
 \item object color (10 classes)
 \item scale (8 classes)
 \item shape (4 classes)
 \item orientation (15 classes)
\end{itemize}
Among these generative factors, we select all of them except for orientation to be task-relevant, yielding $\mathbf{G}=\{\text{shape}, \text{scale}, \text{floor hue}, \text{object hue}, \text{wall hue}\}$ and $\mathbf{O}=\{\text{orientation}\}$.
\begin{figure}[h!]
 \centering
 \includegraphics[width=8cm]{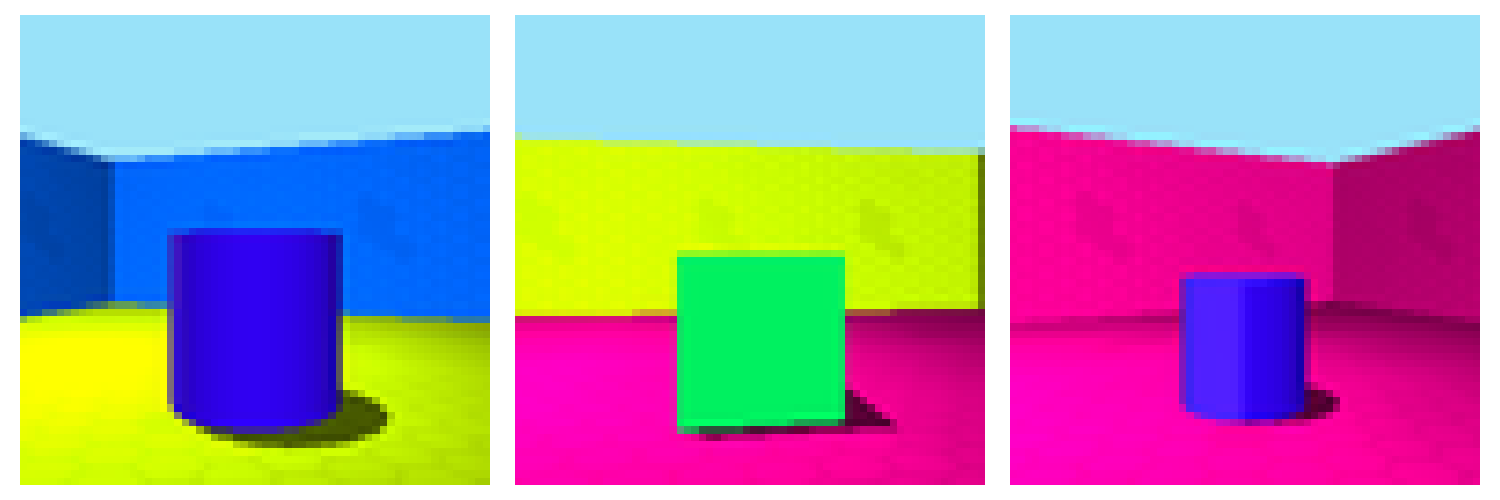}
 \caption{Three images from the 3D Shapes dataset.}
 \label{fig:shapes3d_samples}
\end{figure}

\paragraph{MPI3D}
MPI3D datasets are designed to benchmark representation learning algorithms in both simulated and real-world environments. We use the \textit{Real world simple shapes} dataset, which is a dataset of photographs of a robot arm holding up various simple objects \cite{gondal_transfer_2019}. In contrast to the other datasets used in this work, this is effectively a real-world dataset.
Samples from the dataset are shown in Figure \ref{fig:mpi3d_samples}.
The generative factors for this dataset are:
\begin{itemize}
 \item color (6 classes)
 \item shape (6 classes)
 \item size (2 classes)
 \item height (3 classes)
 \item background color (3 classes)
 \item x-axis (40 classes)
 \item y-axis (40 classes)
\end{itemize}
To avoid excluding a disproportionate amount of observations, we do not consider the size attribute (that can only assume two values) among the task-relevant generative factors.
Hence, for this dataset, $\mathbf{G}=\{\text{color}, \text{shape}, \text{height}, \text{bgcolor}, x,y\}$ and $\mathbf{O}=\{\text{orientation}\}$.
\begin{figure}[h!]
 \centering
 \includegraphics[width=8cm]{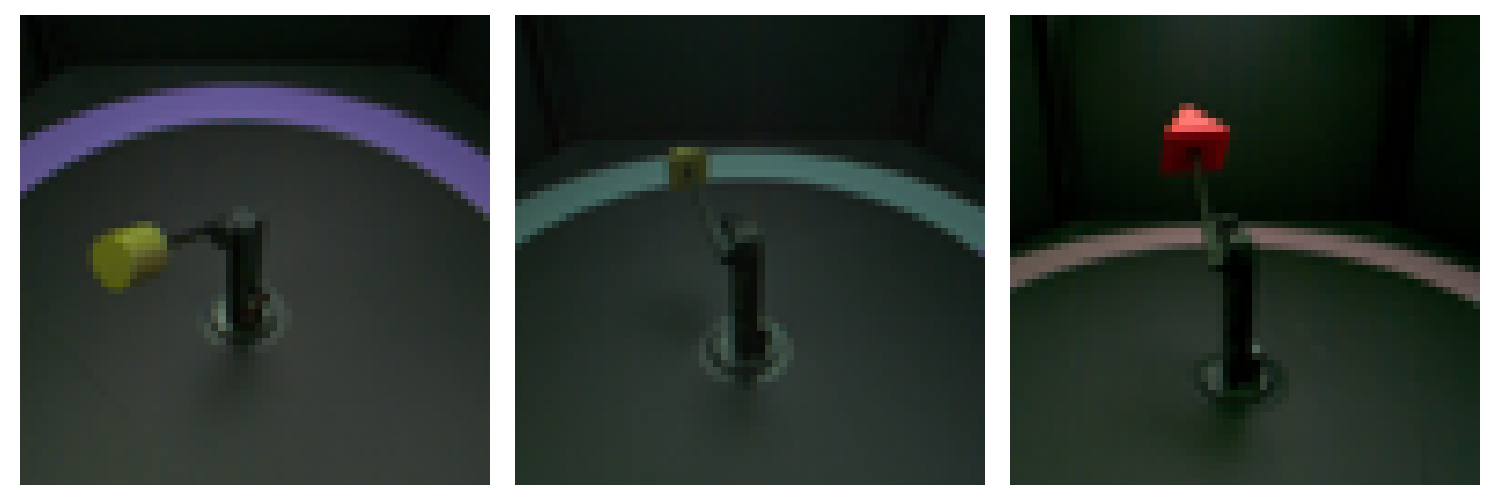}
 \caption{Three images from the MPI3D dataset.}
 \label{fig:mpi3d_samples}
\end{figure}

\paragraph{Cars3D}
Cars3D was introduced by \citep{reed_deep_2015} to study the problem of visual analogy-making using deep neural networks. The dataset contains images of a wide variety of cars. Among the datasets used in this work, its type attribute has the largest number of classes. 
Samples from the dataset are shown in Figure \ref{fig:cars3d_samples}.
The generative factors for this dataset are:
\begin{itemize}
 \item orientation (24 classes)
 \item elevation (4 classes)
 \item type (183 classes)
 \item height (3 classes)
\end{itemize}
Because of its ambiguity, we exclude the height parameter from the task-relevant generative factors.
Effectively, we consider $\mathbf{G}=\{\text{elevation}, \text{type}, \text{orientation}\}$ and $\mathbf{O}=\{\text{height}\}$.
\begin{figure}[h!]
 \centering
 \includegraphics[width=8cm]{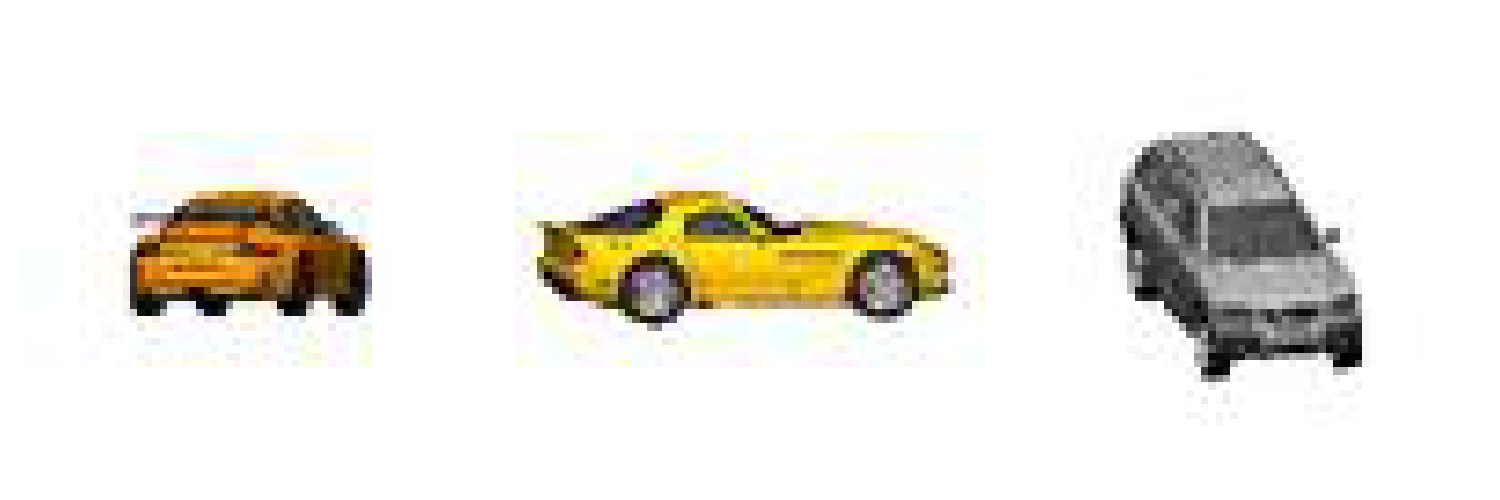}
 \caption{Three images from the 3D Cars dataset.}
 \label{fig:cars3d_samples}
\end{figure}

\paragraph{CLEVR}
CLEVR~\cite{johnson_clevr_2017} is a synthetic vision dataset initially introduced in the domain of elementary visual reasoning.
This dataset is composed of high-quality images of solids rendered with Blender. 
Compared to other synthetically generated datasets, the objects in this dataset include a wider range of object-level generative factors (e.g., the object texture).
Samples from the dataset are shown in Figure \ref{fig:clevr_samples}.
The generative factors for this dataset are:
\begin{itemize}
 \item shape (3 classes)
 \item size (3 classes)
 \item material (2 classes)
 \item color (8 classes)
\end{itemize}
In this case, $\mathbf{G}=\mathbf{F}=\{\text{shape}, \text{size}, \text{material}, \text{color}\}$, while $\mathbf{O}=\emptyset$.
\begin{figure}[h!]
 \centering
 \includegraphics[width=8cm]{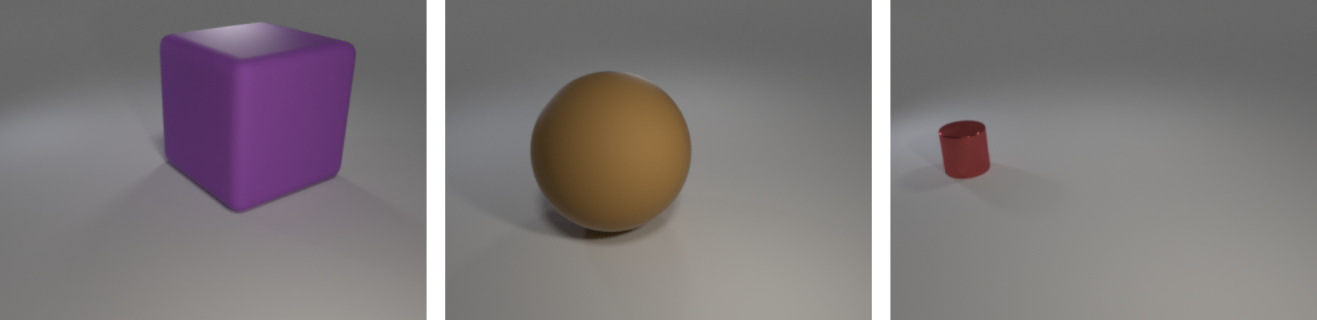}
 \caption{Three images from the CLEVR dataset.}
 \label{fig:clevr_samples}
\end{figure}

\newpage
As mentioned in the main text, we define attribute-wise exclusion thresholds for the orthotopic evaluation experiments, such that roughly 60\% of the observations of each dataset are used for training and the remaining 40\% for testing.
We report in Table \ref{tab:thresholds} the thresholds for each dataset.
The reported exclusion thresholds correspond to the attributes reported in the same table and correspond to the lowest index excluded from the interval of attributes.
Indices are considered in the interval $[0, n-1]$, where $n$ is the number of values that the attribute can assume.

\begin{table}[h!]
\centering
\begin{tabular}{lccc}
\toprule
\textbf{Dataset}  &  \textbf{Attributes}& $c$ &  \textbf{Exclusion thresholds} \\
\toprule
\multirow{4}{*}{dSprites} & \multirow{4}{*}{shape, scale, $x$, $y$} & 0 & $2,4,30,30$ \\
& & 1 & $2,3,14,14$ \\
& & 2 & $1,3,16,16$ \\
& & 3 & $1,1,4,4$ \\
\midrule
\multirow{3}{*}{I-RAVEN} & \multirow{3}{*}{shape, scale, $x$, $y$} & 0 & $9,5,4$ \\
& & 1 & $6,3,3$ \\
& & 2 & $3,2,1$ \\
\midrule
\multirow{3}{*}{Cars3D} & \multirow{3}{*}{orientation, elevation, type} & 0 & $26,3,160$ \\
& & 1 & $15,2,113$ \\
& & 2 & $6,1,43$ \\
\midrule
\multirow{4}{*}{CLEVR} & \multirow{4}{*}{shape, size, material, color} & 0 & $2,2,1,7$ \\
& & 1 & $2,2,1,7$ \\
& & 2 & $2,1,1,3$ \\
& & 3 & $1,1,1,1$ \\
\midrule
\multirow{5}{*}{Shapes3D} & \multirow{5}{*}{floor, wall, obj, scale, shape} & 0 & $9,9,9,7,3$ \\
& & 1 & $7,7,7,6,3$ \\
& & 2 & $6,5,6,5,2$ \\
& & 3 & $4,4,4,3,1$ \\
& & 4 & $2,2,1,1,1$ \\
\midrule
\multirow{6}{*}{MPI3D} & \multirow{6}{*}{color, shape, size, height, bgcolor, $x$, $y$} & 0 & $5,5,2,2,38,38$ \\
& & 1 & $5,4,2,2,34,34$ \\
& & 2 & $4,3,2,2,27,27$ \\
& & 3 & $3,4,1,2,22,22$ \\
& & 4 & $2,2,1,1,10,10$ \\
& & 5 & $1,1,1,1,1,1$ \\
\bottomrule
\end{tabular}
\caption{Attribute-wise exclusion thresholds used for each dataset.}
\label{tab:thresholds}
\end{table}

\newpage
\subsection{Models}
\label{app:models}
In this section, we provide additional details on the models used in our experiments.
Most of the models' implementation used in the experiments are taken from the \texttt{torchvision.models} zoo.
The models are either directly used or extended in the case of custom implementations (e.g., in the case of the ablation of alternative activation functions presented in Appendix \ref{app:prelu}).

The MLP model is taken from \citet{schott_visual_2022}, and consists of the following layers: \texttt{[Linear(64*64*number-channels, 90), ReLU(), Linear(90, 90), ReLU(), Linear(90, 90), ReLU(), Linear(90, 90), ReLU(), Linear(90, number-factors)]}.

The checkpoints used for the pre-trained models are, respectively,
\begin{itemize}
    \item DenseNet-121: \url{https://download.pytorch.org/models/densenet121-a639ec97.pth};
    \item ResNet-101: \url{https://download.pytorch.org/models/resnet101-cd907fc2.pth};
    \item DenseNet-152: \url{https://download.pytorch.org/models/resnet152-f82ba261.pth};
\end{itemize}

The proposed Explicitly Disentangled (ED) model instantiates a different net (RN-18) for every task-relevant generative factor.
Compared to Separate Architecture (SA)~\cite{madan_when_2022}, ED presents different improvements.
First, for single-channel inputs, ED adjusts the ResNet to be single channel instead of repeating the input on three channels as SA does. 
Second, ED performs \textit{maxpool} before \textit{BatchNorm}, while SA does it after.
Then, ED uses a \textit{tanh} activation before the final linear layer and no output activation, while the SA uses \textit{ReLU} as output activation. 
Finally, a major difference lies in the type of readout used by the model, as discussed in the following Appendix \ref{app:fpe}.

The same pre-processing is applied to the input of every model.
The pre-processing is dataset-dependent and corresponds to the following augmentations:
\begin{itemize}
    \item Cars3D $\rightarrow$ \texttt{[resize(64)]}
    \item CLEVR $\rightarrow$ \texttt{[resize(128)]}
    \item dSprites $\rightarrow$ \texttt{[resize(64), gaussBlur(kernel=(23,23), sigma=(0.1,0.3)]}
    \item I-RAVEN $\rightarrow$ \texttt{[resize(64), pad(5), randCrop(64), randRotation(360),  gaussBlur(kernel=(41,41), sigma=(0.1,0.3)]}
    \item MPI3D $\rightarrow$ \texttt{[resize(64), gaussBlur(kernel=(23,23), sigma=0.2]}
    \item Shapes3D $\rightarrow$ \texttt{[resize(64), gaussBlur(kernel=(23,23), sigma=0.2]}
\end{itemize}

Figure \ref{fig:blueprints} includes a conceptual overview of the differences between the different model blueprints studied in this work.
For monolithic models, the same network is used to extract a unique representation of the input image, which is then fed into the different classification heads that predict logits for the individual attributes.
On the other hand, ED models disentangle the full feature extraction and feed a representation which, ideally, should only contain information related to a specific attribute.
Attribute invariant networks represent a compromise between the two: an initial part of the feature extraction is disentangled, but at a certain point, all the representations are concatenated over a new dimension and processed using the same weights.
The advantage of AINs lies in their parameter efficiency: the shared weights in the second part of the feature extraction do not need to be replicated for each attribute, as the model learns a single set of weights shared across different attributes.
ED can be seen as a special case of AINs, where the entire model is disentangled and there are no shared parameters.
\begin{figure}[h!]
    \centering
    \includegraphics[width=1\linewidth]{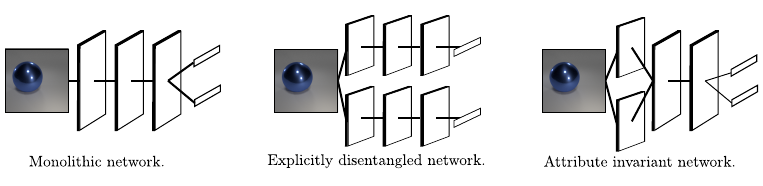}
    \caption{Comparison between different architectural blueprints studied in this work.}
    \label{fig:blueprints}
\end{figure}

\newpage
\subsection{Linear and FPE-based readouts for ED}
\label{app:fpe}
In the Shared Architecture from \citet{madan_when_2022}, the prediction of the attributes' logits is performed through simple linear readouts on top of the extracted attribute representations.
Building on this, our ED architecture aims to further improve the performance on compositional generalization by encoding prior knowledge regarding relationships between attributes.
Specifically, it encourages the model to respect the structural similarity of neighboring classes. 
This enhancement is particularly valuable for attributes that endow a natural definition of ordering, where the similarity between adjacent attribute values (e.g., color shades or size increments) should be reflected in the learned representations. 
The ED model leverages fractional power encodings (FPE)~\cite{plate_holographic_1992,frady_computing_2022} to encode these attributes in a structured way, preserving task-relevant patterns in the embeddings and endowing similarity between representations.

In FPE, an integer attribute $i$ is encoded by binding a random base vector $\mathbf{z} \sim p(\mathbf{z})$ (where $p(\mathbf{z})$ depends on the specific initialization scheme, in our case $\mathcal{N}(0,1)$), known as a phasor, to itself 
$i$ times. 
\begin{equation}
    \mathbf{z}(i) := \mathbf{z}^{(\circ i)} = \mathbf{z} \circ ... \circ \mathbf{z}
\end{equation}
We use circular convolution as the binding operation, which can be efficiently computed via the Fast Fourier Transform (FFT), where binding corresponds to element-wise multiplication in the frequency domain. 
This also allows for adjusting the relative distance between encoded values to non-integer values by multiplying by a parameter $\alpha \in \mathbb{R}$:
\begin{equation}
 \mathbf{z}(i, \alpha) = F^{-1}(F\mathbf{z})^{\alpha \cdot i}
\end{equation}
where $F$ and $F^{-1}$ are the Vandermonde matrices representing the discrete Fourier transform and its inverse.

The distribution of phases in the FPE vectors determines the shape of the kernel in the embedding space \cite{frady_computing_2022}. 
We leverage a normal distribution, which leads to the kernel induced by the vector product of the embeddings approximating a Gaussian kernel (shown in Figure \ref{fig:kernel}). 
Increasing the variance of the phase distribution leads to a decrease in kernel width.
We adjust the variance of the phase distribution based on each attribute's specific characteristics, using a larger variance for ungraded attributes like type and a smaller variance for graded attributes such as size. 
This allows the model to incorporate prior knowledge regarding the relationships between classes, ensuring that the similarity between attributes' values is maintained in their corresponding representations and can be eventually leveraged by the model.
\begin{figure}[h!]
    \centering
    \includegraphics[width=0.5\linewidth]{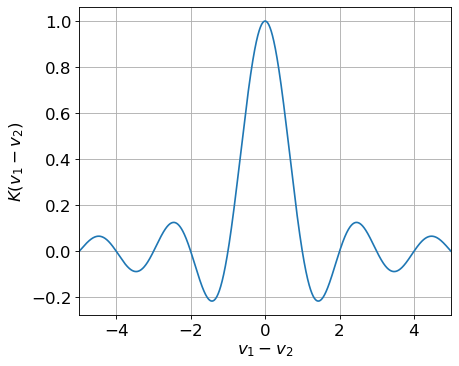}
    \caption{Kernel induced by the FPE initialization on the target representations. Computing the similarity between two attribute values $v_1$ and $v_2$ in the representation space yields the shown similarity kernel $K(v_1-v_2)$.}
    \label{fig:kernel}
\end{figure}

How is this form of initialization implemented in the model?
During initialization, a base phasor is sampled.
Based on it, a representation vector for each possible value of each attribute is produced (following the previously reported methodology). 
These vectors are then aggregated in a codebook $\mathbf{C}$ and become an invariant component of the model. 
At inference time, each attribute's representation extracted by the model is compared with the corresponding $\mathbf{C}$ using cosine similarity, yielding $n$ different similarity values (one for each attribute's value).
The final prediction is eventually produced by taking the argmax of these similarity indices, i.e., computing the index of the most similar representation in $\mathbf{C}$.
The model is trained using the cross-entropy loss between the ground-truth one-hot vector and the vector of cosine similarities. 

In practice, FPE readouts perform slightly better than linear readouts on average.
The results of an ablation between the two types of readout are included in Figure \ref{fig:fpelinear}.
We can observe that, for most of the datasets, the difference between the two is marginal.
However, in the case of dSprites FPE readouts significantly improve the results.
On the other hand, FPE readouts perform slightly worse than linear ones on MPI3D.

\begin{figure}[h!]
 \centering
 \begin{subfigure}[b]{0.3\textwidth}
 \centering
 \includegraphics[width=\textwidth]{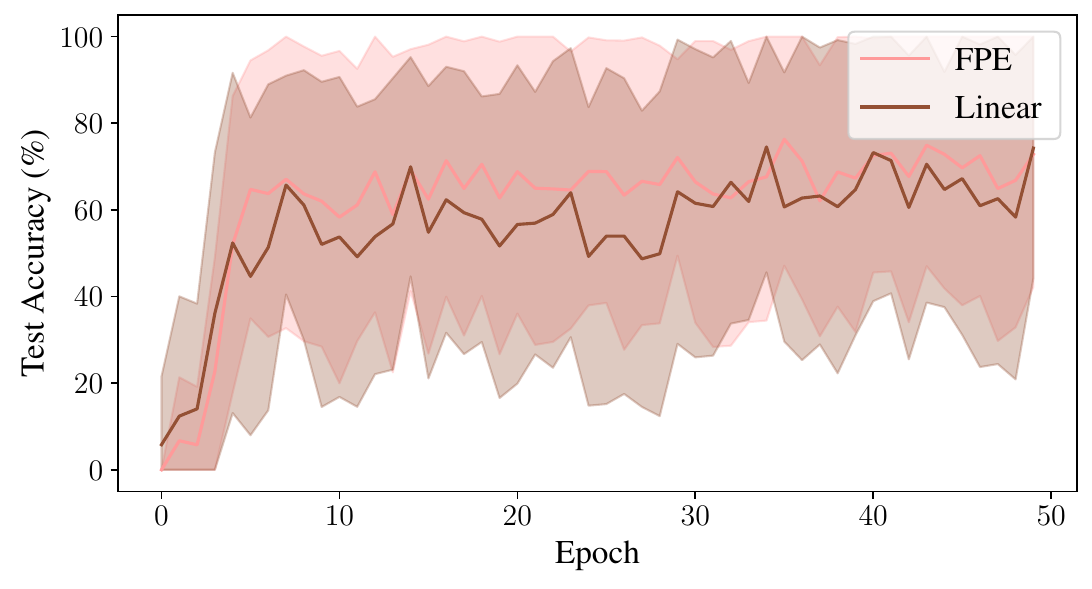}
 \caption{Cars3D}
 \end{subfigure}
 \hfill
 \begin{subfigure}[b]{0.3\textwidth}
 \centering
 \includegraphics[width=\textwidth]{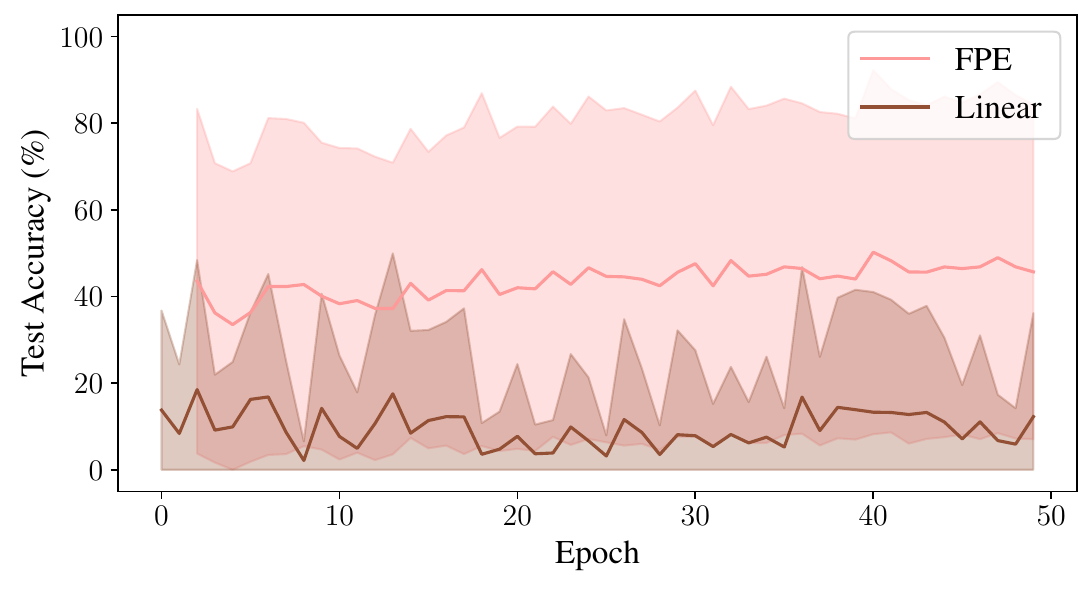}
 \caption{dSprites}
 \end{subfigure}
 \hfill
 \begin{subfigure}[b]{0.3\textwidth}
 \centering
 \includegraphics[width=\textwidth]{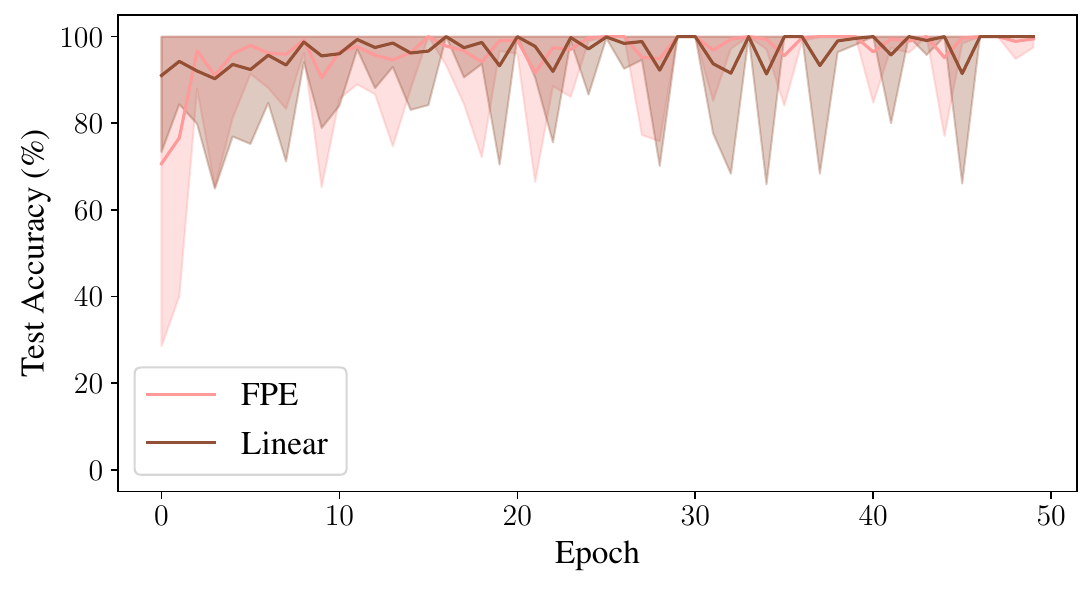}
 \caption{I-RAVEN}
 \end{subfigure}
  \hfill
 \begin{subfigure}[b]{0.3\textwidth}
 \centering
 \includegraphics[width=\textwidth]{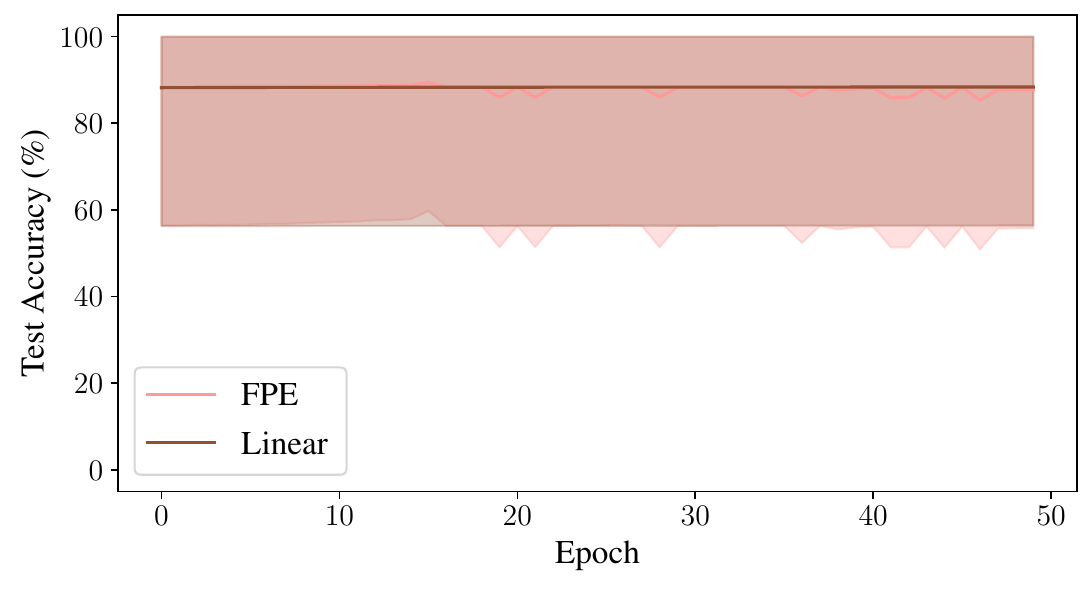}
 \caption{Shapes3D}
 \end{subfigure}
 \begin{subfigure}[b]{0.3\textwidth}
 \centering
 \includegraphics[width=\textwidth]{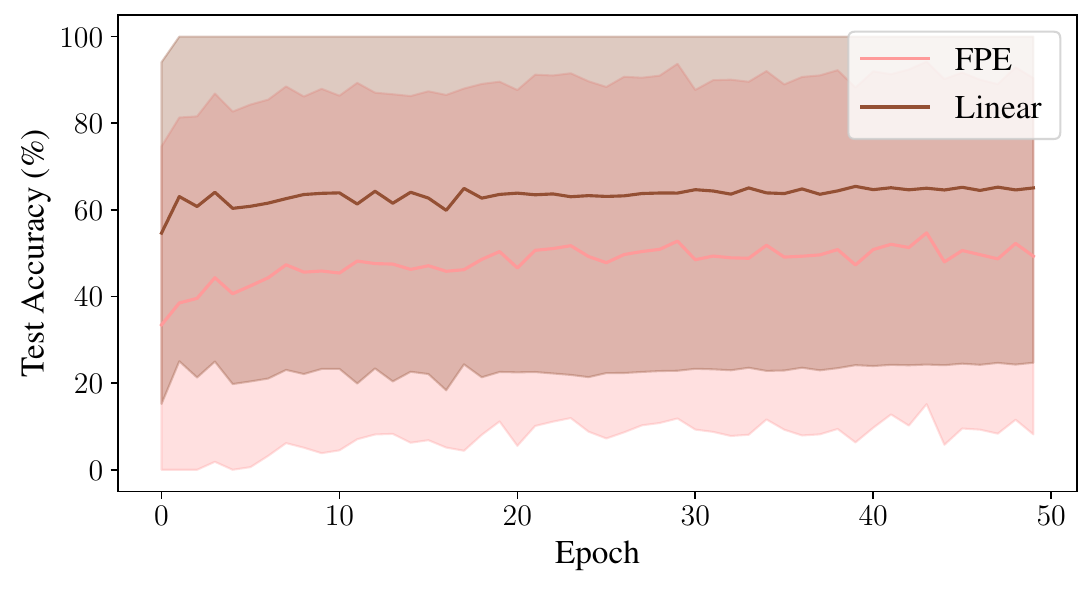}
 \caption{MPI3D}
 \end{subfigure}
 \hfill
 \caption{Ablation on the difference between linear and kernel-based readouts for the ED model. The standard deviation is reported across five different random seeds.}
 \label{fig:fpelinear}
\end{figure}

%
%

\clearpage
\subsection{Model selection}
\label{app:modelselection}
We include here a small ablation on the metric used for model selection in our experiments.
Identifying a robust and consistent selection metric is, indeed, of vital importance to select models that balance in-distribution (ID) and out-of-distribution (OOD) performances.
Ideally, we would like to select a model that performs well on both ID and OOD settings.
However, in the majority of the situations, there is a trade-off between the two: models that perform well in-distribution generalize poorly in OOD scenarios, whereas models that perform well OOD are usually underfit ID. 

Based on these observations, we investigate three different selection metrics:
\begin{itemize}
 \item \textbf{In-distribution validation accuracy} (ID); this metric is computed on a held-out split of $10\%$ of the training data (containing only the training combinations of attributes) and validates in-distribution generalization.
 \item \textbf{Out-of-distribution validation accuracy} (OOD); this metric is computed on samples from a subset of combinations of factors (in our experiments, one combination) held out from the training data, validating compositional generalization.
 \item \textbf{Weighted in-out-distribution validation accuracy} (WIO), representing a weighted composition of ID and OOD selection metrics. In particular, we define  $ \text{val}_\text{WIO} = \text{val}_\text{ID} + \sfrac{1}{\lambda} \cdot  \left(\text{val}_\text{OOD} - 100 \right)$, where $\lambda$ is an hyperparameter that allows to balance the relative importance of the two. In our experiments, we set $\lambda=10$ (OOD validation data only marginally influences the WIO accuracy) to avoid the selection of ``under-fit'' models that perform well out-of-distribution but under-perform in-distribution.
\end{itemize}

We compare the test accuracy achieved by the models selected based on each metric against the best test accuracy achieved at any point in time during the training by the model (oracle test accuracy).
The comparison is performed using all the training traces from the pair-wise experiments presented in Section \ref{app:pairwise}, including every dataset, model, seed, and attribute combination.
The results of this ablation are included in Figure \ref{fig:selection_results}.
From these results, we can observe how the in-distribution validation accuracy is usually a better metric for model selection with an average accuracy delta with the oracle selection equal to $9.26\%$.
ID is closely followed by WIO, which shows an increase of $0.71\%$ in the test accuracy delta.
Finally, OOD proved to be the worst selection metric among the investigated ones, with an average $\Delta$ from the oracle selection of $9.97\%$ test accuracy.
Hence, in this work, we opt to use the ID validation accuracy itself to perform model selection.
However, we highlight that the gap between the selected model and the best-performing model on test data is still large ($\approx10\%$), which means that significant gains in future works could still be achieved by discovering more robust and balanced selection metrics.
\begin{figure}[h!]
 \centering
 \begin{subfigure}[b]{0.3\textwidth}
 \centering
 \includegraphics[width=\textwidth]{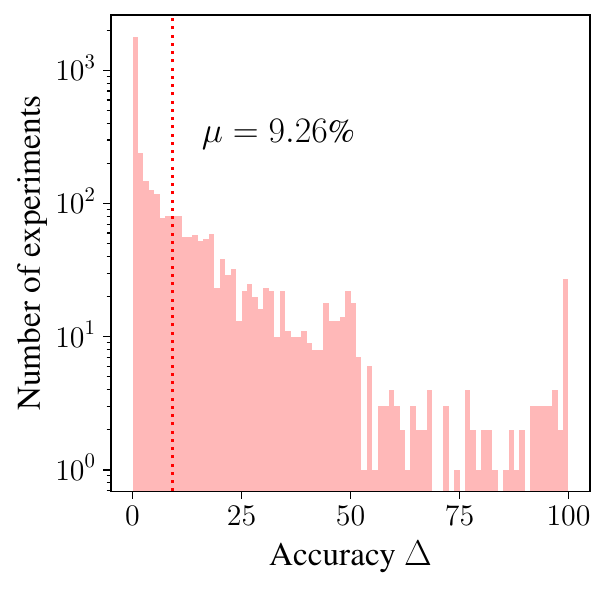}
 \caption{$\Delta$ between oracle and ID-selected test accuracy.}
 \end{subfigure}
 \hfill
 \begin{subfigure}[b]{0.3\textwidth}
 \centering
 \includegraphics[width=\textwidth]{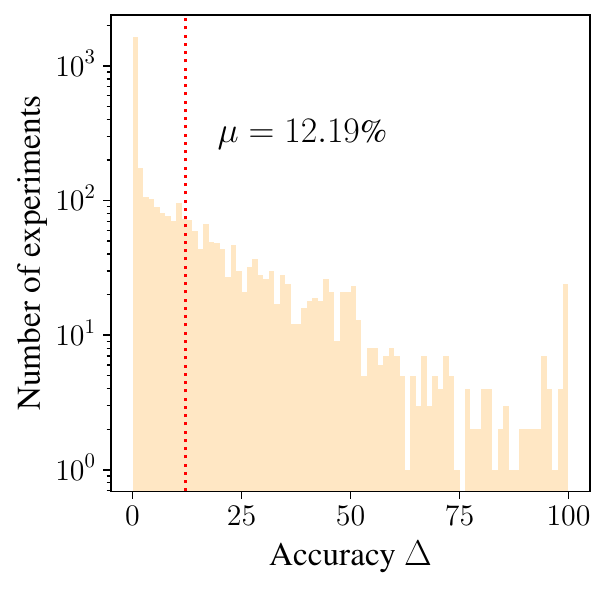}
 \caption{$\Delta$ between oracle and OOD-selected test accuracy.}
 \end{subfigure}
 \hfill
 \begin{subfigure}[b]{0.3\textwidth}
 \centering
 \includegraphics[width=\textwidth]{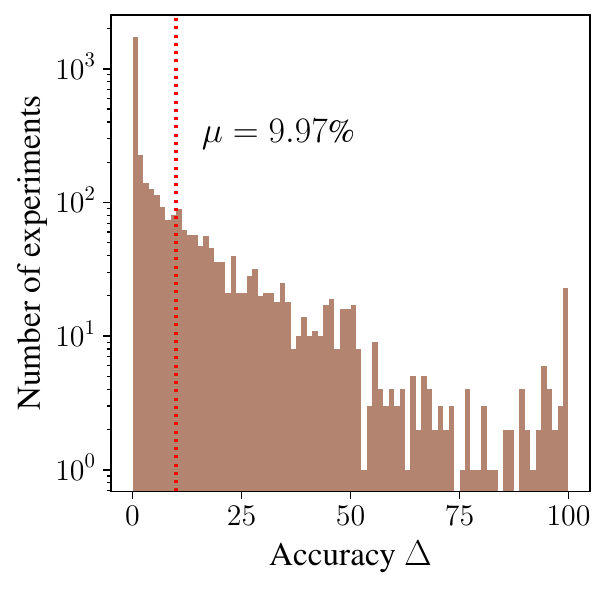}
 \caption{$\Delta$ between oracle and WIO-selected test accuracy.}
 \end{subfigure}
 \caption{Ablation of different model selection metrics.}
 \label{fig:selection_results}
\end{figure}

\clearpage
\section{Evaluating pair-wise compositional generalization}
\label{app:pairwise}
In this supplementary section, we include additional results on compositional generalization evaluated on combinations of only two attributes (pair-wise evaluation).
This corresponds to the special case of the general orthotopic evaluation formulation presented in Section \ref{app:degrees}, where only one orthotope is excluded from the training data and used for evaluation purposes.
Generating the splits corresponds to projecting the entire dataset onto two dimensions, one for each generative factor, and excluding the observations corresponding to specific combinations of the factors.
The process is shown in Figure \ref{fig:pairwiseval} (a) for a fictitious example where the generative factors are shape and color.
In practice, the models are trained to predict only two generative factors $g_1\in\mathbf{G}, g_2\in\mathbf{G}$ while being invariant to all other factors $\mathbf{G}\setminus\{g_1,g_2\}$.
A general blueprint of the inference pipeline in this setting is shown in Figure \ref{fig:pairwiseval} (b).
By definition, pair-wise evaluation is expensive in terms of the number of training and evaluation runs (scaling quadratically in the number of attribute combinations) since a different model has to be trained for each combination.
However, it maximizes the size of the training split and evaluates compositionality with a fine granularity on the attributes.

\begin{figure}[h!]
 \centering
 \includegraphics[width=1\linewidth]{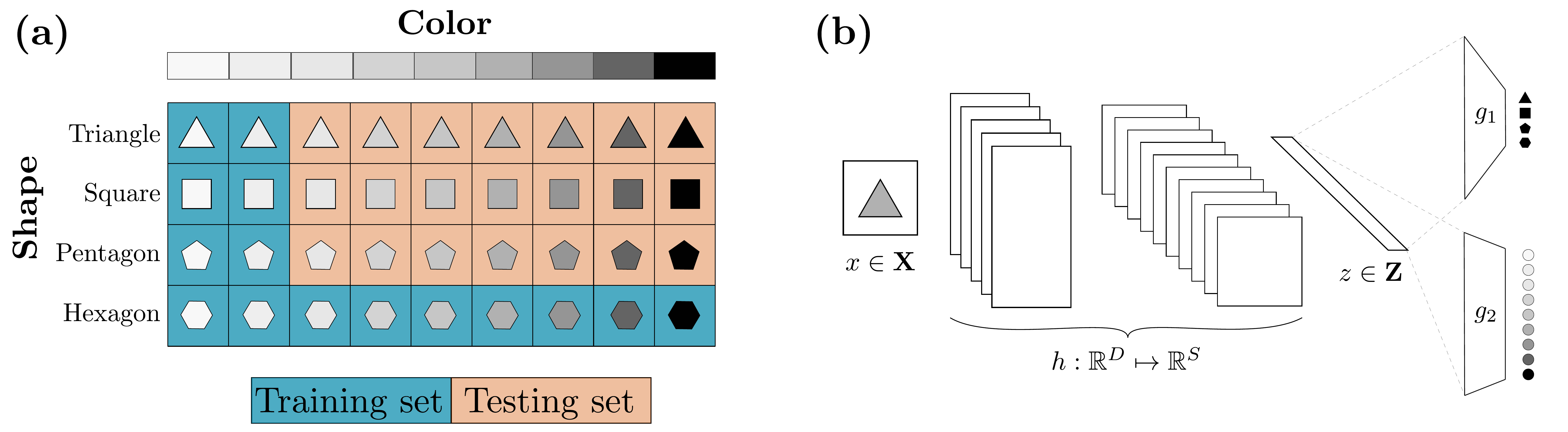}
 \caption{Pairwise compositional evaluation.}
 \label{fig:pairwiseval}
\end{figure}

\subsection{Experimental setup}

\paragraph{Datasets and models.}
We investigate pair-wise compositional generalization on a subset of the datasets (dSprites, I-RAVEN, Cars3D, Shapes3D, and MPI3D) and the whole range of models studied in the main paper (monolithic models, such as ResNets, DenseNets, ViT, MLP, ConvNeXt, and a supervised disentangled model, ED).
Contrary to the experiments on orthotopic evaluation, in this experiments we only considered a subset of the generative factors of each dataset.
In particular, we include:
\begin{itemize}
 \item \textit{Cars3D} $\rightarrow$ elevation, type, orientation;
 \item \textit{dSprites} $\rightarrow$ scale (size), shape;
 \item \textit{I-RAVEN} $\rightarrow$ color, type (shape), size;
 \item \textit{Shapes3D} $\rightarrow$ floor hue, wall hue, obj. hue, scale (size), shape;
 \item \textit{MPI3D} $\rightarrow$ color, background, height, shape, size.
\end{itemize}
To average out the effect of stochasticity in the training process, we repeat every training run with 5 different seeds and average the results on them.
Also in this case, motivated by the ablation in Section \ref{app:modelselection}, we use in-distribution validation accuracy (held-out validation set of 10\% of the training data) to perform model selection.

\paragraph{Split difficulty.}
The pair-wise evaluation setting is equivalent to orthotopic evaluation with fixed $c=1$, considering only a subset of 2 attributes at a time.
We fix the percentage of the combinations excluded during training at $10\%$ and find adaptive attribute-wise thresholds for every dataset and attribute combination.
The adaptive discovery of the attribute-wise threshold is performed using a constraint satisfaction solver (CSP), which adjusts the exclusion of combinations to match the desired difficulty fraction closely.
Since directly optimizing the number of excluded combinations for each attribute can lead to imbalanced difficulties, where some attributes are disproportionately affected by the exclusion process, we add a regularization to the objective function.
This regularization term penalizes large disparities between the relative exclusion sizes of different attributes, thus encouraging a more evenly distributed difficulty across different attributes.

\newpage
\subsection{Experimental results}
\label{app:pairwise_full}
In this section, we report the experimental results of the large-scale evaluation on pair-wise compositional generalization, obtained training a total of more than 3600 different models.
We report the evaluation results for each dataset, including training accuracy (in-distribution, training data), ID validation accuracy (in-distribution, held-out data), OOD validation accuracy (composition generalization, held-out from the training data), and test accuracy (OOD testing split).
The mean accuracy and the Standard Error of the Mean (SEM, w.r.t. different attribute combinations and seeds) are reported for each model.

\paragraph{Cars3D}
The results for the Cars3D dataset are included in Table \ref{tab:pairwise_cars3d}.
The explicit disentanglement (ED) model only slightly outperforms all the other baselines in-distribution but significantly overperforms in both OOD validation (84.24\%, 4.2\% more than the closest monolithic model) and, most importantly, test accuracy (73.38\%, 13.73\% more than the closest monolithic model).
The majority of the models completely memorized the training data (100\% accuracy) and converged in-distribution (95\% validation accuracy) but failed to learn a solution that could generalize well to unseen combinations of the generative factors.

\begin{table}[h!]
 \centering
\resizebox{\linewidth}{!}{
\begin{tabular}{lccccccccc}
\toprule
  \multirow{2}{*}{\textbf{Model}} & \multirow{2}{*}{\textbf{Pretrained}} & \multicolumn{2}{c}{\textbf{Train}} & \multicolumn{2}{c}{\textbf{ID Val}} & \multicolumn{2}{c}{\textbf{OOD Val}} & \multicolumn{2}{c}{\textbf{Test}} \\
& & AVG &  SEM &  AVG & SEM & AVG & SEM & AVG &  SEM \\
\toprule
MLP & \xmark &  47.50 & 5.72 &   47.14 & 5.78 & 12.51 &  5.66 & 6.11 & 1.32 \\
\midrule
ResNet-18 &\xmark &99.10 & 0.17 &95.85 & 0.31 &68.45 & 10.18 &52.31 & 6.56 \\
ResNet-34 &\xmark & 99.08 & 0.16 &   95.91 & 0.32 &65.02 &  9.87 &52.73 & 6.23 \\
ResNet-50 & \xmark & 99.02 & 0.18 &   95.68 & 0.39 &78.25 &  8.96 &57.21 & 4.64 \\
ResNet-101 &\xmark & 99.00 & 0.18 &   95.67 & 0.42 &77.24 &  9.07 &59.65 & 4.26 \\
ResNet-101& \cmark &98.94 & 0.14 &   82.48 & 1.62 & 45.77 & 10.03 & 26.34 & 2.17 \\
ResNet-152 & \xmark & 99.09 & 0.20 &   95.15 & 0.52 & 73.20 &  9.72 & 55.93 & 4.46 \\
ResNet-152 & \cmark & 99.03 & 0.16 &   83.47 & 1.41 &49.32 &  9.35 & 28.85 & 2.37 \\
\midrule
DenseNet-121& \xmark&99.12 & 0.16 &96.06 & 0.29 &70.14 &  9.72 & 56.21 & 5.92 \\
DenseNet-121 & \cmark&48.96 & 2.71 &   29.11 & 2.30 &19.07 &  6.81 & 6.56 & 1.21 \\
DenseNet-161 &\xmark&99.06 & 0.19 &   96.06 & 0.27 &67.46 & 12.14 & 61.07 & 6.64 \\
DenseNet-201 &\xmark&99.03 & 0.18 &   95.74 & 0.29 &69.14 & 12.16 & 59.54 & 6.28 \\
\midrule
ConvNeXt-small  & \xmark  &  98.87 & 0.22 &   94.03 & 0.43 & 68.15 & 10.01 & 51.41 & 6.12 \\
ConvNeXt-base  & \xmark  &   98.88 & 0.22 &   93.98 & 0.41 & 66.38 &  9.86 &  51.90 & 5.78 \\
\midrule
WideResNet & \xmark &99.05 & 0.18 &   95.79 & 0.43 & 80.04 &  8.79 & 59.44 & 4.62 \\
\midrule
ViT & \xmark &98.86 & 0.18 &   87.68 & 1.17 & 45.53 & 10.21 & 31.11 & 5.01 \\
\midrule
Swin-tiny& \xmark &98.85 & 0.16 &   93.10 & 0.38 & 64.21 &  8.48 & 43.96 & 6.62 \\
Swin-base  & \xmark & 98.80 & 0.16 &   93.93 & 0.33 & 60.76 &  8.52 & 51.32 & 5.73 \\
\midrule
ED & \xmark & 99.06 & 0.18 &   \textbf{96.37} & 0.33 & \textbf{84.24} &  8.95 & \textbf{73.38} & 4.62 \\   
\bottomrule
\end{tabular}
}
 \caption{Pairwise evaluation on the Cars3D benchmark.}
 \label{tab:pairwise_cars3d}
\end{table}

\newpage
\paragraph{dSprites}
A similar behavior was also observed in the dSprites dataset, whose evaluation results are reported in Table \ref{tab:pairwise_dsprites}.
Compared to the Cars3D, here the in-distribution results for almost every model reach exactly 100\%.
This is expected to some extent, since the dataset is much more simplistic than Cars3D (containing only binary low-resolution images of simple geometric shapes).
However, the failure on compositional OOD data is symmetrically more pronounced: every monolithic model (with very few exceptions, such as the ResNet-152 and ResNet-50) score exactly 0\% on test data.
ED, on the other hand, is then only model whose performances suffer from only a minor drop between in- and out-of-distribution, achieving an average test accuracy of 84.25\%.
This impressive drop of almost every model might be a result of different factors.
Firstly, the models could possibly be too overparametrized for task and have enough capacity for simply memorizing entangled information about the combinations perfectly, virtually reducing the pressure to learn compositional representation to zero.
Furthermore, in this dataset the only combination studied is the combination of shape and size attributes, which is notoriously difficult (e.g., also \citet{montero_lost_2022} observed it) but in other datasets is averaged by many other ``easier'' combinations of attributes.

\begin{table}[h!]
 \centering
\resizebox{\linewidth}{!}{
\begin{tabular}{lccccccccc}
\toprule
\multirow{2}{*}{\textbf{Model}} & \multirow{2}{*}{\textbf{Pretrained}} & \multicolumn{2}{c}{\textbf{Train}} & \multicolumn{2}{c}{\textbf{ID Val}} & \multicolumn{2}{c}{\textbf{OOD Val}} & \multicolumn{2}{c}{\textbf{Test}} \\
& & AVG &  SEM &  AVG & SEM & AVG & SEM & AVG &  SEM \\
\midrule
MLP & \xmark &98.46 & 0.35 &   98.68 & 0.35 &0.01 &  0.01 &0.00 &  0.00 \\
\midrule
ResNet-18 &\xmark &100.00 & 0.00 &  100.00 & 0.00 &0.00 &  0.00 & 0.00 &  0.00 \\
ResNet-34 &\xmark &100.00 & 0.00 &  100.00 & 0.00 &0.00 &  0.00 &0.00 &  0.00 \\
ResNet-50 &\xmark &100.00 & 0.00 &  100.00 & 0.00 & 0.01 &  0.01 & 10.79 & 10.79 \\
ResNet-101 &\xmark &100.00 & 0.00 &  100.00 & 0.00 &  0.01 &  0.01 & 0.00 &  0.00 \\
ResNet-101 &\cmark &100.00 & 0.00 &  100.00 & 0.00 &  0.00 &  0.00 & 0.00 &  0.00 \\
ResNet-152 &\xmark &100.00 & 0.00 &  100.00 & 0.00 &  2.73 &  2.73 & 17.77 & 11.58 \\
ResNet-152 &\cmark & 99.99 & 0.00 &  100.00 & 0.00 &  0.00 &  0.00 & 0.00 &  0.00 \\
\midrule
DenseNet-121& \xmark& 100.00 & 0.00 &  100.00 & 0.00 &  0.00 &  0.00 & 0.00 &  0.00 \\
DenseNet-121& \cmark&  59.31 & 0.14 &   59.27 & 0.17 & 0.63 &  0.27 & 0.51 &  0.16 \\
DenseNet-161& \xmark& 100.00 & 0.00 &  100.00 & 0.00 & 0.00 &  0.00 & 0.00 &  0.00 \\
DenseNet-201& \xmark&  100.00 & 0.00 &  100.00 & 0.00 & 0.00 &  0.00 & 0.00 &  0.00 \\
\midrule
ConvNeXt-small  & \xmark  &  99.97 & 0.01 &  100.00 & 0.00 &  0.00 &  0.00 & 0.00 &  0.00 \\
ConvNeXt-base  & \xmark  &  99.98 & 0.01 &  100.00 & 0.00 &  2.22 &  2.22 & 0.00 &  0.00 \\
 \midrule
WideResNet & \xmark  & 100.00 & 0.00 &  100.00 & 0.00 & 6.76 &  4.71 & 0.09 &  0.07 \\
\midrule
 ViT  & \xmark  & 91.42 & 3.93 &   95.12 & 2.53 &  0.00 &  0.00 & 0.00 &  0.00 \\
\midrule
Swin-tiny& \xmark & 98.95 & 0.27 &   99.74 & 0.08 &  0.00 &  0.00 &0.00 &  0.00 \\
Swin-base& \xmark & 17.94 & 4.52 &   22.40 & 6.11 &  0.00 &  0.00 & 0.00 &  0.00 \\
\midrule
ED & \xmark & 100.00 & 0.00 &  100.00 & 0.00 & \textbf{79.87} & 19.97 & \textbf{84.25} & 15.69 \\
\bottomrule
\end{tabular}
}
 \caption{Pairwise evaluation on the dSprites benchmark.}
 \label{tab:pairwise_dsprites}
\end{table}

\newpage
\paragraph{I-RAVEN}
The results for the I-RAVEN dataset are reported in Table \ref{tab:pairwise_iraven}.
Also in this case the realization of an explicit disentanglement of the generative factors for the forward and backward passes in the architecture itself proves to be an effective strategy, with ED achieving 100\% on both in-distribution and out-of-distribution samples.
Monolithic models, on the other hand, achieve in most of the cases perfect accuracy in-distribution but fail to score more than 70\% on the compositional generalization split.
Contrary to the previous dataset, for this dataset more recent architectures such as ConvNeXts and Swin Transformers achieve noticeably better results compared to older backbones, with best accuracies of 68.78\% and 62.05\% respectively.
As in the other datasets, also for I-RAVEN the pre-training on Imagenet-1k never brings substantial benefit and, in some cases, even prevents the models from converging on the training data itself.
Compared to ResNets, WideResNets and DenseNets also perform better, achieving at least 7.08\% and 12.32\% higher accuracy on the test split.

\begin{table}[h!]
 \centering
\resizebox{\linewidth}{!}{
\begin{tabular}{lccccccccc}
\toprule
\multirow{2}{*}{\textbf{Model}} & \multirow{2}{*}{\textbf{Pretrained}} & \multicolumn{2}{c}{\textbf{Train}} & \multicolumn{2}{c}{\textbf{ID Val}} & \multicolumn{2}{c}{\textbf{OOD Val}} & \multicolumn{2}{c}{\textbf{Test}} \\
& & AVG &  SEM &  AVG & SEM & AVG & SEM & AVG &  SEM \\
\midrule
MLP & \xmark &95.41 & 1.05 &   99.33 & 0.26 & 43.87 & 12.18 & 6.85 &  2.28 \\
\midrule
ResNet-18 &\xmark &99.97 & 0.01 &  100.00 & 0.00 & 79.21 & 10.11 & 37.67 &  6.44\\
ResNet-34 &\xmark & 99.97 & 0.01 &  100.00 & 0.00 & 75.10 & 11.14 & 40.06 &  8.49 \\
ResNet-50 &\xmark &99.98 & 0.01 &  100.00 & 0.00 & 74.71 & 10.80 & 34.96 &  7.23 \\
ResNet-101 &\xmark &99.96 & 0.01 &  100.00 & 0.00 & 86.71 &  7.69 & 47.41 &  9.08 \\
ResNet-101 &\cmark &99.97 & 0.01 &  100.00 & 0.00 & 61.59 & 12.55 & 16.36 &  4.90 \\
ResNet-152 &\xmark & 99.87 & 0.05 &  100.00 & 0.00 & 84.77 &  8.08 & 43.98 & 10.11 \\
ResNet-152 &\cmark &99.97 & 0.01 &  100.00 & 0.00 & 55.42 & 12.14 & 9.20 &  3.31 \\
\midrule
DenseNet-121& \xmark& 99.98 & 0.01 &  100.00 & 0.00 & 80.03 &  9.79 & 59.73 & 12.11 \\
DenseNet-121& \cmark& 76.95 & 1.48 &   72.38 & 1.65 & 16.47 &  2.69 & 14.44 &  2.56 \\
DenseNet-161& \xmark& 99.99 & 0.00 &  100.00 & 0.00 & 77.11 & 10.57 & 49.01 & 11.77 \\
DenseNet-201& \xmark& 99.98 & 0.01 &  100.00 & 0.00 & 80.90 &  9.01 & 25.29 & 10.08 \\
\midrule
ConvNeXt-small  & \xmark  & 99.79 & 0.07 &  100.00 & 0.00 & 68.65 & 10.69 & 68.78 & 11.57 \\
ConvNeXt-base  & \xmark  &  99.41 & 0.21 &  100.00 & 0.00 & 85.52 &  6.25 & 68.75 & 11.70 \\
 \midrule
WideResNet & \xmark  & 99.83 & 0.07 &  100.00 & 0.00 & 79.83 & 10.67 & 54.49 &  9.83 \\
\midrule
 ViT  & \xmark  &99.56 & 0.12 &   99.99 & 0.00 & 53.30 & 13.03 & 57.93 & 11.39 \\
\midrule
Swin-tiny& \xmark &99.64 & 0.06 &  100.00 & 0.00 & 56.60 & 12.30 & 58.79 & 10.20 \\
Swin-base& \xmark &93.10 & 6.34 &   93.58 & 6.40 & 50.93 & 10.94 & 62.09 & 10.74 \\
\midrule
ED & \xmark & 100.00 & 0.00 &  100.00 & 0.00 & \textbf{100.00} &  0.00 &   \textbf{100.00} &  0.00 \\
\bottomrule
\end{tabular}
}
 \caption{Pairwise evaluation on I-RAVEN.}
 \label{tab:pairwise_iraven}
\end{table}

\newpage
\paragraph{Shapes3D}
The results for the Shapes3D dataset are included in Table \ref{tab:pairwise_shapes}.
Yet another time, ED achieves the best accuracy on the compositional test split, scoring almost perfect accuracy on generalization data.
Monolithic models are also competitive in this benchmark.
ConvNeXts, as in I-RAVEN, achieve the best performance with 93.82\% OOD accuracy, closely followed by ResNets (90.03\%) and Swin Transformers (89.02\%).
In-domain, all the models (except for the pre-trained DenseNet-121) achieve perfect accuracy (both on training and validation data).
Overall, good performances on this specific benchmark were also observed in previous studies~\cite{schott_visual_2022}.
%

\begin{table}[h!]
 \centering
\resizebox{\linewidth}{!}{
\begin{tabular}{lccccccccc}
\toprule
\multirow{2}{*}{\textbf{Model}} & \multirow{2}{*}{\textbf{Pretrained}} & \multicolumn{2}{c}{\textbf{Train}} & \multicolumn{2}{c}{\textbf{ID Val}} & \multicolumn{2}{c}{\textbf{OOD Val}} & \multicolumn{2}{c}{\textbf{Test}} \\
& & AVG &  SEM &  AVG & SEM & AVG & SEM & AVG &  SEM \\
\midrule
MLP & \xmark &100.00 & 0.00 &  100.00 & 0.00 & 55.98 & 6.32 & 21.32 & 2.93 \\
\midrule
ResNet-18 &\xmark & 100.00 & 0.00 &  100.00 & 0.00 & 89.47 & 4.74 & 90.03 & 2.74 \\
ResNet-34 &\xmark &100.00 & 0.00 &  100.00 & 0.00 & 85.58 & 4.88 & 80.33 & 4.16 \\
ResNet-50 &\xmark &100.00 & 0.00 &  100.00 & 0.00 & 83.76 & 5.52 & 76.47 & 4.79 \\
ResNet-101 &\xmark & 99.48 & 0.52 &  100.00 & 0.00 & 79.69 & 5.82 & 70.87 & 5.45 \\
ResNet-101 &\cmark & 100.00 & 0.00 &  100.00 & 0.00 & 72.75 & 5.92 & 68.51 & 4.13 \\
ResNet-152 &\xmark & 100.00 & 0.00 &  100.00 & 0.00 & 69.75 & 6.38 & 68.99 & 5.02 \\
ResNet-152 &\cmark &100.00 & 0.00 &  100.00 & 0.00 & 77.18 & 5.13 & 69.13 & 4.49 \\
\midrule
DenseNet-121& \xmark& 99.98 & 0.02 &  100.00 & 0.00 & 89.22 & 4.06 & 88.56 & 2.94 \\
DenseNet-121& \cmark& 74.48 & 2.83 &   74.21 & 2.84 & 35.48 & 4.45 & 40.46 & 2.64 \\
DenseNet-161& \xmark& 100.00 & 0.00 &  100.00 & 0.00 & 86.50 & 4.89 & 87.19 & 3.57 \\
DenseNet-201& \xmark&  100.00 & 0.00 &  100.00 & 0.00 & 86.57 & 4.89 & 85.42 & 4.21 \\
\midrule
ConvNeXt-small  & \xmark  & 99.84 & 0.16 &  100.00 & 0.00 & 92.61 & 3.59 & 91.17 & 4.17 \\
ConvNeXt-base  & \xmark  &  99.99 & 0.00 &  100.00 & 0.00 & 92.71 & 3.37 & 93.82 & 3.08 \\
 \midrule
WideResNet & \xmark  & 100.00 & 0.00 &  100.00 & 0.00 & 84.54 & 4.43 & 81.10 & 3.72 \\
\midrule
 ViT  & \xmark  &99.99 & 0.00 &  100.00 & 0.00 & 87.69 & 2.79 & 74.69 & 3.66 \\
\midrule
Swin-tiny& \xmark & 100.00 & 0.00 &  100.00 & 0.00 & 91.17 & 4.25 & 89.00 & 4.68 \\
Swin-base& \xmark &99.99 & 0.00 &  100.00 & 0.00 & 91.90 & 3.92 & 89.02 & 4.76 \\
\midrule
ED & \xmark &100.00 & 0.00 &  100.00 & 0.00 & \textbf{97.95} & 2.05 & \textbf{96.92} & 1.77 \\
\bottomrule
\end{tabular}
}
 \caption{Pairwise evaluation on Shapes3D.}
 \label{tab:pairwise_shapes}
\end{table}

\newpage
\paragraph{MPI3D}
Overall, the results on the MPI3D shown in Table \ref{tab:pairwise_mpi} align broadly with those obtained on the other datasets.
The explicit disentanglement model (ED) achieves the best test accuracy, with a margin of 14.4\% from the closest monolithic model.
The performance among monolithic models is heterogeneous.
The DenseNet-161 is the best-performing monolithic model (50.28\%), closely followed by the DenseNet-121 (46.58\%) and the ResNet-18 (45.53\%).
ConvNeXts are less competitive, achieving a best accuracy of only 41.74\%, while visual transformers show decisively inferior performances on this task (both Swin Transformers and ViT).
All models, except for the pre-trained DenseNet-121, converge and score perfect accuracy in-distribution.

\begin{table}[h!]
 \centering
\resizebox{\linewidth}{!}{
\begin{tabular}{lccccccccc}
\toprule
\multirow{2}{*}{\textbf{Model}} & \multirow{2}{*}{\textbf{Pretrained}} & \multicolumn{2}{c}{\textbf{Train}} & \multicolumn{2}{c}{\textbf{ID Val}} & \multicolumn{2}{c}{\textbf{OOD Val}} & \multicolumn{2}{c}{\textbf{Test}} \\
& & AVG &  SEM &  AVG & SEM & AVG & SEM & AVG &  SEM \\
\midrule
MLP & \xmark &95.69 & 0.65 &   96.27 & 0.60 &  0.50 & 0.17 & 3.27 & 1.66 \\
\midrule
ResNet-18 &\xmark &99.99 & 0.00 &   99.99 & 0.00 & 30.65 & 5.66 & 45.53 & 5.76 \\
ResNet-34 &\xmark &99.98 & 0.00 &   99.99 & 0.00 & 24.62 & 4.87 & 36.55 & 5.48 \\
ResNet-50 &\xmark &99.99 & 0.00 &   99.99 & 0.00 & 24.40 & 5.06 & 41.40 & 5.66 \\
ResNet-101 &\xmark &99.98 & 0.00 &   99.97 & 0.01 & 19.35 & 4.47 & 35.03 & 5.39 \\
ResNet-101 &\cmark &99.95 & 0.01 &   99.93 & 0.01 & 18.71 & 4.57 & 16.35 & 3.02 \\
ResNet-152 &\xmark &99.96 & 0.01 &   99.95 & 0.02 & 24.61 & 5.22 & 34.90 & 5.59 \\
ResNet-152 &\cmark &99.94 & 0.01 &   99.92 & 0.02 & 17.71 & 4.44 & 18.31 & 3.80 \\
\midrule
DenseNet-121& \xmark&  99.99 & 0.00 &   99.99 & 0.00 & 27.76 & 6.06 & 46.58 & 6.33 \\
DenseNet-121& \cmark&  78.79 & 2.86 &   78.83 & 2.86 & 27.35 & 4.48 & 25.92 & 2.66 \\
DenseNet-161& \xmark& 99.99 & 0.00 &  100.00 & 0.00 & 30.95 & 6.45 & 50.28 & 6.71 \\
DenseNet-201& \xmark& 99.99 & 0.00 &   99.99 & 0.00 & 29.67 & 6.18 & 45.44 & 6.44 \\
\midrule
ConvNeXt-small  & \xmark  & 99.92 & 0.01 &   99.96 & 0.01 & 30.65 & 6.14 & 41.74 & 6.16 \\
ConvNeXt-base  & \xmark  & 98.89 & 0.72 & 99.57 & 0.27 & 30.67 & 6.16 & 39.66 & 6.31 \\
 \midrule
WideResNet & \xmark  & 99.98 & 0.00 &   99.99 & 0.00 & 26.52 & 5.38 & 41.24 & 5.92 \\
\midrule
 ViT  & \xmark  &98.39 & 0.23 &   98.76 & 0.20 & 20.01 & 4.77 & 24.25 & 4.98 \\
\midrule
Swin-tiny& \xmark &99.87 & 0.03 &   99.95 & 0.01 & 24.11 & 5.97 & 30.97 & 5.29 \\
Swin-base& \xmark &99.76 & 0.04 &   99.92 & 0.01 & 24.37 & 5.87 & 26.32 & 4.97 \\
\midrule
ED & \xmark &99.98 & 0.00 &   99.99 & 0.00 & \textbf{50.66} & 4.27 & \textbf{64.68} & 4.24 \\
\bottomrule
\end{tabular}
}
 \caption{Pairwise evaluation on MPI3D.}
 \label{tab:pairwise_mpi}
\end{table}

\clearpage
\subsection{Attribute-level experimental results}
\label{app:attrlevelopairwise}

\begin{figure}[b!]
 \centering
 \begin{subfigure}[b]{0.48\textwidth}
  \centering
  \includegraphics[width=\textwidth]{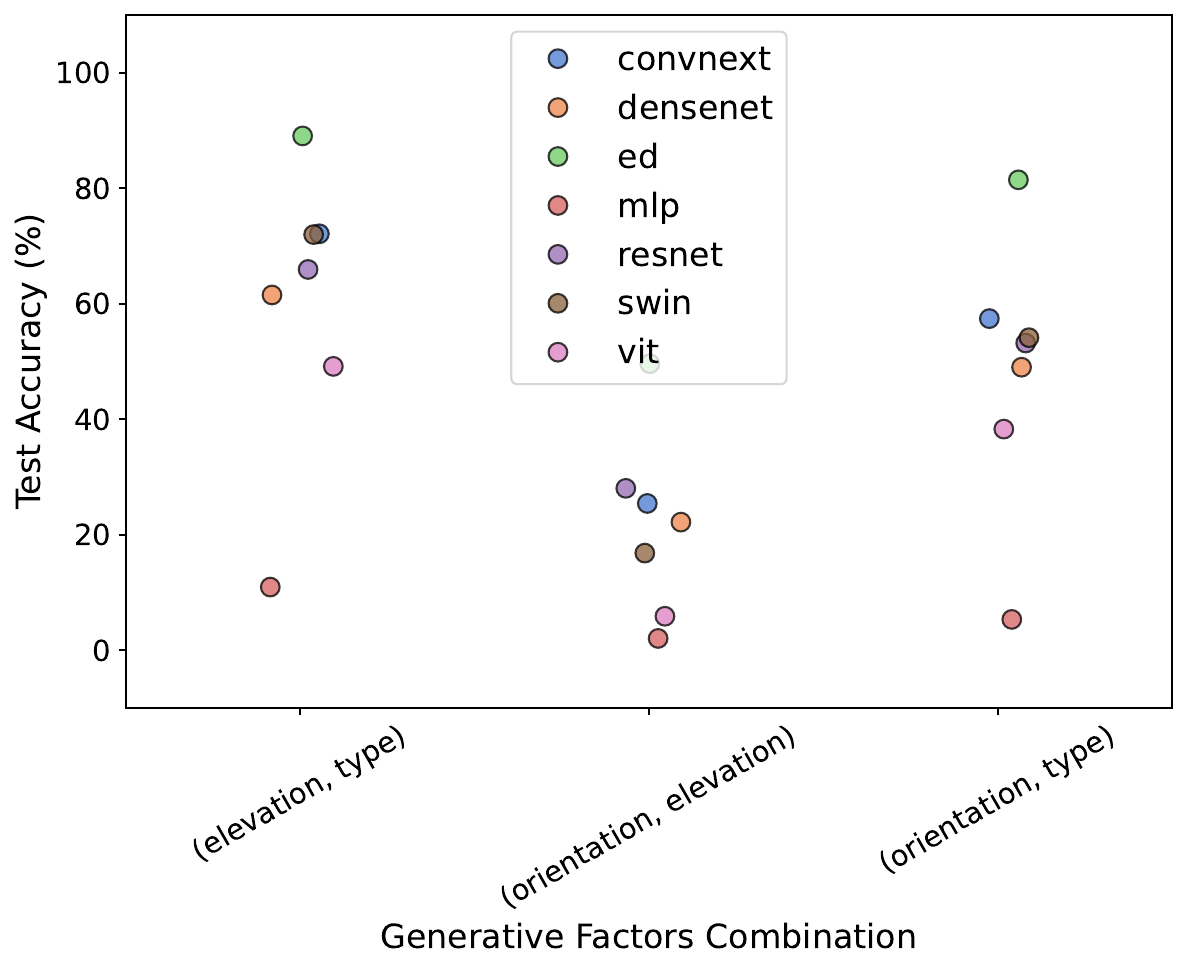}
  \caption{Cars3D.}
 \end{subfigure}
 \hfill
 \begin{subfigure}[b]{0.48\textwidth}
  \centering
  \includegraphics[width=\textwidth]{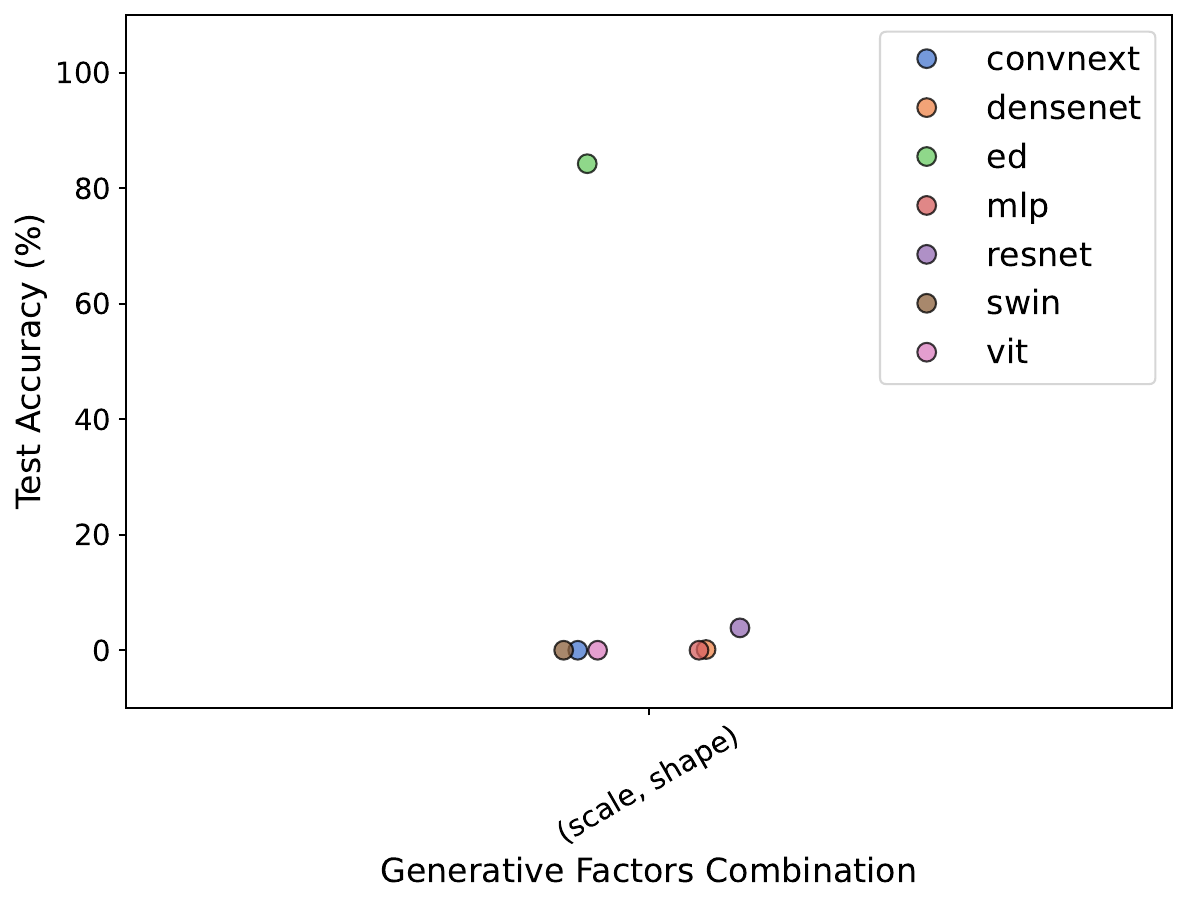}
  \caption{dSprites.}
 \end{subfigure}
 \hfill
 \begin{subfigure}[b]{0.48\textwidth}
  \centering
  \includegraphics[width=\textwidth]{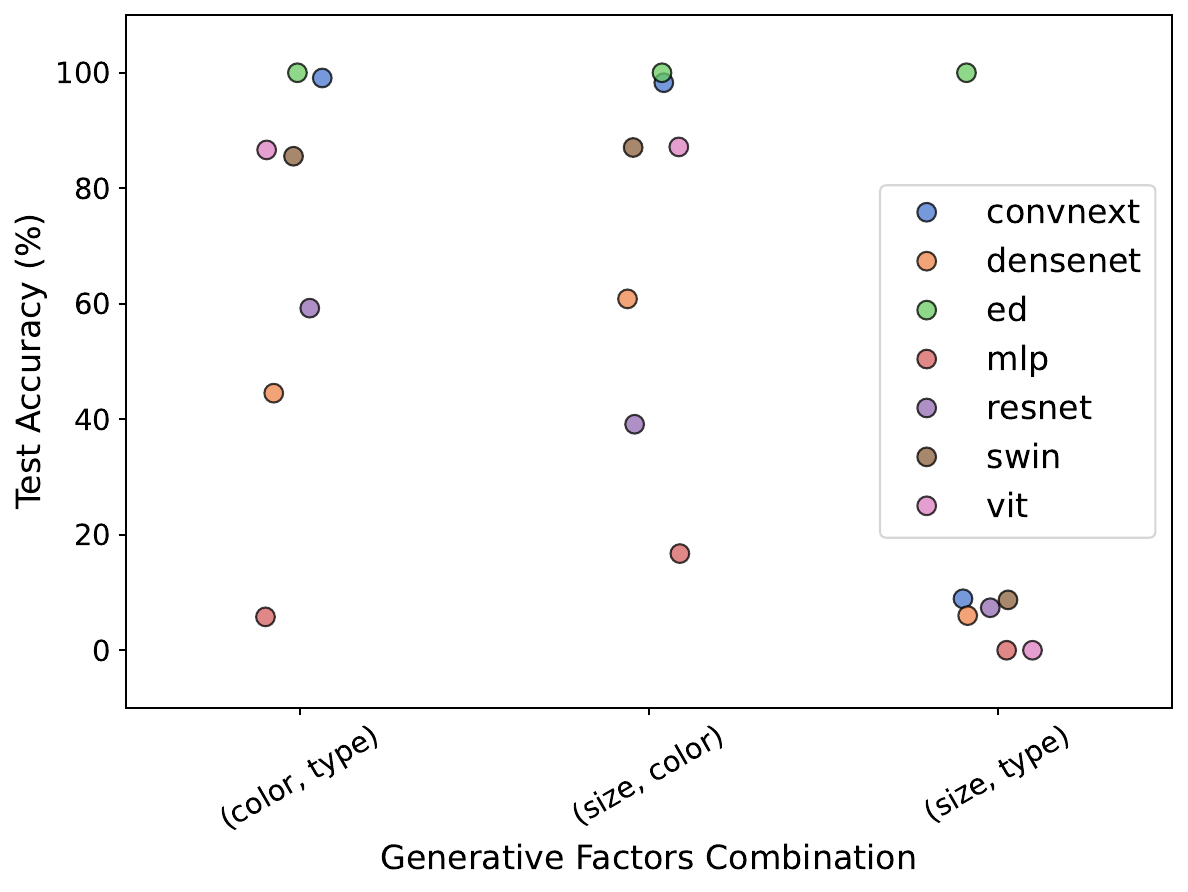}
  \caption{I-RAVEN.}
 \end{subfigure}
   \hfill
 \begin{subfigure}[b]{0.48\textwidth}
  \centering
  \includegraphics[width=\textwidth]{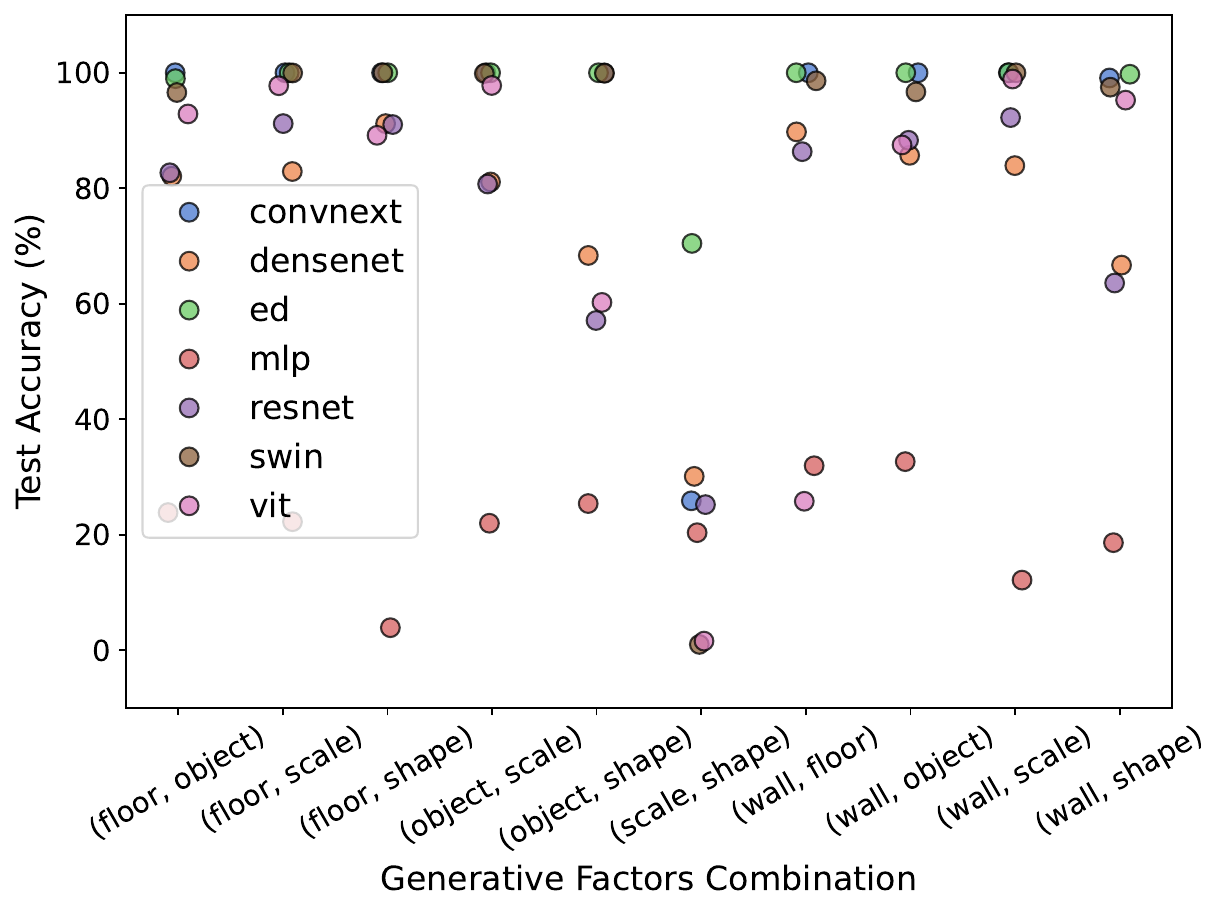}
  \caption{Shapes3D.}
 \end{subfigure}
   \hfill
 \begin{subfigure}[b]{0.48\textwidth}
  \centering
  \includegraphics[width=\textwidth]{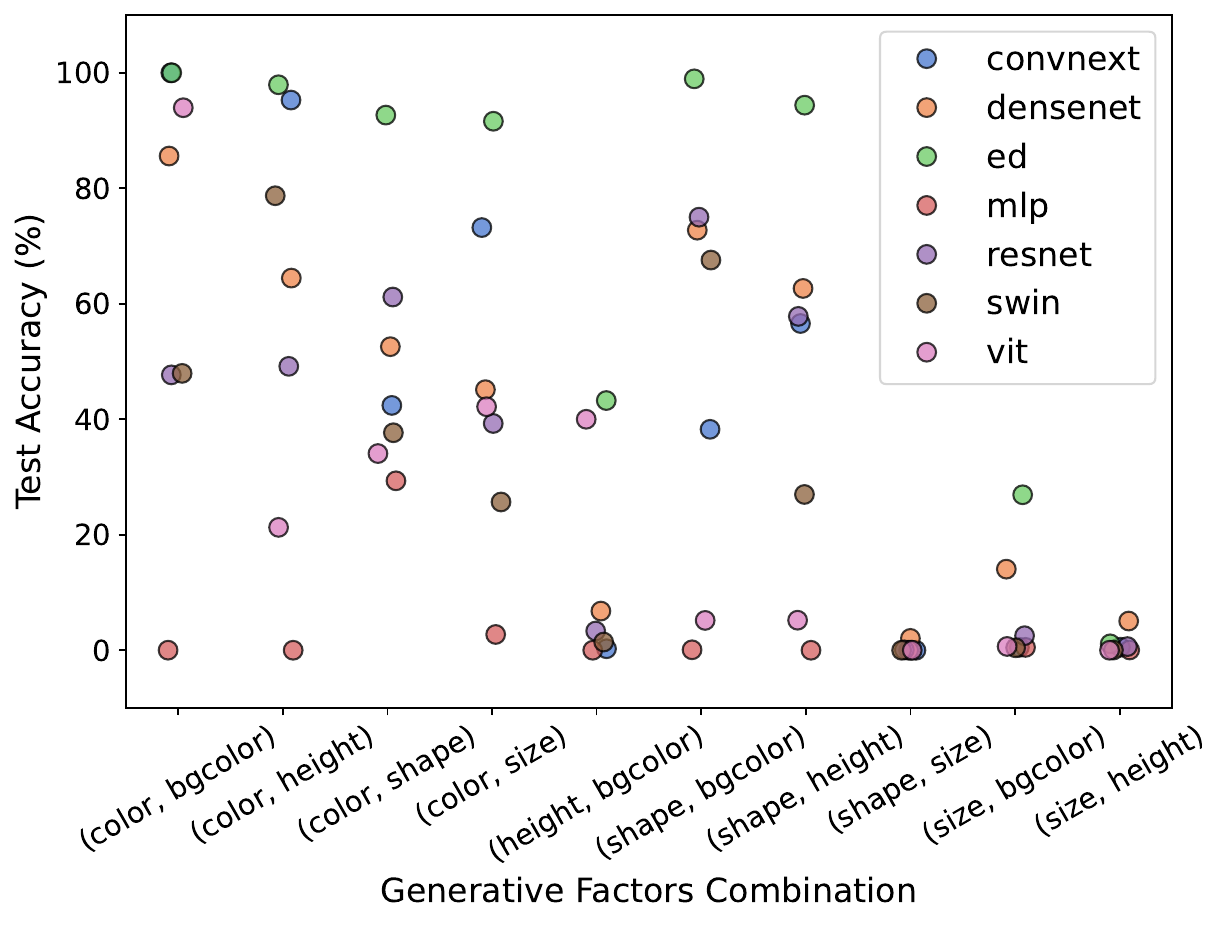}
  \caption{MPI3D.}
 \end{subfigure}
 \caption{Pair-wise compositional evaluation with attribute-combination granularity.}
 \label{fig:pairwise_attribute}
\end{figure}

In this section, we re-elaborate the results reported in Section \ref{app:pairwise} using a different granularity.
Rather than providing an overview of the results at the dataset level, we break down the data to the attributes combination level to provide additional insights on the generalization performances of the single attributes.
To reduce the clutter and increase interpretability, we group the results by model's architecture, plotting the mean test accuracy over the different random seeds, model sizes, and pre-training.
Figure \ref{fig:pairwise_attribute} include these new attribute-level visualizations for each one of the studied datasets.
Different insights can be derived looking at the data from this perspective.

Firstly, we can observe that \textit{some combinations of generative factors} (e.g., size-height in MPI3D or size-type in I-RAVEN) \textit{are significantly more challenging than others} (e.g., object-scale and wall-scale in Shapes3D).
In particular, the failure observed empirically is consistent across the majority of the models investigated, indicating that the difficulty in generalizing compositionally on them could be independent from the specific architectures and training techniques used in this work.

Secondly, it stands out that \textit{some combinations of generative factors are consistently challenging across different datasets}.
This is the case, for example, for the shape-size combination, for which almost no model architecture can properly generalize across all the datasets that have it (dSprites, MPI3D, I-RAVEN, and dSprites).
Given its consistency across multiple models and datasets, we speculate that this specific combination of generative factors could be intrinsically harder to classify, possibly to the causal interactions between the factors as proposed by \citet{montero_lost_2022}.

Finally, we can observed that the ``strong baseline'' considered in this work, the Explicit Disentanglement model, is consistently better than the other models almost on every combination of attributes.
The conclusion that can be drawn from this observation is that disentangling the forward and backward passes in the generative factor prediction uniformly and consistently increases the performances on all the investigated generative factors, rather than only improving them on a small subset of them.

\subsection{Impact of learnable activation functions on compositional generalization}
\label{app:prelu}
As part of the investigation, we also ablate the impact of programmable activation functions on the model compositional generalization.
This ablation stems as an attempt to translate into practice some of the receipts hinted by \citet{ren_improving_2023}, in particular the necessity to enforce the composition of the generation function $\mathcal{G}_x: (\mathbf{F}, \epsilon_x) \mapsto  \mathbb{R}^D$ and the feature extraction function $h:\mathbb{R}^D\mapsto \mathbb{R}^S$ to maintain an isomorphism between the generative factors space $\mathbf{G}$ and the learned representation space $\mathbf{Z}$.
One possible idea to implement this is to make the feature extraction function as close as possible to a linear map, hence enforcing quasi-isomorphism on the part of the composite function on which we have control ($h$).
In practice, we try to replace the activation functions in the building blocks of different convolutional networks, namely ResNets, DenseNets, and ConvNeXt, with the programmable activation function PReLU,
\begin{align*}
 \text{PReLU}(x)=\max(0,x)+a \min(0, x),
\end{align*}
where $a$ is a learnable vector (one parameter for every input dimension).
We hypothesize that the use of this activation function could represent an additional degree of freedom for the model, which could choose during training between 1) more expressiveness, given by an hard non-linearity such as ReLU or 2) a ``softer'' non-linearity that matches more closely the behavior of a linear activation function.

\begin{figure}[b!]
 \centering
 \begin{subfigure}[b]{0.24\textwidth}
  \centering
  \includegraphics[width=\textwidth]{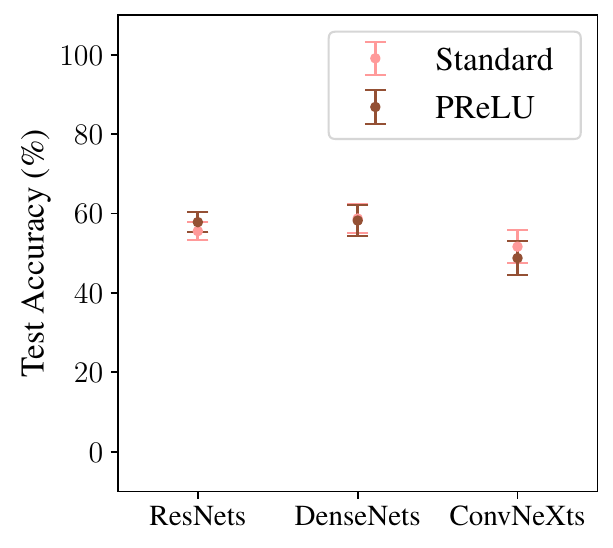}
  \caption{Cars3D.}
 \end{subfigure}
 \hfill
 \begin{subfigure}[b]{0.24\textwidth}
  \centering
  \includegraphics[width=\textwidth]{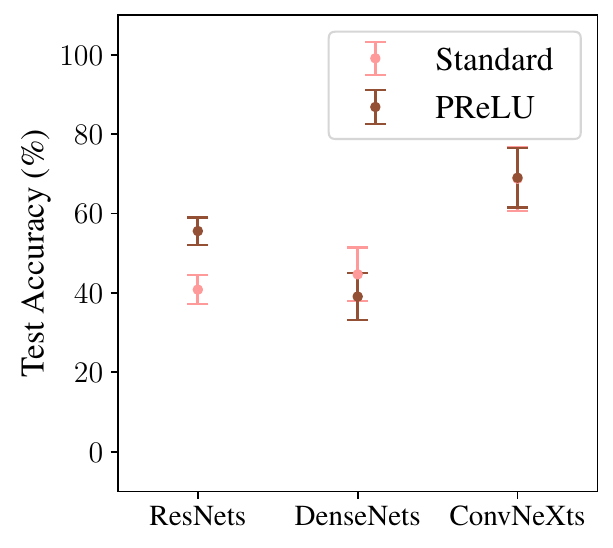}
  \caption{I-RAVEN.}
 \end{subfigure}
   \hfill
 \begin{subfigure}[b]{0.24\textwidth}
  \centering
  \includegraphics[width=\textwidth]{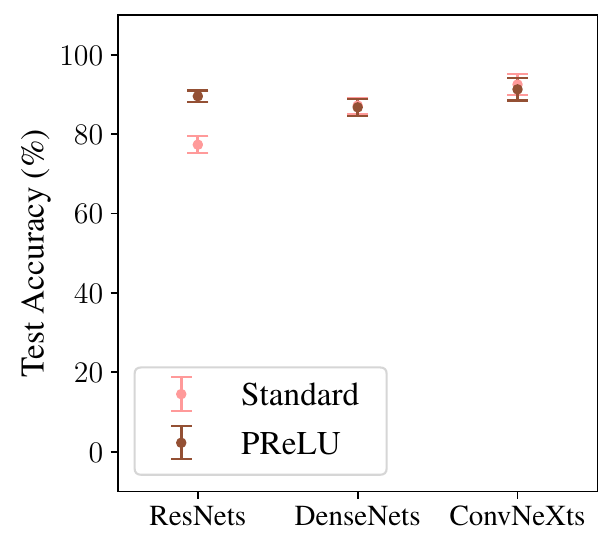}
  \caption{Shapes3D.}
 \end{subfigure}
   \hfill
 \begin{subfigure}[b]{0.24\textwidth}
  \centering
  \includegraphics[width=\textwidth]{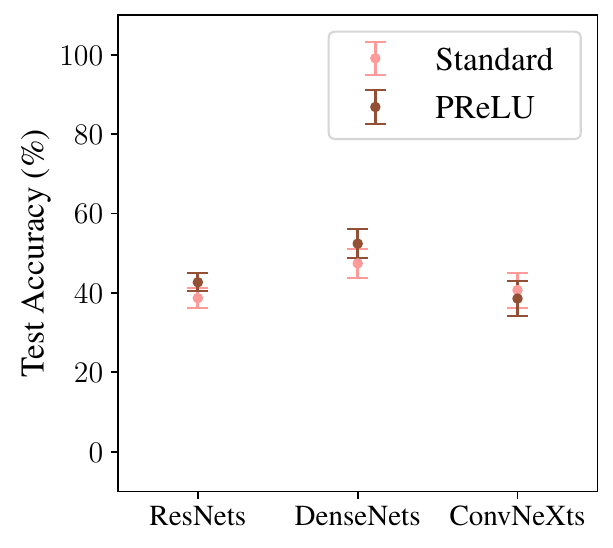}
  \caption{MPI3D.}
 \end{subfigure}
 \caption{Ablation on activation functions.}
 \label{fig:prelu}
\end{figure}

We empirically validate this hypothesis on a subset of the dataset: Cars3D, I-RAVEN, Shapes3D, and MPI3D.
Figure \ref{fig:prelu} includes a high-level overview of the results.
The plot shows the test accuracy of different model families, where the standard error of the mean (SEM) is reported considering as population the different model sizes and random seeds.
In the legend, ``Standard'' and ``PReLU'' refer to the vanilla and the custom implementations of the architectures, respectively.
Overall, we can observe that programmable activation functions are consistently better than the original activation function (ReLU) for ResNets.
On the other hand, the behavior is not consistent for DenseNets (that also use ReLU in their stacked basic blocks) and seem to be consistently worse in the case of ConvNeXt (where, on the other hand, the default activation function is GeLU).

To gain a more complete understanding of the effects of using programmable activation functions, we break down the results for different models in Table \ref{tab:prelu}, showing the test accuracy difference between their PReLU and vanilla versions.
We can make different observations based on these results.
Firstly, we can see that the main trend observed on the high-level visualization translates consistently to the individual models.
The performance of both ConvNeXt models drops when GeLU functions are replaced with PReLUs, while for DenseNets the trend is not consistent.
On the other hand, ResNets greatly benefit from these alternative activation functions.
We also observe that some models (e.g., ResNet-18 and ResNet-50) greatly improve their test scores when PReLU activation functions are used, showing impressive two-digit gains on the compositional generalization split (+20.86\% and +31.52\%, respectively). 
In Figure \ref{fig:prelu_dist}, we also report the distribution of the $a$ parameters for different trained models, which seems to remain close to the initialization value ($0.5$) in the majority of the models.

In conclusion, while programmable activation functions yield notable improvements in compositional generalization for certain models and tasks---particularly in the case of ResNets---the current evidence does not support the conclusion that they offer a comprehensive panacea to the limitations of convolutional neural networks in compositional generalization. 
Moreover, their performance does not consistently surpass that of other widely used activation functions, such as GeLU in contemporary vision architectures.

\begin{minipage}{\textwidth}
  \begin{minipage}[b]{0.49\textwidth}
 \centering
 \captionsetup{type=table}
 \resizebox{\linewidth}{!}{
 \begin{tabular}{lccccc}
 \toprule
 \textbf{Model}  & \textbf{I-RAVEN} & \textbf{Cars3D} &   \textbf{Shapes3D} &  \textbf{MPI3D} & $\mu$ \\
 \midrule
 ResNet-18   &   20.86 & 2.93 &  5.02 &   8.64 &  9.36 \\
 ResNet-34   & 9.63 & 4.41 & 11.45 &  11.04 &  9.13 \\
 ResNet-50   &   31.52 & 1.77 & 13.61 &   5.01 & 12.98 \\
 ResNet-101  & 9.15 &   -3.01 & 15.64 &  -1.93 &  4.96 \\
 ResNet-152  & 2.69 & 5.02 & 15.28 &  -6.49 &  4.13 \\
 DenseNet-121 &  -21.04 &   -0.72 & -4.01 &   6.47 & -4.82 \\
 DenseNet-161 &   -6.29 & 1.46 &  3.18 &   1.26 & -0.10 \\
 DenseNet-201 &   10.27 &   -2.87 & -0.12 &   7.21 &  3.62 \\
 ConvNeXt-small &   -3.34 &   -0.41 &  0.54 &  -4.30 & -1.88 \\
 ConvNeXt-base  & 3.85 &   -5.78 & -2.99 &   0.06 & -1.22 \\
 \bottomrule
 \end{tabular}
 }
 \captionof{table}{Test accuracy difference between PReLu and standard models on different datasets.}
 \label{tab:prelu}
 \end{minipage}
 \hfill
  \begin{minipage}[b]{0.49\textwidth}
 \centering
 \captionsetup{type=figure}
 \includegraphics[width=\linewidth]{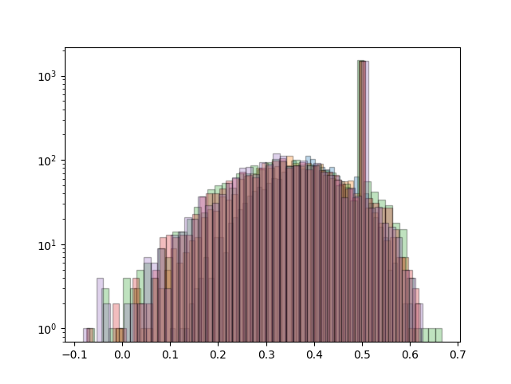}
 \captionof{figure}{$a$ values distribution.}
 \label{fig:prelu_dist}
  \end{minipage}  
  \end{minipage}

\clearpage
\subsection{Grokking in compositional generalization for visual representation learning}
\label{app:grokking}
\begin{figure}[b!]
 \centering
 \begin{subfigure}[b]{0.93\textwidth}
 \centering
 \includegraphics[width=\textwidth]{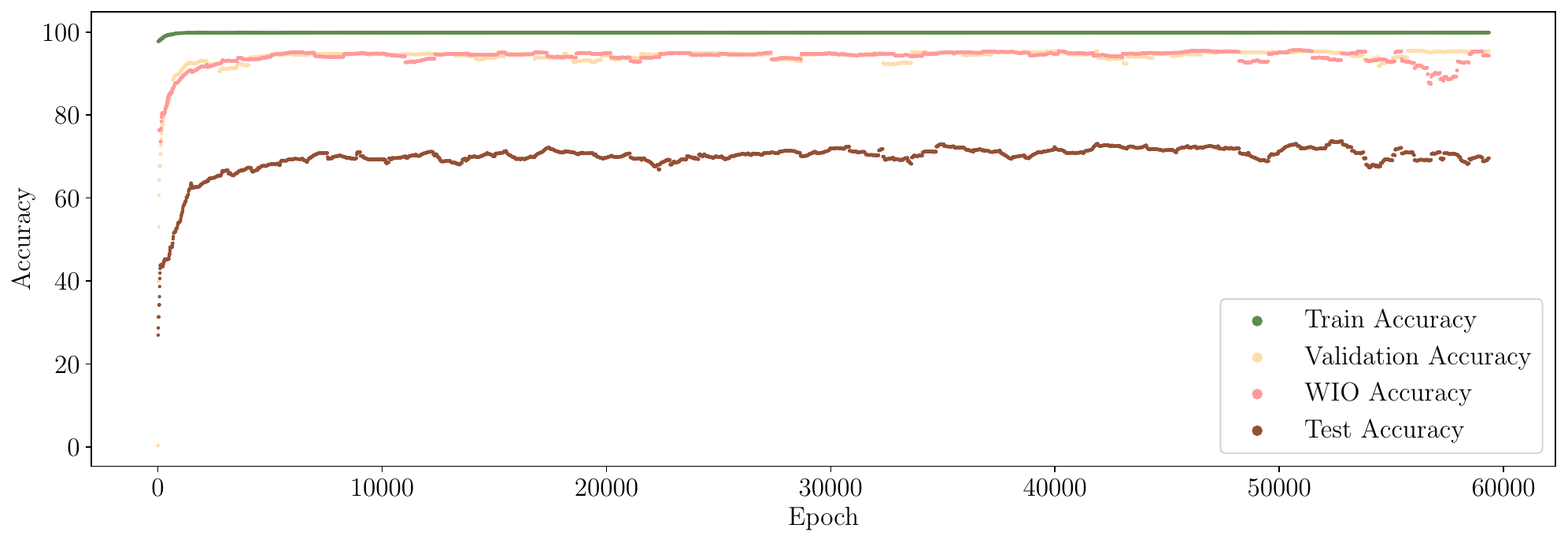}
 \caption{Orientation-Type.}
 \end{subfigure}
 \hfill
 \begin{subfigure}[b]{0.93\textwidth}
 \centering
 \includegraphics[width=\textwidth]{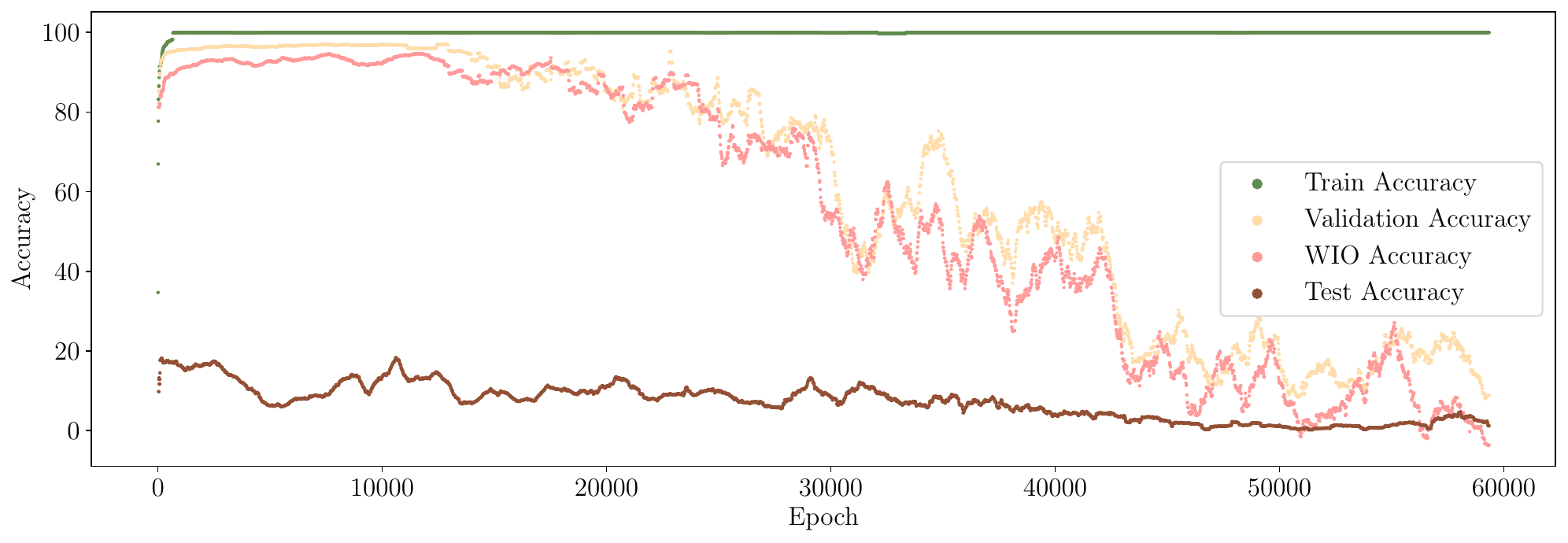}
 \caption{Orientation-Elevation.}
 \end{subfigure}
 \hfill
 \begin{subfigure}[b]{0.93\textwidth}
 \centering
 \includegraphics[width=\textwidth]{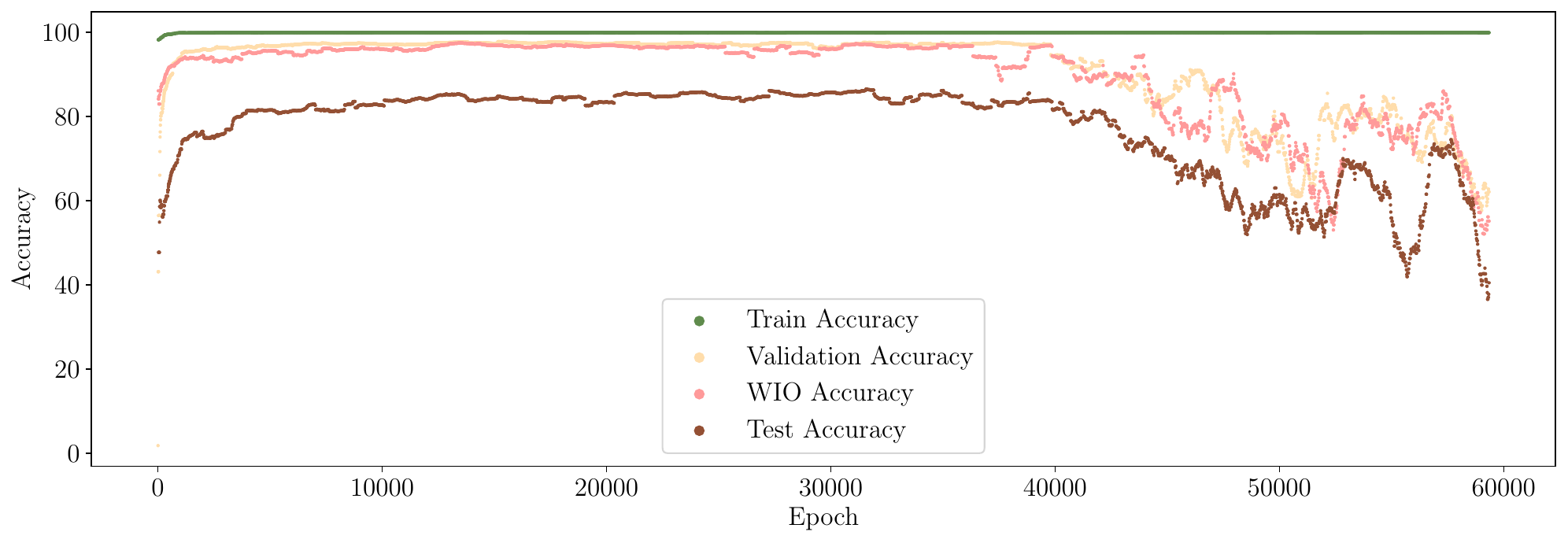}
 \caption{Elevation-type.}
 \end{subfigure}
 \caption{Grokking experiments on Cars3D.}
 \label{fig:grokking_cars}
\end{figure}
\emph{Grokking} is a phenomenon initially observed in the context of neural networks trained on algorithmic datasets, for which a generalizing solution was learned only with a considerable delay (in terms of training iterations) compared to a solution that could overfit the training data~\cite{power_grokking_2022}. %
While this phenomenon is not common in computer vision tasks, \citet{liu_omnigrok_2023} observed that it is possible to emulate grokking behaviors using specific initialization for neural networks' weights, e.g., initializing them with large magnitudes.
In the same work, the authors also speculate that the emergence of grokking might be more easily observable when the generalization heavily relies on learning good representations from scratch.
Compositional generalization is, by nature, a task that heavily relies on this category of tasks.
Hence, we design a control experiment to verify whether training with a sensibly larger number of update steps (up to $\times240$) can lead the model to learn better, more compositional representations.
We restrict the scope of the empirical evaluation to three datasets (dSprites, I-RAVEN, and Cars3D).
Since the aim is limited to investigating whether a grokking behavior can emerge in any setting, we resort to the pairwise evaluation scheme, selecting only a few of the most significant generative factor pairs for each dataset.
We experiment with a single architecture, a ResNet-101 trained from scratch, chosen because of its good performance on the pair-wise evaluation experiments and its overparametrized regime for the given datasets ($43'548'619$ parameters).
We also adjust some of the hyperparameters of the network, using a higher weight decay (0.1) and a lower learning rate (0.0001) during training, and use two different random seeds (reporting for each step the best accuracy achieved among the two seeds).

The results for the different attribute combinations considered in the evaluation for the Cars3D dataset (orientation-type, orientation-elevation, and elevation-type) are shown in Figure \ref{fig:grokking_cars}.
In this dataset, the number of epochs was increased from 250 to 60k.
The training curves clearly show that using a larger number of update steps, the networks converge to stable, imperfect representations in the best scenario (e.g., the models trained on the orientation-type combination) and irremediably diverge even in-distribution in the worst cases (e.g., for the orientation-elevation and elevation-type combinations).
Naturally, the training accuracy always converged to $100\%$, signaling that the model converged to a solution that was purely memorizing all the training samples.
The same behavior was also confirmed for the dSprites and I-RAVEN datasets, in Figures \ref{fig:grokking_dsprites} and \ref{fig:grokking_iraven} respectively, for which the extension of the number of training epochs only resulted in a clear divergence of the models.

Overall, these experiments allow us to exclude that the lack of compositionality in the learned representations is due to an insufficient training of the models.
Furthermore, these results also hint that grokking phenomena are unlikely to be observed in the context of compositional generalization, despite it being a task where learning well-structured representations is of utmost importance.

\begin{figure}
 \centering
 \includegraphics[width=0.93\linewidth]{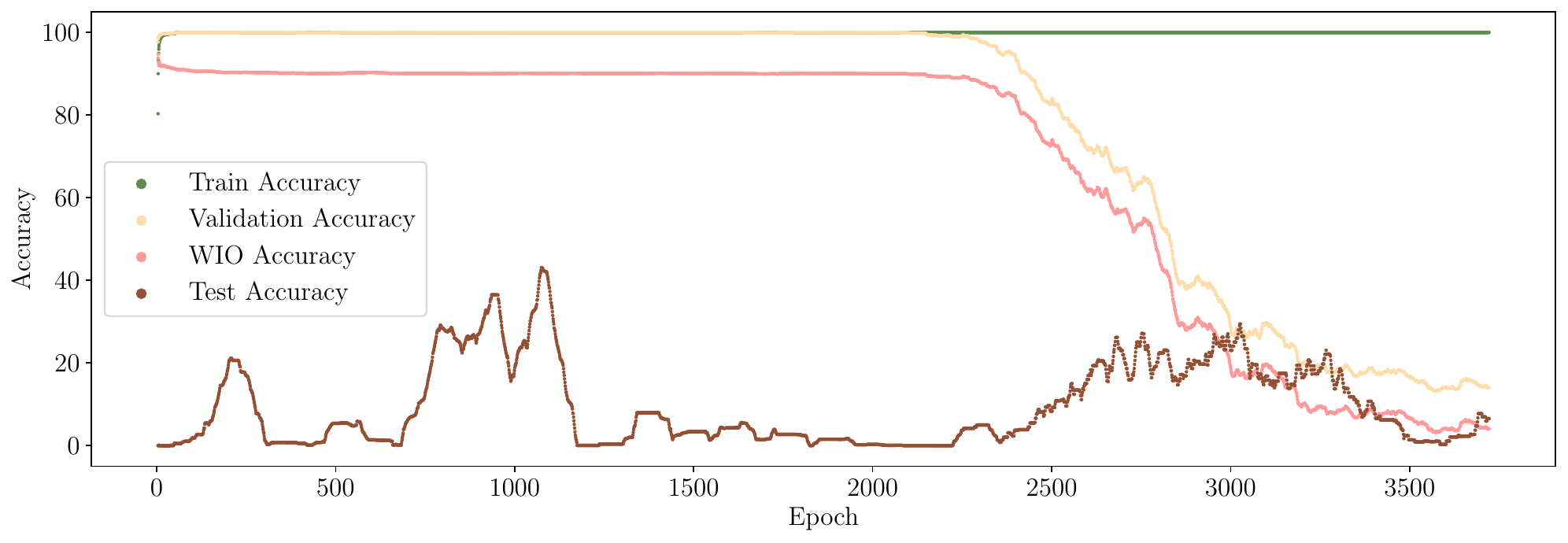}
 \caption{Grokking experiments on dSprites.}
 \label{fig:grokking_dsprites}
\end{figure}

\begin{figure}
 \centering
 \includegraphics[width=0.93\linewidth]{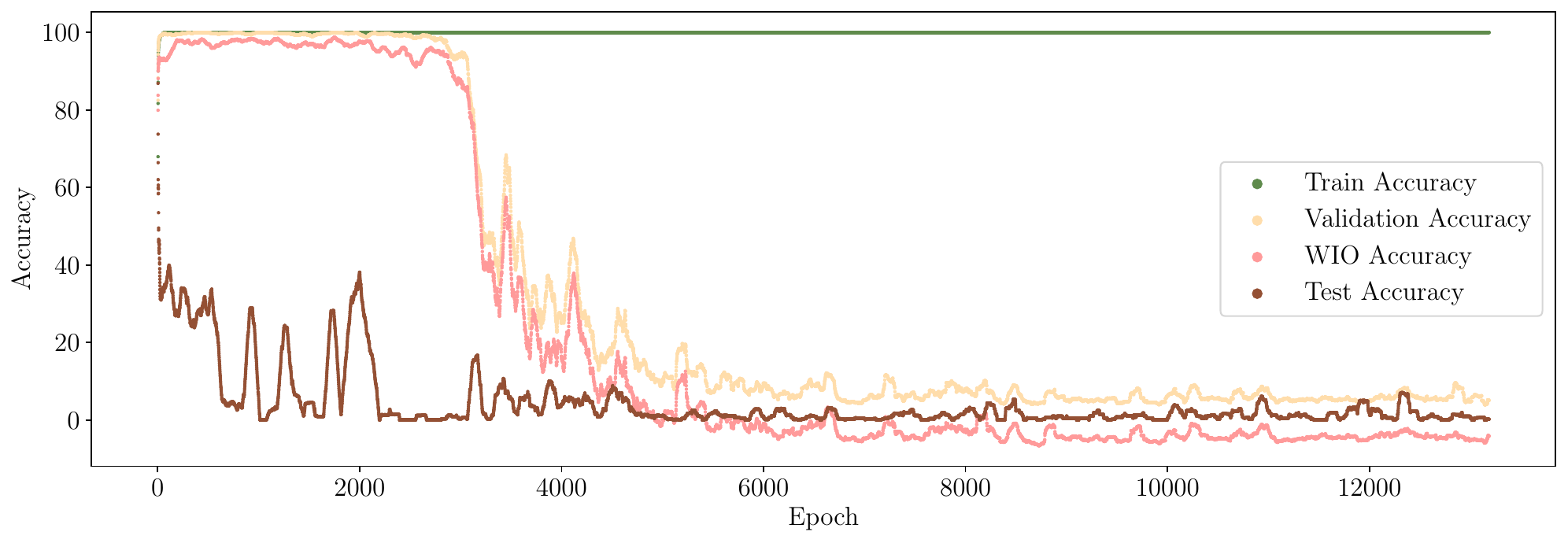}
 \caption{Grokking experiments on I-RAVEN.}
 \label{fig:grokking_iraven}
\end{figure}

\clearpage
\section{Extended orthotopic evaluation results}
\subsection{In-domain comparison between pair-wise and orthotopic}
Figure \ref{fig:orthopair_comparison_val} reports the model-wise difference between validation accuracy obtained in the orthotopic and pair-wise evaluation frameworks.
On average, we observe that the two are quite comparable (1.09\% average difference).

\begin{figure}[h!]
    \centering
    \includegraphics[width=1\linewidth]{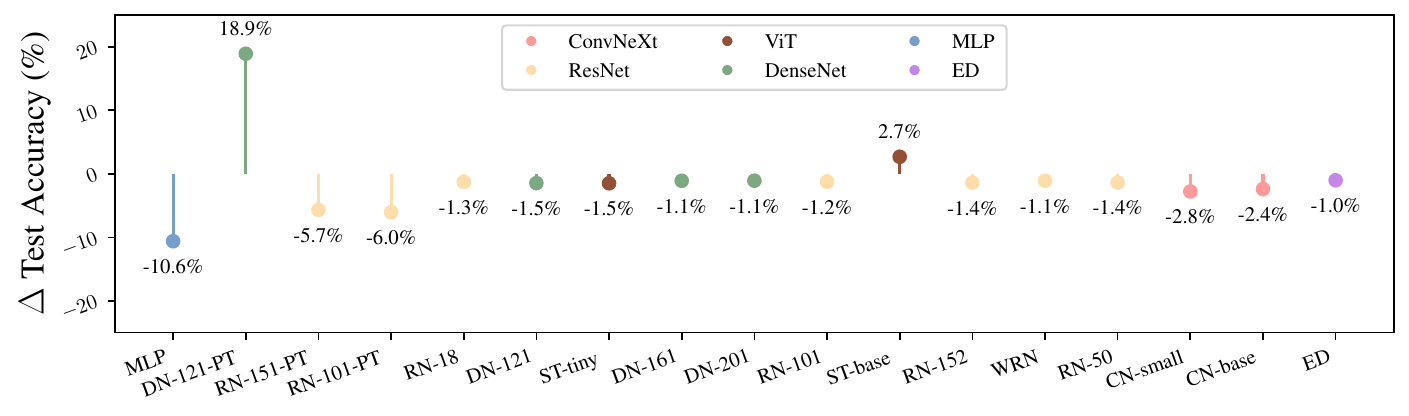}
    \caption{In-distribution (validation) delta accuracy between orthotopic and pair-wise evaluation.}
    \label{fig:orthopair_comparison_val}
\end{figure}

\section{Additional experiments on Noisy-MPI3d-Real}
\label{app:noisy_mpi}
We design and run an additional experimental validation on the proposed orthotopic evaluation and ladder of compositional generalization to test their robustness under more realistic conditions (noise on the labels, unlabeled generative factors).
We modify one of the experiments (on MPI3d-real) by perturbing 2 attribute labels at random in 10\% of the training samples to simulate noisy annotations (i.e., re-introducing $\epsilon_y$ in the experimental setup).
Additionally, we also removed the annotations of two task-relevant factors (color, x-position) from the training data, effectively morphing them from task-relevant ($G$) to task-irrelevant ($O$) generative factors.
With these modifications, we aim to achieve a more realistic, yet fully controllable, setting with unknown latent generative factors and random label noise, as would be the case in a real-world dataset.

The results of these experiments are reported in Table \ref{tab:model_performance}.

\begin{table}[h!]
\centering
\begin{tabular}{lcccc}
\hline
\textbf{Model} & \textbf{c=0} & \textbf{c=1} & \textbf{c=2} & \textbf{c=3} \\
\hline
Convnext-small & 0.0 & 32.9 & 40.1 & 46.9 \\
ResNet50       & 0.0 & 38.1 & 57.8 & 75.2 \\
Swin-tiny      & 0.0 & 29.6 & 45.6 & 49.0 \\
AIN            & 0.0 & 46.8 & 67.6 & 83.7 \\
ED             & 0.0 & 71.1 & 67.6 & 83.2 \\
\hline
\end{tabular}
\caption{Performance of models across different values of $c$}
\label{tab:model_performance}
\end{table}

We can observe that, even in this noisy setting closer to real-world scenarios our main results hold: the relation between compositional similarity and evaluation difficulty, the superiority of architectures that explicitly incorporate some inductive bias toward compositionality in their architecture (e.g., ED and AIN), and the strict increase in generalization accuracy achieved by ED and AINs compared to monolithic architectures.

\newpage
\subsection{Extended experimental results}
\label{app:extended_orthotopic}
\subsubsection{dSprites}

\begin{table}[h!]
 \centering
\resizebox{.8\linewidth}{!}{

}
 \caption{$c=3$}
\end{table}

\clearpage
\section{Symmetries and invariances} \label{app:symmetries}
The success of DNNs in vision tasks was initially driven by the insight that shift-invariance is a fundamental symmetry in the image domain. Based on this insight, convolutional layers were designed to satisfy shift-invariance by construction, and the incorporation of these layers significantly simplifies the optimization process in CNNs by restricting the search space to solutions that adhere to these symmetries. 
Indeed, for any sample $\mathbf{x} \in X$, if $f$ is shift-invariant, then $f(\mathfrak{g}.\mathbf{x}) = f(\mathbf{x})$ for any shift $\mathfrak{g} \in \mathfrak{G}$,
which implies that observing either $\mathbf{x}$ or $\mathfrak{g}.\mathbf{x}$ is equivalent from the perspective of $f$. This kind of equivalence has two important implications: (i) there is no advantage in observing both, which makes $f$ inherently more sample efficient, and (ii) if $f$ correctly classifies the training sample $\mathbf{x}$, $f$ will automatically generalize to unseen test samples under the transformation $\mathfrak{g}.\mathbf{x}$, thus improving its generalization performance.
Following a similar principle, Attribute Invariant Networks structurally embed \textbf{attribute invariance} to achieve compositional generalization. 

\section{Proof of attribute invariances in gradient updates} \label{app:proof}
The structure of AINs allows the optimization of each encoder $h_i$ to be be sensitive to group actions related to attribute $i$, while being invariant to group actions of any other attribute, as illustrated in the following theorem.
\begin{theorem}[Attribute invariances in gradient updates]
 Let $(\mathbf{x}, \mathbf{y})$ be a sample, and let $f_j(\mathbf{x})$ be an AIN's logit corresponding to attribute $j$. Then, for every group action $\mathfrak{g}\in\mathfrak{G}_j$:
 \begin{itemize}
 \item if $j \neq i$, then $\nabla_{h_i} \mathcal{L}(y_j, f_j(\mathbf{x})) = \nabla_{h_i} \mathcal{L}(y_j, f_j(\mathfrak{g}.\mathbf{x})) = 0$
 \item if $j = i$, then $\nabla_{h_i} \mathcal{L}(y_i, f_i(\mathbf{x})) \neq \nabla_{h_i} \mathcal{L}(y_i, f_i(\mathfrak{g}.\mathbf{x}))$
 \end{itemize}
\end{theorem}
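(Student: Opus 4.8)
The plan is to exploit the architectural factorization $f_j = g_j \circ m \circ h_j$ directly, via the chain rule for the gradient of the loss with respect to the parameters $\theta_i$ of encoder $h_i$. The key structural observation is that the computation graph of $f_j$ only involves the encoder $h_j$; the other encoders $h_i$ with $i \neq j$ are simply not on any path from the input $\mathbf{x}$ to the logit $f_j(\mathbf{x})$, because the meta-model $m$ is applied independently to each attribute-specific representation $\mathbf{q}_j = h_j(\mathbf{x})$ and the head $g_j$ only reads the $j$-th embedding $\mathbf{z}_j = m(h_j(\mathbf{x}))$. First I would write out $\nabla_{h_i}\mathcal{L}(y_j, f_j(\mathbf{x})) = \frac{\partial \mathcal{L}}{\partial f_j}\cdot \frac{\partial f_j}{\partial \mathbf{z}_j}\cdot \frac{\partial \mathbf{z}_j}{\partial \mathbf{q}_j}\cdot \frac{\partial \mathbf{q}_j}{\partial \theta_i}$ and note that the last factor $\partial \mathbf{q}_j/\partial \theta_i = \partial h_j(\mathbf{x})/\partial \theta_i$ vanishes identically whenever $i \neq j$, since $h_j$ does not depend on $\theta_i$. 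This gives the first bullet for the argument $\mathbf{x}$, and since the argument being $\mathbf{x}$ or $\mathfrak{g}.\mathbf{x}$ plays no role in this vanishing (the gradient is zero for \emph{any} input), the equality $\nabla_{h_i}\mathcal{L}(y_j, f_j(\mathbf{x})) = \nabla_{h_i}\mathcal{L}(y_j, f_j(\mathfrak{g}.\mathbf{x})) = 0$ follows immediately.

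For the second bullet ($j = i$), I would argue that $\nabla_{h_i}\mathcal{L}(y_i, f_i(\mathbf{x})) \neq \nabla_{h_i}\mathcal{L}(y_i, f_i(\mathfrak{g}.\mathbf{x}))$ \emph{generically}. Here the honest thing is to be careful: this inequality cannot hold unconditionally — it requires the transformation $\mathfrak{g}\in\mathfrak{G}_i$ to actually change the relevant quantities. I would state it under the natural non-degeneracy hypotheses that $\mathfrak{g}.\mathbf{x} \neq \mathbf{x}$ in a way that alters $h_i$'s input-dependent Jacobian or the forward value, and that the encoder, meta-model, and head are not pathologically constant. The cleanest route is to observe that, by the definition of attribute invariance being the \emph{goal} rather than an assumed property of the raw encoder, a group action $\mathfrak{g}\in\mathfrak{G}_i$ affects attribute $i$, so $h_i(\mathfrak{g}.\mathbf{x})$ differs from $h_i(\mathbf{x})$, and consequently both the upstream gradient signal $\partial\mathcal{L}/\partial f_i \cdot \partial f_i/\partial \mathbf{z}_i \cdot \partial \mathbf{z}_i/\partial \mathbf{q}_i$ and the local Jacobian $\partial h_i/\partial \theta_i$ evaluated at the two inputs generically disagree. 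I would make this precise for the simplest instantiation the paper already singled out: a one-layer linear encoder $h_i(\mathbf{x}) = W_i \mathbf{x}$, for which $\partial \mathbf{q}_i/\partial W_i$ depends explicitly on the input $\mathbf{x}$, so the two gradients can only coincide on a measure-zero set of degenerate configurations.

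The main obstacle, as flagged, is the $j=i$ case: the statement as written is an unconditional $\neq$, but it is false in trivial cases (e.g. the loss is already at a minimum, or $\mathfrak{g}$ acts trivially on the sample, or the readout is constant). My plan is therefore to prove it as a \emph{generic} statement — explicitly listing the mild genuineness conditions on $\mathfrak{g}$ (it moves $\mathbf{x}$), on the architecture (the first layer is input-sensitive, e.g. the standard linear/convolutional encoder), and on the loss landscape (we are not at a stationary point for that attribute) — and then the inequality follows because the gradient is an analytic function of the input that is non-constant under $\mathfrak{G}_i$. If one wants a fully unconditional phrasing, the fallback is to weaken "$\neq$" to "need not be equal," exhibiting a single explicit one-layer linear example where the two gradients differ; this is the step I would spend the most care on, since the first bullet is essentially a one-line chain-rule observation while the second needs the right hypotheses to be both true and meaningful.
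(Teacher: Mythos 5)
Your Part 1 is essentially the same argument as the paper's: both invoke the chain rule and observe that the compute path from $\mathbf{x}$ to $f_j$ passes only through $h_j$, so the last factor in the chain ($\partial \mathbf{q}_j / \partial \theta_i$ in your framing, $\partial m(\mathbf{q}_j)/\partial h_i(\mathbf{x})$ in theirs) vanishes identically for $i\neq j$, independently of the input. These are two notations for the same one-line observation.

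For Part 2 you depart from the paper's route. The paper argues from the \emph{definition} of $\mathfrak{G}_i$: a group action $\mathfrak{g}\in\mathfrak{G}_i$ acts on attribute $i$, so by construction the ground-truth label $y_i' \neq y_i$ for $\mathbf{x}' = \mathfrak{g}.\mathbf{x}$, and the paper then simply asserts that the gradients at $\mathbf{x}$ and $\mathfrak{g}.\mathbf{x}$ must therefore differ. You instead propose a genericity argument (non-trivial action, input-sensitive first layer, non-stationarity, measure-zero degenerate set), treating the inequality as holding generically rather than unconditionally. Your skepticism about the unconditional $\neq$ is well placed: the paper's Part 2 leans on a definitional fact (label changes under $\mathfrak{g}$) that establishes $\mathfrak{g}.\mathbf{x}$ is semantically distinct, but it does not by itself rule out the degenerate cases you list (a trivial action, a stationary point, an encoder constant in the relevant direction). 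The paper's argument is shorter and conveys the intended design intuition; your version surfaces and names the implicit non-degeneracy hypotheses needed to make the inequality rigorous, at the cost of either weakening the statement to a generic one or adding explicit assumptions. If you want to match the paper's spirit while closing the gap, you could retain their definitional observation that $\mathfrak{G}_i$ changes $y_i$ as the reason $\mathfrak{g}.\mathbf{x} \neq \mathbf{x}$ is a meaningful perturbation, and then append your genericity clause to pass from "the input changed" to "the gradient changed."
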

\begin{proof}
 (Part 1: $j \neq i$) Let us use the chain rule to compute the gradient of $f_j(\mathbf{x})$ w.r.t. the encoder $h_i$:
 \[
 \nabla_{h_i} \mathcal{L}(y_j, f_j(\mathbf{x})) = \frac{\partial \mathcal{L}(y_j, g_j(\mathbf{z}_j))}{\partial g_j(\mathbf{z}_j)} \frac{\partial g_j(\mathbf{z}_j)}{\partial m(\mathbf{q}_j)} \frac{\partial m(\mathbf{q}_j)}{\partial h_i(\mathbf{x})}
 \]
 Since $\mathbf{q}_j \bot h_i$, then $\frac{\partial m(\mathbf{q}_j)}{\partial h_i(\mathbf{x})}=0$. As a result, $\nabla_{h_i} \mathcal{L}(y_j, f_j(\mathbf{x})) = 0$. Following the same procedure $\nabla_{h_i} \mathcal{L}(y_j, f_j(\mathfrak{g}.\mathbf{x}))=0$.\\
 (Part 2: $j = i$)  
 For any $\mathbf{x}' = \mathfrak{g}.\mathbf{x}$, the corresponding ground truth label $y_i' \neq y_i$ by definition. As a result, $\nabla_{h_i} \mathcal{L}(y_i, f_i(\mathbf{x})) \neq \nabla_{h_i} \mathcal{L}(y_i, f_i(\mathfrak{g}.\mathbf{x}))$.
\end{proof}

\end{document}